\newtheorem*{rep@theorem}{\rep@title}
\newcommand{\newreptheorem}[2]{%
\newenvironment{rep#1}[1]{%
 \def\rep@title{#2 \ref{##1}}%
 \begin{rep@theorem}}%
 {\end{rep@theorem}}}
\newtheorem*{D1}{Definition~\ref{def:D1}}
\newtheorem{thm}{Theorem}
\newtheorem{rmk}{Remark}
\newtheorem{lma}{Lemma}
\newtheorem{prop}{Proposition}
\newtheorem{definition}{Definition}
\DeclareMathOperator*{\argmin}{arg\,min}
\newcommand{\norm}[1]{\left\lVert#1\right\rVert}
\numberwithin{equation}{section}
\begin{document}

\twocolumn[

\aistatstitle{Low-Pass Filtering SGD for Recovering Flat Optima in the Deep Learning Optimization Landscape}

\aistatsauthor{Devansh Bisla \And Jing Wang \And  Anna Choromanska }

\aistatsaddress{ \texttt{db3484@nyu.edu} \And  \texttt{jw5665@nyu.edu} \And \texttt{ac5455@nyu.edu} } ]

\begin{abstract}
In this paper, we study the sharpness of a deep learning (DL) loss landscape around local minima in order to reveal systematic mechanisms underlying the generalization abilities of DL models. Our analysis is performed across varying network and optimizer hyper-parameters, and involves a rich family of different sharpness measures. We compare these measures and show that the low-pass filter based measure exhibits the highest correlation with the generalization abilities of DL models, has high robustness to both data and label noise, and furthermore can track the double descent behavior for neural networks. We next derive the optimization algorithm, relying on the low-pass filter (LPF), that actively searches the flat regions in the DL optimization landscape using SGD-like procedure. The update of the proposed algorithm, that we call LPF-SGD, is determined by the gradient of the convolution of the filter kernel with the loss function and can be efficiently computed using MC sampling. We empirically show that our algorithm achieves superior generalization performance compared to the common DL training strategies. On the theoretical front we prove that LPF-SGD converges to a better optimal point with smaller generalization error than SGD.
\end{abstract}

\section{INTRODUCTION}
\label{sec:introduction}

Training a deep network requires finding network parameters that minimize the loss function that is defined as the sum of discrepancies between target data labels and their estimates obtained by the network. This sum is computed over the entire training data set. Due to the multi-layer structure of the network and non-linear nature of the network activation functions, DL loss function is a non-convex function of network parameters. Increasing the number of training data points complicates this function since it increases the number of its summands. Increasing the number of parameters (by adding more layers or expanding the existing ones) increases the complexity of each summand and results in the growth, which can be exponential~\citep{WNonConv}, of the number of critical points of the loss function. A typical use case for DL involves massive (i.e., high-dimensional and/or large) data sets and very large networks (i.e., with billions of parameters), which results in an optimization problem that is heavily non-convex and difficult to analyze.

In order to design efficient optimization algorithms for DL we need to understand which regions of the DL loss landscape lead to good generalization and how to reach them. Existing work ~\citep{feng2020neural,DBLP:journals/corr/ChaudhariCSL17,doi:10.1162/neco.1997.9.1.1,JKABFBS2018ICLRW,DBLP:journals/corr/abs-1805-07898,keskar2016largebatch,DBLP:journals/corr/SagunEGDB17,DBLP:journals/corr/abs-1901-06053,jiang2019fantastic,DBLP:conf/uai/DziugaiteR17,DBLP:journals/corr/abs-1712-05055,61aa9e9cc965421e82d7b7042c61abc8,NIPS2017_7176} shows that properly regularized SGD recovers solutions that generalize well and provides an evidence that these solutions lie in wide valleys of the landscape. In the prior studies~\citep{SAM,DBLP:journals/corr/ChaudhariCSL17}, it was demonstrated that the spectrum of the Hessian at a well-generalizing local minimum of the loss function is often almost flat, i.e., the majority of eigenvalues are close to zero. An intuitive illustration of why flatness potentially leads to better generalization is presented in Figure~\ref{fig:flat}. However, existing studies focus on simplified DL frameworks, small data sets, and/or limited optimization settings and thus are inconclusive. There exists no \textit{comprehensive study} confirming that good generalization abilities of the DL model correlate with the properties of the loss landscape around recovered solutions and some research argues against the existence of this relationship~\citep{SB2017}. The lack of more comprehensive approaches has kept the field from moving away from purely generic DL training methodologies, which are not equipped with any mechanisms allowing them to take advantage of the properties of the non-convex loss landscape underlying the DL optimization problem. 

\begin{figure}
    \centering
    \includegraphics[width=0.3\textwidth]{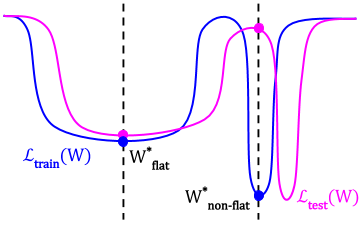}
    \vspace{-0.15in}
    \caption{Consider the train and test loss ($\mathcal{L}_{\text{train}}(W)$ and $\mathcal{L}_{\text{test}}(W)$ resp, where $W$ are model parameters), which have similar shape but are shifted with respect to each other. The local minimum that lies in the flat region ($W^{*}_{\text{flat}}$) admits a similar value of the train and test loss (generalizes well), despite the shift. The local minimum that lies in the narrow region of the landscape ($W^{*}_{\text{non-flat}}$) admits a significantly larger value of the test loss compared to the train loss (generalizes very poorly), due to the shift between them.}
    \label{fig:flat}
    \vspace{-0.31in}
\end{figure}

 In this paper we first empirically confirm that the sharpness of a DL loss landscape around local minima indeed correlates with the generalization abilities of a DL model. In our work we consider network architectures with and without skip connections and batch norm, and with varying widths, and DL optimizer with varying settings of the hyper-parameters, such as the batch size, learning rate, momentum coefficient, and weight decay regularization. To the best of our knowledge, our study is the most comprehensive research study to date of the connection between DL landscape geometry and the generalization abilities of DL models. We then perform comparative studies of different existing sharpness measures in order to identify the most promising one -- the LPF based measure -- that best promotes good performance of the model and that can be efficiently computed. The identified measure is robust to the presence of the data and label noise and furthermore mirrors the double descent behavior of the test accuracy~\citep{Nakkiran2020Deep, belkin2019reconciling}. We incorporate the LPF based sharpness measure to SGD in order to guide the optimization process towards the well-generalizing regions of the loss landscape, and we show that the resulting algorithm is more accurate than the common baselines, including recently proposed state-of-the-art method SAM~\citep{SAM}. Finally, we provide convergence guarantee showing that LPF-SGD recovers a better optimal point with smaller generalization error than SGD.
 
 \textit{What is not new in this paper?} The evidence that well-generalizing solutions lie in wide valleys of the DL loss landscape is not new as well as investigated flatness measures.
 
 \textit{What is new in this paper?} The contributions of our work compared to prior works include: a) the comprehensiveness of the study, b) the direct comparison of several flatness measures and their levels of correlation to the generalization abilities of the model, c) the novel algorithm superior to SGD and prior flatness-aware techniques, and d) its theoretical and empirical analysis.
 
 This paper is organized as follows: Section~\ref{sec:related_work} reviews the literature, Section~\ref{sec:sharp_v_gen} studies the sharpness and generalization properties of the local minima of the DL loss function, Section~\ref{sec:LPF-SGD} introduces the LPF-SGD algorithm that is analyzed in Section~\ref{sec:theory}, and Section~\ref{sec:experiments} shows experiments. Finally, Section~\ref{sec:con} concludes the paper. Supplement contains additional material.

\vspace{-0.1in}
\section{RELATED WORK}\label{sec:related_work}
\vspace{-0.1in}

The loss function in DL is typically optimized using standard or variant forms of SGD~\citep{bottou-98x}, which iteratively updates parameters by taking steps in the anti-gradient direction of the loss function computed for randomly sampled data mini-batches. Various optimization approaches, including adaptive learning rates~\citep{duchi2011adaptive,DBLP:journals/corr/KingmaB14,DBLP:journals/corr/abs-1212-5701,DBLP:journals/corr/abs-1902-09843,DBLP:journals/corr/abs-1804-04235,DBLP:journals/corr/abs-1905-11286}, momentum-based strategies~\citep{journals/nn/Qian99,Nesterov2005SmoothMO,tieleman2012lecture,hardt2016train}, approximate steepest descent methods~\citep{NIPS2015_5797}, continuation methods~\citep{allgower2012numerical,mobahi2015link,gulcehre2016mollifying}, alternatives to standard backpropagation algorithm based on alternating minimization~\citep{carreira2014distributed,zhang2017convergent,Zhang2017,Askari2018,zeng2018block,lau2018proximal,gotmare2018decoupling,DBLP:conf/icml/ChoromanskaCKLR19,taylor2016training,zhang2016efficient} or TargetProp~\citep{le1986learning,yann1987modeles,lecun1988theoretical,lee2015difference,bartunov2018assessing}, as well as architectural modifications~\citep{srivastava2014dropout,ioffe2015batch,cooijmans2016recurrent,salimans2016weight,NIPS2015_5953,Cowen2019}, were proposed to accelerate the training process and improve the generalization performance of the DL models. However there exists no strong evidence that these techniques indeed improve the training of DL models. Moreover, it was shown that well-tuned vanilla SGD often can achieve superior performance compared to mentioned modified techniques~\citep{conf/nips/WilsonRSSR17}. In addition, some prior work~\citep{DBLP:journals/corr/GoodfellowV14,sutskever2013importance} emphasizes the sensitivity of aforementioned optimizers to initial conditions and learning rates. Finally, theoretical approaches to DL optimization have focused on the convergence properties of vanilla SGD and consider only shallow models~\citep{pmlr-v70-brutzkus17a,NIPS2017_6662,DBLP:journals/corr/ZhongS0BD17,Tian2017AnAF,DBLP:journals/corr/abs-1710-10174,Li2018AlgorithmicRI} or deep, but linear ones~\citep{DBLP:journals/corr/abs-1810-02281}. These results do not hold for DL models used in practice. 

The shape of the DL loss function is poorly understood. Consequently, the aforementioned generic, rather than landscape-adaptive, optimization strategies are commonly chosen to train DL models, despite their potentially poor fit to the underlying non-convex DL optimization problem. Most of what we know about the DL loss landscape is either based on unrealistic assumptions and/or holds only for shallow (two-layer) networks. The existing literature emphasizes i) the proliferation of saddle points~\citep{NIPS2014_5486,Baldi:1989:NNP:70359.70362,DBLP:journals/corr/SaxeMG13} (including degenerate or hard to escape ones~\citep{pmlr-v40-Ge15,anandkumar2016efficient,NIPS2014_5486}), ii) the equivalency of some local minima~\citep{chaudhari2015trivializing,haeffele2015global,janzamin2015beating,NIPS2016_6112,soudry2016no,JoanTopology,Nguyen2017TheLS,DBLP:journals/corr/abs-1712-04741,haeffele2015global,pmlr-v80-du18a,Safran:2016:QIB:3045390.3045473,DBLP:journals/corr/abs-1712-08968,DBLP:journals/corr/HardtM16,pmlr-v70-ge17a,DBLP:journals/corr/YunSJ17,pmlr-v80-draxler18a,JoanTopology,JoanPS,DBLP:conf/aistats/ChoromanskaHMAL15, DBLP:conf/colt/ChoromanskaLA15}, iii) the existence of a large number of isolated minima and few dense regions with lots of minima close to each other~\citep{baldassi2015subdominant,baldassi2016unreasonable,baldassi2016multilevel} (shown for shallow networks), and iv) the existence of global and local minima yielding models with different generalization performance~\citep{keskar2016largebatch}. Furthermore, it was demonstrated that becoming stuck in poor minima is a major problem only for smaller networks~\citep{DBLP:journals/corr/GoodfellowV14}. 

There exists only a few approaches that aim at adapting the DL optimization strategy to the properties of the loss landscape and encouraging the recovery of flat optima. They can be summarized as i) regularization methods that regularize gradient descent strategy with sharpness measures such as the Minimum Description Length~\citep{doi:10.1162/neco.1997.9.1.1}, local entropy~\citep{DBLP:journals/corr/ChaudhariCSL17}, or variants of $\epsilon$-sharpness~\citep{SAM,keskar2016largebatch}, ii) surrogate methods that evolve the objective function according to the diffusion equation~\citep{DBLP:journals/corr/Mobahi16}, iii) averaging strategies that average model weights across training epochs~\citep{61aa9e9cc965421e82d7b7042c61abc8, cha2021swad}, and iv) smoothing strategies that smoothens the loss landscape by introducing noise in the model weights and average model parameters across multiple workers run in parallel~\citep{DBLP:journals/corr/abs-1805-07898, haruki2019gradient, lin2020extrapolation} (such methods focus on distributed training of DL models with an extremely large batch size - a setting where SGD and its variants struggle).

\vspace{-0.1in}
\section{INTERPLAY BETWEEN SHARPNESS AND GENERALIZATION}
\label{sec:sharp_v_gen}
\vspace{-0.1in}

In this section we study the correlation between network generalization and the sharpness of the local minima of the DL loss function. Without loss of generality we consider balanced networks~\citep{NIPS2015_5797}, where the norms of incoming weights to different units are roughly the same (for details see Supplement, Section~\ref{sec:bal}). This assumption is necessary since the properties of the loss landscape of two equivalent (realizing the same function) imbalanced networks can be radically different (e.g., their landscapes can be stretched along different directions), which would lead to inconsistencies in the analysis. The importance of balancing the network is overlooked in existing studies.

\textbf{All codes for experiments presented in this section as well as Section~\ref{sec:experiments} are available at} \def\UrlFont{\bfseries}\url{https://github.com/devansh20la/LPF-SGD}.
\def\UrlFont{\normalseries}

We consider a broad family of different measures of sharpness: $\epsilon$-sharpness~\citep{keskar2016largebatch}, PAC-Bayes measure~\citep{jiang2019fantastic} ($\mu_{PAC-Bayes}$), Fisher Rao Norm~\citep{liang2019fisher} (FRN), gradient of the local entropy~\citep{DBLP:journals/corr/ChaudhariCSL17} ($\mu_{\text{LE}}$), Shannon entropy~\citep{pereyra2017regularizing} ($\mu_{entropy}$), Hessian-based measures~\citep{maddox2020rethinking, mackay1992bayesian} (the Frobenius norm of the Hessian ($||H||_{F}$), trace of the Hessian (Trace ($H$), the largest eigenvalue of the Hessian ($\lambda_{max}(H))$, and the effective dimensionality of the Hessian), and the low-pass filter based measure (LPF). Decreasing the value of any of these measures leads to flatter optima. The definitions of these measures as well as the algorithms numerically computing them are provided in the Supplement (Section~\ref{sec:FM}). Similarly, the Supplement (Section~\ref{sec:Sensitivity}) contains the analysis of the sensitivity of these measures to the changes in the curvature of the synthetically generated landscapes (i.e., landscapes, where the curvature of the loss function is known). Below we only provide the definition of the LPF measure as this one constitutes the foundation of our algorithm. 

\begin{definition}[LPF]\label{def:lpf}
Let $K\sim \mathcal{N}(0,\sigma^2 I)$ be a kernel of a Gaussian filter. LPF based sharpness measure at solution $\theta^*$ is defined as the convolution of the loss function with the Gaussian filter computed at $\theta^*$:

\vspace{-0.3in}
\begin{align}
    \label{eq:sharp_meas_9}
    (L \circledast K) (\theta^*) &= \int L(\theta^*-\tau) K(\tau) d\tau.
\end{align}
\vspace{-0.3in}
\end{definition}

Next, we aim at revealing whether there exists any correlation between sharpness of local minima and the performance of the deep networks and, if so, which flatness measure exhibits the highest correlation with the model generalization.

\vspace{-0.1in}
\subsection{Sharpness versus generalization}\label{sec:sharpness_vs_generalization}
\vspace{-0.05in}

\begin{table}
\centering
    \begin{tabular}{l|l}
        \hline
        Hyper-parameter & Settings  \\ \hline
        Momentum & [0.0, 0.5, 0.9] \\
        Width & [32, 48, 64] \\
        Weight decay & [$0.0$, $1e^{-4}$, $5e^{-4}$] \\
        Learning rate & [$5e^{-3}$, $7.5e^{-3}$, $1e^{-2}$] \\
        Batch size & [32, 128, 512] \\
        Skip connection & [False, True] \\
        Batch norm & [False, True] \\\hline
    \end{tabular}
    \vspace{-0.12in}
    \caption{Choices of hyper-parameters. }
    \label{tab:hyper_resnet}
    \vspace{-0.25in}
\end{table}

We train $2916$ ResNet18~\citep{he2016deep} models on publicly available CIFAR-10~\citep{CIFAR10} data set by varying network architecture (width, skip connections (skip), and batch norm (bn)), optimizer hyper-parameters (momentum coefficient(mom), learning rate (lr), batch size (batch)), and regularization (weight decay (wd)) according to the Table~\ref{tab:hyper_resnet}. For each set of hyper-parameters, we train three models with different seeds. We follow \citep{dziugaite2020search, jiang2019fantastic} in terms of stopping criteria, as we focus our analysis on practical models, and train each model until the cross entropy loss reached the value of $\approx 0.01$, which corresponds to $\approx 99\%$ training accuracy. Any model that has not reached this threshold was discarded from further analysis. Note that the other stopping criteria, for example these based on \#epochs/\#iterations, lead to under/over trained models as some models train faster than others. More details of training can be found in the Supplement (Section~\ref{sec:training_details_sharp_vs_gen}). At convergence, when the model approaches the final weights, we compute the Kendall rank correlation coefficient between generalization gap and sharpness measures (averaged over the seeds). Table~\ref{tab:gen_corr} captures the results along with $95\%$ confidence interval. Note that Table~\ref{tab:gen_corr} is extremely compute intensive ($\approx\!\!9000$ GPU hours), which makes it prohibitive to obtain similar results across multiple data sets and models. In Figure~\ref{fig:gen_corr} we show a scatter plot of normalized LPF based measure (with the highest overall correlation) and normalized LE based measure (with the lowest overall correlation) with the corresponding generalization gap. Note that our analysis includes models with wide-ranging generalization gap.

\begin{figure}[!ht]
    \vspace{-0.15in}
    \centering
    \includegraphics[width=0.24\textwidth]{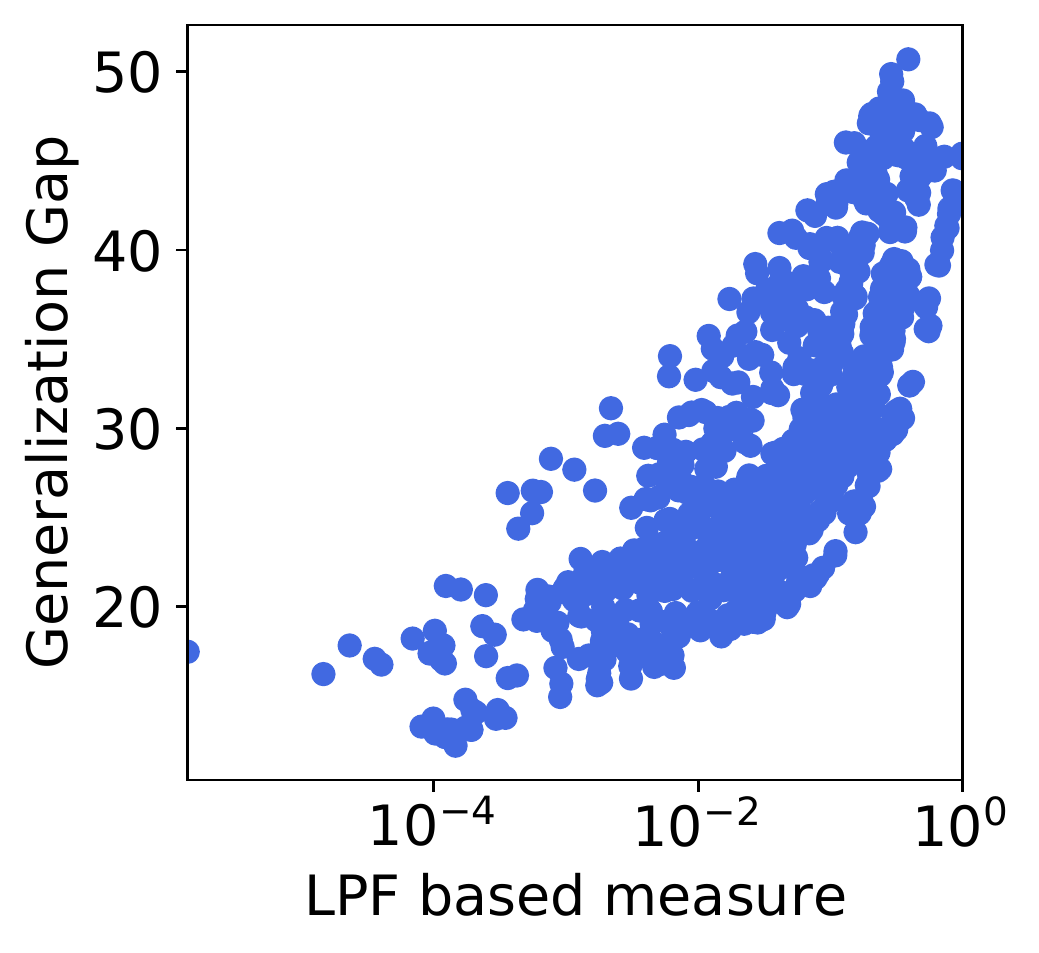}
    \hspace{-0.05in}\includegraphics[width=0.24\textwidth]{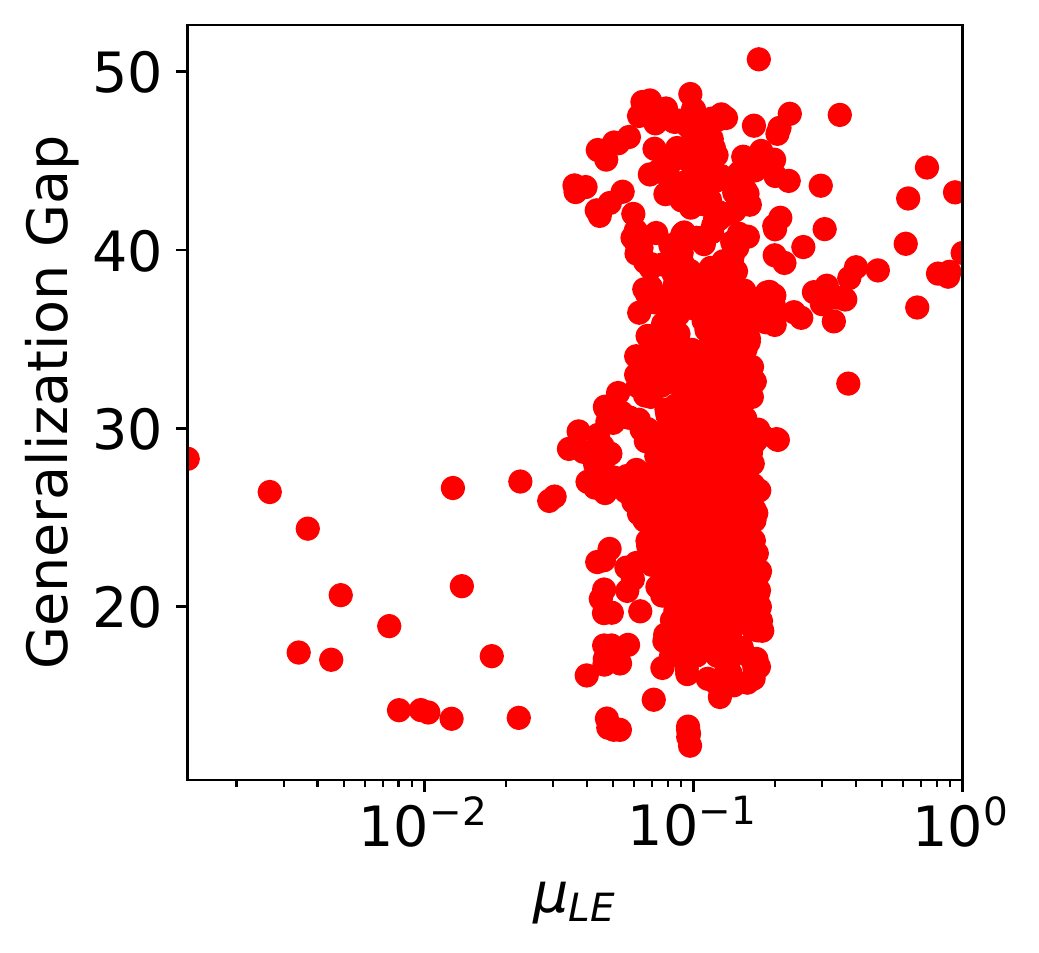}
    \vspace{-0.35in}
    \caption{Normalized LPF based measure (with the highest overall correlation) (left) and normalized LE based measure (with the lowest overall correlation) (right) versus generalization gap.}
    \label{fig:gen_corr}
    \vspace{-0.1in}
\end{figure}

Table~\ref{tab:gen_corr} and Figure~\ref{fig:gen_corr} clearly reveal that there exist sharpness measures for which one can observe the correlation between the generalization gap and the sharpness of the local minima (Kendall rank correlation coefficient above $0.35$ is considered to indicate strong correlation~\citep{CC}). We observe that LPF based measure outperforms other sharpness measures in terms of tracking the generalization behavior of the network when we vary weight decay, batch size and skip connections. It also shows strong positive correlation for both momentum and learning rate. It is however negatively correlated with generalization gap when varying the width of the network and batch normalization. Note that when varying batch normalization we observe only a weak correlation of all flatness measures with the generalization gap. In terms of varying the width, we observe in certain cases, including LPF based measure, strong negative correlation. This can be intuitively explained by the fact that network with higher capacity (higher width) and sharp minima can still outperform smaller network, even if its landscape is more flat. Finally, note that the table reporting the Kendall rank correlation coefficient between hyper-parameters (columns) and various sharpness measures (rows) (see Table~\ref{tab:sharp_v_hyperparam}) is deferred to the Supplement (Section~\ref{sec:correxp}). This table provides guidelines on how to design or modify deep architectures to populate the DL loss landscape with flat minima - this research direction lies beyond the scope of this work and will be investigated in the future.
\begin{table*}[t]
    \centering
    \setlength\tabcolsep{0.1pt}
    \renewcommand{\arraystretch}{1.0}
    \begin{tabular}{|p{2cm}|c|c|c|c|c|c|c|c|}
        \hline
        Measure                & mom & width & wd & lr & batch & skip & bn & Overall \\ \hline
        $\lambda_{\max}(H)$ & $0.882_{\pm0.03}$ & $0.070_{\pm0.07}$ & $0.299_{\pm0.07}$ & $0.598_{\pm0.06}$ & $0.975_{\pm0.01}$ & $-0.148_{\pm0.09}$ & $-0.089_{\pm0.09}$ & $0.415_{\pm0.00}$   \\
        $\|H\|_F$ & $0.925_{\pm0.02}$ & $0.004_{\pm0.07}$ & $0.304_{\pm0.07}$ & $0.763_{\pm0.05}$ & $0.992_{\pm0.01}$ & $0.077_{\pm0.09}$ & $-0.094_{\pm0.09}$ & $0.481_{\pm0.04}$  \\
        Trace (H) & $0.938_{\pm0.02}$ & $0.158_{\pm0.08}$ & $0.328_{\pm0.07}$ & $0.665_{\pm0.05}$ & $0.981_{\pm0.01}$ & $0.229_{\pm0.09}$ & $-0.081_{\pm0.09}$ & $0.538_{\pm0.00}$ \\
        $d_{eff}$ & $0.352_{\pm0.07}$ & $0.080_{\pm0.07}$ & $0.188_{\pm0.07}$ & $0.145_{\pm0.07}$ & $0.342_{\pm0.07}$ & $0.195_{\pm0.09}$ & $\mathbf{0.043_{\pm0.09}}$ & $0.326_{\pm0.00}$ \\
        $\epsilon$-sharpness & $0.776_{\pm0.04}$ & $-0.130_{\pm0.07}$ & $0.304_{\pm0.07}$ & $0.684_{\pm0.05}$ & $0.961_{\pm0.02}$ & $-0.239_{\pm0.09}$ & $-0.098_{\pm0.09}$ & $0.383_{\pm0.02}$  \\ 
        $\mu_{PAC-Bayes}$ & $\mathbf{0.994_{\pm0.01}}$ & $-0.836_{\pm0.04}$ & $0.408_{\pm0.07}$ & $\mathbf{0.900_{\pm0.03}}$ & $0.994_{\pm0.01}$ & $0.389_{\pm0.08}$ & $-0.094_{\pm0.09}$ & $0.466_{\pm0.03}$ \\
        FRN & $0.812_{\pm0.04}$ & $\mathbf{0.238_{\pm0.07}}$ & $0.161_{\pm0.07}$ & $0.495_{\pm0.06}$ & $0.849_{\pm0.04}$ & $-0.073_{\pm0.09}$ & $-0.098_{\pm0.09}$ & $0.270_{\pm0.02}$   \\ 
        $\mu_{entropy}$ & $0.718_{\pm0.05}$ & $0.182_{\pm0.07}$ & $-0.072_{\pm0.07}$ & $0.327_{\pm0.07}$ & $0.716_{\pm0.05}$ & $-0.326_{\pm0.08}$ & $-0.123_{\pm0.09}$ & $0.296_{\pm0.00}$  \\ 
        $\mu_{LE}$ & $0.160_{\pm0.08}$ & $-0.817_{\pm0.04}$ & $0.138_{\pm0.07}$ & $0.106_{\pm0.07}$ & $0.115_{\pm0.07}$ & $-0.050_{\pm0.09}$ & $-0.076_{\pm0.09}$ & $-0.064_{\pm0.04}$  \\ 
        % \hline
        LPF & $0.990_{\pm0.01}$ & $-0.762_{\pm0.05}$ & $\mathbf{0.428_{\pm0.07}}$ & $0.867_{\pm0.04}$ & $\mathbf{0.996_{\pm0.01}}$ & $\mathbf{0.655_{\pm0.07}}$ & $-0.085_{\pm0.09}$ & $\mathbf{0.606_{\pm 0.03}}$  \\ \hline
    \end{tabular}
    \vspace{-0.1in}
    \caption{Kendall rank correlation coefficient between generalization gap and sharpness measure (rows) averaged over set of models trained by varying only a single hyper-parameter (cols), along with $95\%$ confidence interval. Larger value indicates better correlation.}
    \label{tab:gen_corr}
    \vspace{-0.2in}
\end{table*}

\begin{figure*}[!ht]
    \centering
    \includegraphics[width=0.95\textwidth]{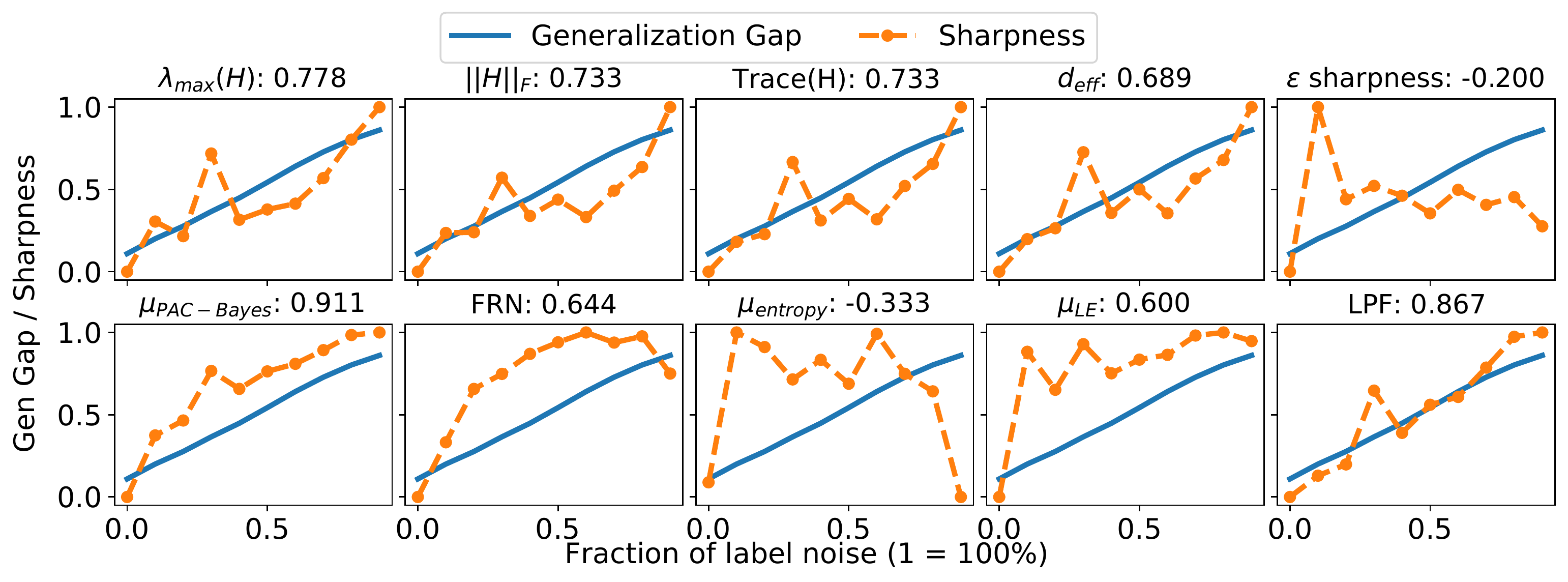}    
    \vspace{-0.15in}
    \caption{Normalized sharpness measures and generalization gap for varying levels of label noise. Kendall rank correlation coefficient between generalization gap and sharpness is provided in the titles above the figures.}
    \label{fig:noise_corr2}
    \vspace{-0.2in}
\end{figure*}

Next we will investigate how robust different flatness measures are to the data and label noise, and whether they track the double descent phenomenon.

\vspace{-0.1in}
\subsection{Sharpness versus generalization under data and label noise}
\label{sec:sharpness_vs_generalization_under_noise}
\vspace{-0.1in}

Large neural networks can easily achieve zero training error~\citep{zhang2016understanding} in the presence of data or label noise while generalizing very poorly. In this section, we evaluate the capability of sharpness measures to explain generalization performance of deep networks in presence of such noise. For experiments with noisy data, we corrupt each input image from publicly available CIFAR-10 data set with Gaussian noise with $0$ mean and $\sigma I$ standard deviation and vary $\sigma$ in the range $[0,4]$. For experiments with noisy labels, we flip the true label of each input sample with probability $\alpha$ and vary $\alpha$ in the range $[0,1]$. The flipped label is chosen uniformly at random from the set of all classes, excluding the true class. We train $10$ ResNet18 models with different levels of label noise and $20$ ResNet18 models with different levels of data noise. Each experiment was repeated $5$ times using different seeds. The experimental details are moved to the Supplement (Section~\ref{sec:noiseexp}).

In Figure~\ref{fig:noise_corr2} we plot the values of normalized sharpness measures and generalization gap for varying level of label noise (experiments for data noise are in Figure~\ref{fig:noise_corr}, Section~\ref{sec:noiseexp} in the Supplement). We also report the Kendall rank correlation coefficient between sharpness measures and generalization gap (averaged over the seeds). We observe that LPF based measure is robust to both the data and label noise (it is the second best measure) and aligns well with the generalization behavior of the network. Finally, Section~\ref{sec:sharpness_vs_generalization_under_double_decent} in the Supplement shows that LPF-SGD based measure tracks the double descent phenomenon most closely from among all the considered sharpness measures.

\vspace{-0.1in}
\section{LPF-SGD}
\label{sec:LPF-SGD}
\vspace{-0.1in}

Based on the results from the previous section, we identify LPF based sharpness measure as the most suitable one to incorporate into the DL optimization strategy to actively search for the flat regions in the DL loss landscape. Guiding the optimization process towards the well-generalizing flat regions of the DL loss landscape using LPF based measure can be done by solving the following optimization problem

\vspace{-0.3in}
\begin{equation}
    \underset{\theta}{\min} (L \circledast K) (\theta) = \underset{\theta}{\min}\int L(\theta - \tau)K(\tau)d\tau,
    \label{eq:main}
\end{equation}
\vspace{-0.25in}

where $K$ is a Gaussian kernel and $L(\theta)$ is the training loss function. We solve the problem given in Equation~\ref{eq:main} using SGD. The gradient of the convolution between the loss function and the Gaussian kernel is

\vspace{-0.3in}
\begin{align}
     \nabla_{\theta}(L \circledast K) (\theta) =& \nabla_{\theta} \int_{-\infty}^{\infty} L(\theta - \tau)K(\tau)d\tau \label{eq:main2} \\ 
     &\approx \frac{1}{M}\sum_{i=0}^{M} \nabla_{\theta}(L(\theta - \tau_{i})), 
     \label{eq:main3}
\end{align}
\vspace{-0.25in}

where $\tau_i$ comes from the same distribution as entries in the filter kernel $K$, and the integral is approximated using the MC method. Thus, to recap, we smooth the loss surface by performing a convolution between the loss function and the Gaussian low pass filter. The multiplicative term in Equation~\ref{eq:main} and Equation~\ref{eq:main2} is the Gaussian probability density function. The action of the convolution is realized by sampling from the filter kernel and accumulating results (Equation~\ref{eq:main3}). The resulting algorithm, that we call LPF-SGD, is in fact extremely simple and its pseudo-code is provided in Algorithm~\ref{alg:lpf-sgd} (for the ease of notation, $\mathcal{N}(0,\gamma\Sigma)$ in the algorithm stands for a sample from a Gaussian distribution). The balancing of model weights in our algorithm is incorporated into the construction of the LPF kernel $K$ (thus there is no need to explicitly balance the network). To be more specific, $K \sim \mathcal{N}(0,\gamma\Sigma)$ and we set the matrix $\Sigma$ to be proportional to the norm of the parameters in each filter of the network, i.e., we set $\Sigma = diag(||\theta_{1}^{t}||, ||\theta_{2}^{t}|| \cdots ||\theta_{k}^{t}||)$, where $\theta_{k}^{t}$ is the weight matrix of the $k^{th}$ filter at iteration $t$ and $\gamma$ is the LPF radius. 

\begin{rmk}
For imbalanced networks, the norm of the weights in one filter can be much larger than the norm of the weights in another filter (Figure~\ref{fig:norm}, Supplement). In such cases isotropic covariance would lead to large perturbations in one filter while only minor in the other one. The parameter-dependent strategy (anisotropic, as in LPF-SGD) is equivalent to using isotropic covariance on a balanced network. We compare isotropic vs anistropic co-variance matrix in section ~\ref{sec:AS} of the Supplement and show that anisotropic approach is superior (Table~\ref{tab:iso_aniso}) in terms of final performance.
\end{rmk}

Finally, we increase $\gamma$ during network training to progressively increase the area of the loss surface that is explored at each gradient update. This is done according to the following rule:

\vspace{-0.2in}
\begin{equation}
    \gamma_{t} = \gamma_{0}(\alpha/2 (-\cos(t \pi/T) + 1) + 1),
    \label{eq:gamma_policy}
\end{equation}
\vspace{-0.28in}

where $T$ is total number of iterations (epochs * gradient updates per epoch) and $\alpha$ is set such that $\gamma_{T} = (\alpha+1)*\gamma_0$. This policy can be justified by the fact that the more you progress with the training, the more you care to recover flat regions in the loss landscape. 

\setlength{\intextsep}{0pt} 
\setlength{\textfloatsep}{0.1cm}
\setlength{\floatsep}{0cm}
\begin{algorithm}
    \centering
    \caption{LPF-SGD}\label{alg:lpf-sgd}
    \begin{algorithmic}
    \STATE \textbf{Inputs:} $\theta^{t}$: weights \\ 
    \STATE \textbf{Hyperpar:} $\gamma$: filter radius, $M$: $\#$ MC iterations \\
    \WHILE{not converged}
        \STATE Sample data batch $B = (x_{1}, y_{1} ) \cdots (x_{b}, y_{b})$ \\
        \STATE Split batch into M splits $B = \{ B_{1} \cdots B_{M} \}$ \\
        \STATE $g \leftarrow 0$ \\
        \FOR{i=1 to M}
            \STATE $\Sigma = diag(||\theta_{i}^{t}||_{i=1}^{k})$
            \STATE $g = g + \frac{1}{M}\nabla_{\theta}L(B_{i}, \theta^{t} + \mathcal{N}(0,\gamma\Sigma))$
        \ENDFOR
        \STATE $\theta^{t+1} = \theta^{t} - \eta*g$ // Update weights
    \ENDWHILE
    \end{algorithmic}
\end{algorithm}

%\vspace{-0.1in}
\section{THEORY}
\label{sec:theory}
\vspace{-0.1in}

In this section we theoretically analyze LPF-SGD and show that it converges to the optimal point with smaller generalization gap than SGD. All proofs are deferred to the Supplement (Section~\ref{sec:proofs}). Existing work~\citep{bousquet2001algorithmic,bousquet2002stability,hardt2016train,bassily2020stability,DBLP:journals/corr/abs-1809-04564} shows that the generalization error is highly correlated with the Lipschitz continuous and smoothing properties of the objective function. Inspired by the analysis in~\citep{lakshmanan2008decentralized,duchi2012randomized}, we first formally confirm that indeed Gaussian LPF leads to a smoother objective function and then show that SGD run on this smoother function recovers solution with smaller generalization error than in case of the original objective. We use classical approach to analyze the generalization performance and consider standard definition of generalization error that measures how far the empirical loss is from the true loss (for LPF-SGD, the true loss is the smoothed original loss).

Below we analyze the case when $\Sigma=\sigma^2 I$ and defer the analysis for non-scalar $\Sigma$ (i.e., $\Sigma = \gamma diag(||\theta_{1}||, ||\theta_{2}|| \cdots ||\theta_{k}||)$) to the Supplement (Section~\ref{sec:proofs}). For the purpose of our analysis we assume $\Sigma$ is kept fixed during the duration of the algorithm. 

\vspace{-0.1in}
\subsection{Properties of Gaussian LPF}\label{subsec:gau}
\vspace{-0.05in}
% Before moving onto the proof, we first introducing the following notations that will be used in this section. Recall the definition of $l(\theta;\xi)$ in section \ref{sec:meas}: $l(\theta;\xi)$ denote the loss of the model described by $\theta$ on a specific example $\xi=(x, y)$. Assume K is the Gaussain low-path filter with distribution $\mu= \mathcal{N}(0,\sigma^2I)$. Define the convolution of $l(\theta;\xi)$ with the Gaussian LPF K as
Let $\mathcal{S}=\{\xi_1,\cdots,\xi_m\}$ denote the set of $m$ samples drawn i.i.d. from an unknown distribution $\mathcal{D}$. Let $l_o(\theta;\xi)$ denote the loss of the model parametrized by $\theta$ for a specific example $\xi$. Assume $K \sim \mathcal{N}(0,\sigma^2 I)$. Denote the distribution $\mathcal{N}(0,\sigma^2 I)$ as $\mu$. 
Define the convolution of $l_o(\theta;\xi)$ with the Gaussian kernel K as

\vspace{-0.3in}
\begin{align}
    l_{\mu}(\theta;\xi)&=(l_o(\cdot;\xi)\circledast K)(\theta)=\int_{\mathbb{R}^d}l_o(\theta-\tau;\xi)\mu(\tau)d\tau \nonumber \\ 
    &=\mathbb{E}_{Z\sim\mu}[l_o(\theta+Z;\xi)]
    \label{eq:f_mu}
\end{align}
\vspace{-0.35in}

where $Z$ is a random variable satisfying distribution $\mu$. 
The loss function smoothed by the Gaussian LPF, that we denote as $l_{\mu}$, satisfies the following theorem. 
\begin{thm}
\label{thm:gau}
Let $\mu$ be the $\mathcal{N}(0,\sigma^2I_{d\times d})$ distribution. Assume the differentiable loss function $l_o(\theta;\xi):\mathbb{R}^d\rightarrow \mathbb{R}$ is $\alpha$-Lipschitz continuous and $\beta$-smooth with respect to $l_2$-norm. The smoothed loss function $l_\mu(\theta;\xi)$ is defined as (\ref{eq:f_mu}). Then the following properties hold:
\vspace{-0.18in}
\begin{itemize}
  \item[i)]$l_\mu$ is $\alpha$-Lipschitz continuous.
  \vspace{-0.1in}
  \item[ii)]$l_\mu$ is continuously differentiable; moreover, its gradient is $\min\{\frac{\alpha}{\sigma},\beta\}$-Lipschitz continuous, i.e., $f_\mu$ is $\min\{\frac{\alpha}{\sigma},\beta\}$-smooth.
  \vspace{-0.1in}
  \item[iii)] If $l_o$ is convex, $l_o(\theta;\xi)\leq l_\mu(\theta;\xi)\leq l_o(\theta;\xi)+\alpha\sigma\sqrt{d}$.
\end{itemize}
\vspace{-0.14in}
In addition, for each bound i)-iii), there exists a function $l_o$ such that the bound cannot be improved by more than a constant factor.
\end{thm}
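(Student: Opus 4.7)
The plan is to exploit the representation $l_\mu(\theta;\xi)=\mathbb{E}_{Z\sim\mu}[l_o(\theta+Z;\xi)]$ and either move finite differences/derivatives inside the expectation or, when only Lipschitzness of $l_o$ is available, transfer the derivative onto the Gaussian density via integration by parts. For part (i), I would write $l_\mu(\theta_1;\xi)-l_\mu(\theta_2;\xi)=\mathbb{E}_Z[l_o(\theta_1+Z;\xi)-l_o(\theta_2+Z;\xi)]$, push absolute values inside by Jensen's inequality, and invoke the $\alpha$-Lipschitz property of $l_o$ to conclude $|l_\mu(\theta_1;\xi)-l_\mu(\theta_2;\xi)|\le\alpha\|\theta_1-\theta_2\|$.

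Part (ii) requires two independent smoothness bounds, with $\min$ arising by taking the better of the two. Under the $\beta$-smoothness hypothesis, I would justify interchanging gradient and expectation by dominated convergence (since $\|\nabla l_o(\theta+Z)\|\le\|\nabla l_o(\theta)\|+\beta\|Z\|$ is integrable against the Gaussian), giving $\nabla l_\mu(\theta;\xi)=\mathbb{E}_Z[\nabla l_o(\theta+Z;\xi)]$, whereupon the $\beta$ bound follows in one line from Lipschitzness of $\nabla l_o$. For the $\alpha/\sigma$ bound only Lipschitzness of $l_o$ is available, so I would first perform the change of variables $u=\theta+\tau$ and integrate by parts to obtain the Stein-type identity $\nabla l_\mu(\theta;\xi)=\mathbb{E}_Z[l_o(\theta+Z;\xi)\,Z/\sigma^2]$. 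Testing the gradient difference against an arbitrary unit vector $v$ and applying Cauchy--Schwarz in the expectation then yields
\begin{equation*}
v^{\top}\bigl(\nabla l_\mu(\theta_1;\xi)-\nabla l_\mu(\theta_2;\xi)\bigr)\;\le\;\sqrt{\mathbb{E}\bigl[(l_o(\theta_1+Z;\xi)-l_o(\theta_2+Z;\xi))^2\bigr]}\,\cdot\,\sqrt{\mathbb{E}\bigl[(v^{\top}Z/\sigma^2)^2\bigr]}.
\end{equation*}
The first factor is at most $\alpha\|\theta_1-\theta_2\|$ by Lipschitzness, while the second equals $1/\sigma$ since $v^{\top}Z\sim\mathcal{N}(0,\sigma^2)$ is one-dimensional; taking the supremum over unit $v$ gives the $\alpha/\sigma$ smoothness constant. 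I expect this step to be the main obstacle: the naive bound $\|\nabla l_\mu(\theta_1)-\nabla l_\mu(\theta_2)\|\le\alpha\|\theta_1-\theta_2\|\,\mathbb{E}\|Z\|/\sigma^2$ only produces $\alpha\sqrt{d}/\sigma$, and it is precisely the unit-vector/Cauchy--Schwarz reduction to a one-dimensional Gaussian moment that shaves off the dimension factor.

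For part (iii), the lower bound is Jensen's inequality applied to the convex $l_o$ together with $\mathbb{E}[Z]=0$, giving $l_o(\theta;\xi)=l_o(\theta+\mathbb{E}[Z];\xi)\le\mathbb{E}_Z[l_o(\theta+Z;\xi)]=l_\mu(\theta;\xi)$. The upper bound uses Lipschitzness alone: $l_\mu(\theta;\xi)-l_o(\theta;\xi)=\mathbb{E}[l_o(\theta+Z;\xi)-l_o(\theta;\xi)]\le\alpha\,\mathbb{E}\|Z\|\le\alpha\sigma\sqrt{d}$, where the last step is Jensen's for the concave square root together with the moment identity $\mathbb{E}\|Z\|^2=\sigma^2 d$. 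Finally, for the tightness claim I would exhibit saturating examples: a coordinate-projection linear $l_o$ witnesses (i); a quadratic with Hessian $\beta I$ witnesses the $\beta$ branch of (ii), while $l_o(\theta)=\alpha|\theta|$ in one dimension — whose Gaussian convolution is a scaled error-function with gradient Lipschitz constant of order $\alpha/\sigma$ at the origin — witnesses the $\alpha/\sigma$ branch; and $l_o(\theta)=\alpha\|\theta\|$ evaluated at $\theta=0$, where $\mathbb{E}\|Z\|$ is within a constant factor of $\sigma\sqrt{d}$, witnesses (iii).
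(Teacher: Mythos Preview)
Your proposal is correct. Parts (i), (iii), and the $\beta$-branch of (ii) match the paper almost verbatim, as do your tightness witnesses.

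The genuine difference is in the $\alpha/\sigma$ smoothness bound of part (ii). The paper argues via a \emph{shift-of-measure} estimate: from Lemma~\ref{lma:1} it writes
\[
\|\nabla l_\mu(x;\xi)-\nabla l_\mu(y;\xi)\|\le \alpha\int|\mu(z-x)-\mu(z-y)|\,dz,
\]
and then computes the right-hand integral explicitly using the rotational symmetry of the Gaussian, reducing it to a one-dimensional CDF difference and obtaining the constant $\sqrt{2/\pi}\cdot\alpha/\sigma\le\alpha/\sigma$. Your route instead moves the derivative onto the Gaussian density (Stein's identity) to get $\nabla l_\mu(\theta;\xi)=\mathbb{E}[l_o(\theta+Z;\xi)\,Z/\sigma^2]$, then pairs with a unit vector and applies Cauchy--Schwarz so that only the one-dimensional moment $\mathbb{E}[(v^\top Z)^2]=\sigma^2$ enters. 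Both arguments exploit the same underlying isotropy (the direction $v$ or $x-y$ sees only a scalar Gaussian), but your argument is shorter, does not require a separate lemma, and in fact never differentiates $l_o$, so it would go through under Lipschitzness alone. The paper's total-variation route buys a slightly sharper constant ($\sqrt{2/\pi}$ versus $1$) and transfers more directly to other rotationally symmetric smoothing kernels, which is why the paper packages it as a standalone lemma; for the theorem as stated, however, your approach is entirely adequate and arguably cleaner. Your one-dimensional tightness witness $l_o(\theta)=\alpha|\theta|$ is essentially the $d=1$ specialization of the paper's Lemma~\ref{lma:1} witness and gives the same $\sqrt{2/\pi}\cdot\alpha/\sigma$ second derivative at the origin.
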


Theorem~\ref{thm:gau} confirms that indeed $l_\mu$ is smoother than the original objective $l_o$. At the same time, if $\frac{\alpha}{\sigma}<\beta$, increasing $\sigma$ leads to an increasingly smoother objective function, which is consistent with our intuition.

\vspace{-0.1in}
\subsection{Generalization error and stability}\label{subsec:gen}
\vspace{-0.05in}

In this section, we consider the generalization error of LPF-SGD algorithm and confront it with the generalization error of SGD. We first define the true loss as $L^{true}(\theta):=\mathbb{E}_{\xi\sim D}l(\theta;\xi)$, where $l$ is an arbitrary loss function (i.e., $l_o$ for SGD case and $l_{\mu}$ for LPF-SGD case). Note that the smoothed loss $l_{\mu}$ is by construction an estimator of the original loss $l_o$ with bounded bias and variance, especially in a reasonable range of $\sigma$, thus the comparison captures relevant insights regarding LPF-SGD properties compared to SGD. Since the distribution $\mathcal{D}$ is unknown, we replace the true loss by the empirical loss given as $L^{\mathcal{S}}(\theta):= \frac{1}{m}\sum_{i=1}^ml(\theta;\xi)$.
 
We assume $\theta=A(\mathcal{S})$ for a potentially randomized algorithm $A$. The generalization error is given as:
\vspace{-0.1in}
\begin{align}
    \epsilon_g:=\mathbb{E}_{\mathcal{S},A}[L^{true}(A(\mathcal{S}))-L^{\mathcal{S}}(A(\mathcal{S}))].
\end{align}
\vspace{-0.35in}

In order to bound $\epsilon_g$, we consider the following bound.
\begin{definition}[$\epsilon_s$-uniform stability \citep{hardt2016train}]\label{def:D1}
Let $\mathcal{S}$ and $\mathcal{S}'$ denote two data sets from input data distribution $\mathcal{D}$ such that $\mathcal{S}$ and $\mathcal{S}'$ differ in at most one example. Algorithm $A$ is $\epsilon_s$-uniformly stable if and only if for all data sets $\mathcal{S}$ and $\mathcal{S}'$ we have

\vspace{-0.35in}
\begin{align}
    \sup_\xi\mathbb{E}[l(A(\mathcal{S});\xi)-l(A(\mathcal{S}');\xi)]\leq\epsilon_s.
\end{align}
\vspace{-0.31in}
\end{definition}
Next theorem implies that the generalization error could be bounded using the uniform stability. 

\begin{thm}[\citealt{hardt2016train}]\label{thm:gen1}
     If $A$ is an $\epsilon_s$-uniformly stable algorithm, then the generalization error (the gap between the true risk and the empirical risk) of $A$ is upper-bounded by the stability factor $\epsilon_s$:
     
     \vspace{-0.35in}
    \begin{align}
        \epsilon_g:=\mathbb{E}_{\mathcal{S},A}[L^{true}(A(\mathcal{S}))-L^{\mathcal{S}}(A(\mathcal{S}))]\leq\epsilon_s
    \end{align}
    \vspace{-0.35in}
\end{thm}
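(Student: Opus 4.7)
The plan is to follow the classical \emph{renaming/ghost sample} argument of Bousquet--Elisseeff, as used by Hardt, Recht, and Singer. First I would introduce an independent ghost sample $\mathcal{S}'=\{\xi'_1,\dots,\xi'_m\}$ drawn i.i.d.\ from $\mathcal{D}$. Because $A(\mathcal{S})$ depends only on $\mathcal{S}$, the ghost sample lets us rewrite the true loss as an average:
\begin{align*}
\mathbb{E}_{\mathcal{S},A}[L^{\text{true}}(A(\mathcal{S}))] = \mathbb{E}_{\mathcal{S},\mathcal{S}',A}\!\left[\tfrac{1}{m}\sum_{i=1}^{m} l(A(\mathcal{S});\xi'_i)\right],
\end{align*}
which matches the form of the empirical loss but evaluated on fresh data.

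Next I would use the symmetry induced by i.i.d.\ sampling. Define $\mathcal{S}^{(i)}$ to be $\mathcal{S}$ with its $i$th coordinate replaced by $\xi'_i$; then swapping $\xi_i$ and $\xi'_i$ is a measure-preserving transformation on $(\mathcal{S},\mathcal{S}')$, so
\begin{align*}
\mathbb{E}_{\mathcal{S},\mathcal{S}',A}[l(A(\mathcal{S});\xi'_i)] = \mathbb{E}_{\mathcal{S},\mathcal{S}',A}[l(A(\mathcal{S}^{(i)});\xi_i)].
\end{align*}
Subtracting the empirical loss expression term by term yields
\begin{align*}
\epsilon_g = \tfrac{1}{m}\sum_{i=1}^{m}\mathbb{E}_{\mathcal{S},\mathcal{S}',A}\!\left[l(A(\mathcal{S}^{(i)});\xi_i)-l(A(\mathcal{S});\xi_i)\right].
\end{align*}
Since $\mathcal{S}$ and $\mathcal{S}^{(i)}$ differ in exactly one example, the $\epsilon_s$-uniform stability hypothesis (Definition~\ref{def:D1}) bounds each summand's inner expectation (over the randomness of $A$) by $\epsilon_s$ uniformly in $\xi_i$; taking the outer expectation preserves this bound. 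Averaging over $i$ gives $\epsilon_g\leq\epsilon_s$.

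The only delicate point I anticipate is the interaction between the supremum $\sup_{\xi}$ in the stability definition and the expectation over $\xi_i$ that appears after symmetrization: one must argue that, because the stability bound holds pointwise in $\xi$, it persists after integrating against the marginal of $\xi_i$, conditional on everything else. Since the randomness of $A$ is typically assumed independent of the data, this is straightforward, but it is the step that has to be stated carefully to keep the proof rigorous. All other steps are routine applications of linearity of expectation and Fubini.
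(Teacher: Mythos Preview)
Your argument is correct and is precisely the standard ghost-sample/renaming proof due to Bousquet--Elisseeff and reproduced in \citet{hardt2016train}. The paper does not supply its own proof of this theorem at all; it simply attributes the result to \citet{hardt2016train} and uses it as a black box, so there is nothing further to compare.
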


 Theorems~\ref{thm:gau},~\ref{thm:gen1}, and uniform stability bound for SGD (Theorem~\ref{thm:gen2} in the Supplement) can be combined to analyze LPF-SGD. Intuitively, Theorem~\ref{thm:gau} explains how the Lipschitz continuous and smoothness properties of the objective function change after the function is smoothed with Gaussian LPF. This change can be propagated through Theorems~\ref{thm:gen1}. The details of the analysis are deferred to the Supplement (Section~\ref{sec:proofs}) and below we show the final result, Theorem~\ref{prop:gen}. A justification of the theoretical approach we took when deriving it is also in the Supplement (Section~\ref{sec:genjust}).

\begin{thm}[Generalization error (GE) bound of LPF-SGD]\label{prop:gen}
    % Let $\mu$ be the $\mathcal{N}(0,\sigma^2I_{d\times d})$ distribution. Assume loss function $l(\theta;\xi)\!:\!\mathbb{R}^d\!\rightarrow \!\mathbb{R}$ is $\alpha_0$-Lipschitz and $\beta_0$-smooth. The smoothed loss function $l_\mu$ is defined as (\ref{eq:f_mu}). After executeing SGD T steps with non-increasing learning rate $\eta_t\leq c/t$, denote the stability gap of loss function $L$ and $L_\mu$ as $\epsilon_s$ and $\hat{\epsilon}_s$, respectively. Then the approximate ratio of stability gap
    % $$\rho=\frac{\hat{\epsilon}_s}{\epsilon_s}\approx\frac{1-p_0}{1-p_1}\left(\frac{2c\alpha_0}{T}\right)^{p1-p0},$$
    % where $p_0=\frac{1}{\beta_0 c+1}$, $p_1=\frac{1}{\min\{\frac{\alpha_0}{\sigma},\beta_0\}c+1}$. 
    % Then the following 2 properties holds:
    % \begin{itemize}
    %     \item[i)] If  $T>2c\alpha_0^2\left(\frac{1-p_0}{1-p_1}\right)^{\frac{1}{p_1-p_0}}$, $\rho\lessapprox 1$ implies $\hat{\epsilon}_s\lessapprox\epsilon_s$. 
    %     \item[ii)] If $T>2c\alpha_0^2\exp(\frac{2}{1-p_0})$ and $\sigma>\frac{\alpha_0}{\beta_0}$, increasing the gaussian factor $\sigma$ leads to a smaller approximate ratio $\rho$.
    % \end{itemize}
    Assume that $l_o(\theta;\xi)\in[0,1]$ is a $\alpha$-Lipschitz and $\beta$-smooth loss function for every example $\xi$. 
    Suppose that we run SGD and LPF-SGD for $T$ steps with non-increasing learning rate $\eta_t\leq c/t$. Denote the GE bound of SGD and LPF-SGD as $\hat{\epsilon}_g^o$ and $\hat{\epsilon}_g^{\mu}$, respectively. Then their ratio is
    \vspace{-0.15in}
    \begin{equation}
\rho=\frac{\hat{\epsilon}_g^{\mu}}{\hat{\epsilon}_g^o}=\frac{1-p}{1-\hat{p}}\left(\frac{2c\alpha}{T}\right)^{\hat{p}-p}\!\!{=O\left(\frac{1}{T^{\hat{p}-p}}\right)},
    \end{equation}
    \vspace{-0.25in}
    
    where $p=\frac{1}{\beta c+1}$, $\hat{p}=\frac{1}{\min\{\frac{\alpha}{\sigma},\beta\}c+1}$. Finally, the following two properties hold:
    \vspace{-0.15in}
    \begin{itemize}
        \item[i)] If  $\sigma>\frac{\alpha}{\beta}$ and $T{\gg}2c\alpha^2\left(\frac{1-p}{1-\hat{p}}\right)^{\frac{1}{\hat{p}-p}}$, $\rho{\ll} 1$.
        \vspace{-0.1in}
        \item[ii)] If $\sigma>\frac{\alpha}{\beta}$ and $T>2c\alpha^2e^{-p}$, increasing $\sigma$ leads to a smaller $\rho$.
    \end{itemize}
    \vspace{-0.2in}
\end{thm}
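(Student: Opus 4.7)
The plan is to combine the improved Lipschitz/smoothness constants for the convolved loss $l_\mu$ obtained in Theorem~\ref{thm:gau} with the SGD uniform-stability bound of Hardt--Recht--Singer (stated as Theorem~\ref{thm:gen2} in the Supplement), apply the stability-to-generalization conversion of Theorem~\ref{thm:gen1} separately to $l_o$ and $l_\mu$, and then form the ratio.

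First I would instantiate the Hardt--Recht--Singer bound on $l_o$: since $l_o(\cdot;\xi)\in[0,1]$ is $\alpha$-Lipschitz and $\beta$-smooth and the step sizes satisfy $\eta_t\leq c/t$, combining Theorems~\ref{thm:gen1} and~\ref{thm:gen2} gives $\hat{\epsilon}_g^{o}\leq \frac{1}{(1-p)(m-1)}(2c\alpha^2)^{p}T^{1-p}$ with $p=1/(\beta c+1)$. Second, I would run the identical argument for $l_\mu$: Theorem~\ref{thm:gau}(i)--(ii) guarantees that $l_\mu$ is still $\alpha$-Lipschitz but only $\tilde{\beta}$-smooth with $\tilde{\beta}=\min\{\alpha/\sigma,\beta\}$, so the same stability/generalization machinery yields $\hat{\epsilon}_g^{\mu}\leq \frac{1}{(1-\hat{p})(m-1)}(2c\alpha^2)^{\hat{p}}T^{1-\hat{p}}$ with $\hat{p}=1/(\tilde{\beta} c+1)$. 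Dividing, the $m-1$ factors cancel, and collecting $T$- and $(2c\alpha^2)$-factors produces the stated $\rho=\frac{1-p}{1-\hat{p}}(2c\alpha^2/T)^{\hat{p}-p}$; since everything except $T$ is a problem constant, the asymptotic $O(1/T^{\hat{p}-p})$ is immediate.

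For the two refinements, when $\sigma>\alpha/\beta$ one has $\tilde{\beta}=\alpha/\sigma<\beta$, hence $\hat{p}>p$ and the exponent $\hat{p}-p$ is strictly positive; solving $\rho\ll 1$ for $T$ produces exactly the threshold $T\gg 2c\alpha^2\bigl((1-p)/(1-\hat{p})\bigr)^{1/(\hat{p}-p)}$ appearing in (i). For (ii), on the region $\sigma>\alpha/\beta$ the map $\sigma\mapsto\hat{p}$ is monotone increasing; differentiating $\log\rho$ with respect to $\hat{p}$ gives $\tfrac{1}{1-\hat{p}}-\log(T/(2c\alpha^2))$, which is non-positive once $T$ exceeds the $e^{-p}$-type threshold in the statement, so by the chain rule $\rho$ is monotone decreasing in $\sigma$.

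The main obstacle, rather than the algebra, is justifying that Hardt--Recht--Singer applies verbatim to $l_\mu$ since the convolved loss is not a priori in $[0,1]$. Theorem~\ref{thm:gau}(iii) confines $l_\mu-l_o$ to $[0,\alpha\sigma\sqrt{d}]$ in the convex case, so $l_\mu$ lies in a bounded interval of length $1+\alpha\sigma\sqrt{d}$; rescaling $l_\mu$ back to $[0,1]$ before invoking the stability bound (or re-deriving the burn-in step in their argument with the new range) recovers the same bound up to constants that are absorbed into $\rho$. A secondary subtlety is handling the anisotropic $\Sigma$, but per the statement this is deferred to the Supplement, so the scalar case $\Sigma=\sigma^2 I$ above suffices.
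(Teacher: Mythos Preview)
Your proposal follows essentially the same route as the paper: instantiate the Hardt--Recht--Singer stability bound (Theorem~\ref{thm:gen2}) for $l_o$ with constants $(\alpha,\beta)$ and for $l_\mu$ with constants $(\alpha,\min\{\alpha/\sigma,\beta\})$ supplied by Theorem~\ref{thm:gau}, pass to generalization via Theorem~\ref{thm:gen1}, divide, and then argue monotonicity for (i)--(ii). The only cosmetic differences are that the paper simply plugs $l_\mu$ into Theorem~\ref{thm:gen2} without addressing the $[0,1]$-range issue you flag, and for (ii) it analyzes $1/\rho$ as a function of $x=\hat{p}-p$ (via $h(x)=(1-ax)b^x$) rather than differentiating $\log\rho$ in $\hat{p}$ as you do---both are the same calculus check.
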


By point i) in Theorem~\ref{prop:gen}, when number of iterations is large enough, GE bound of LPF-SGD is much smaller than that of SGD, which implies LPF-SGD converges to a better optimal point with lower generalization error than SGD. Moreover, point ii) in Theorem~\ref{prop:gen} indicates that increasing $\sigma$ leads to a smaller generalization error.

\vspace{-0.1in}
\section{EXPERIMENTS}
\label{sec:experiments}
\vspace{-0.07in}

\begin{table}[t]
    \centering
    \setlength\tabcolsep{0.2pt}
    \renewcommand{\arraystretch}{1.1}
    \begin{tabular}{|c|c|c|c|c|c|c|}
    \hline
    Data & Mo & mSGD & E-SGD & ASO & SAM  & LPF- \\
    set & -del& & & & & SGD \\\hline
    %MNIST   & LeNet                                 & $0.65_{\pm0.1}$  & $0.67_{\pm0.1}$& $0.60_{\pm0.0}$  &   $0.63_{\pm0.1}$    & $\mathbf{0.59_{\pm0.1}}$  \\ \hline
    \multirow{3}{1.2cm}{CIFAR 10}    & 18         & $11.5_{\pm 0.3}$  &   $10.9_{\pm 0.3}$ & $10.8_{\pm 0.5}$ &	$10.0_{\pm 0.1}$	&	$\mathbf{9.0_{\pm 0.2}}$  \\ 
                                & 50         & $10.2_{\pm 0.4}$  &   $10.4_{\pm 0.1}$ & $10.0_{\pm 0.2}$ &	$8.8_{\pm 0.3}$		&	$\mathbf{8.6_{\pm 0.1}}$ \\
                                & 101        & $9.5_{\pm 0.4}$   &   $9.9_{\pm 0.3}$	 & $9.3_{\pm 0.2}$	 &	$\mathbf{8.3_{\pm 0.3}}$	&	$8.7_{\pm 0.1}$  \\ \hline
    \multirow{3}{1.2cm}{CIFAR 100}   & 18         & $38.3_{\pm 0.3}$  &   $38.2_{\pm 0.2}$ & $37.3_{\pm 0.4}$ &	$36.2_{\pm 0.2}$	&	$\mathbf{30.0_{\pm 0.2}}$ \\ 
                                & 50         & $35.6_{\pm 1.0}$  &   $34.5_{\pm 0.3}$ & $34.3_{\pm 0.8}$ &	$33.1_{\pm 0.9}$	&	$\mathbf{30.6_{\pm 0.4}}$   \\ 
                                & 101        & $32.7_{\pm 0.5}$  &   $33.5_{\pm 0.5}$ & $31.9_{\pm 0.2}$ &	$30.7_{\pm 0.4}$	&	$\mathbf{29.9_{\pm 0.4}}$  \\ \hline
    tImgNet                & 18         & $64.1_{\pm0.1}$   &   $64.7_{\pm 0.1}$ & $63.7_{\pm 0.2}$ &   $63.1_{\pm0.3}$     &   $\mathbf{59.1_{\pm0.2}}$ \\ \hline
    ImgNet                    & 18         & $36.5_{\pm0.1}$   & $40.9_{\pm 0.1}$ & $38.0_{\pm 0.2}$ &   $35.9_{\pm0.1}$     &   $\mathbf{35.4_{\pm0.1}}$ \\ \hline
    \end{tabular}
    \vspace{-0.12in}
    \caption{Mean validation error for SGD, E-SGD, ASO, SAM and LPF-SGD. We use ResNet-18, -50, and -101 models and CIFAR-10, CIFAR-100, TinyImageNet (tImgNet), and ImageNet (ImgNet) data sets. The models were trained with no data augmentations.}
    \label{tab:exp1}
    %\vspace{-0.1in}
\end{table}

In this section we compare the performance of our method with momentum SGD-based optimization strategy as well as sharpness-aware DL optimizers that encourage the recovery of flat optima. Additional training details are provided in Section~\ref{sec:Codes} (Supplement). 

\textbf{Image Classification} Here we compare LPF-SGD with  momentum SGD (mSGD)~\citep{saad1998online, POLYAK19641} as well as sharpness-aware DL optimizers, Entropy-SGD (E-SGD)~\citep{DBLP:journals/corr/ChaudhariCSL17}, Adaptive SmoothOut  with denoising~\citep{DBLP:journals/corr/abs-1805-07898} (ASO), SAM, and newly introduced adaptive variant of SAM, ASAM~\citep{kwon2021asam}. Differently than in SAM paper, in all the experiments in our work we trained our models on a single GPU worker without additional label smoothing and gradient clipping. This is done in order to obtain a clear understanding how the methods themselves perform without additional regularizations. We utilized publicly available code repositories for both model architectures and the methods that we compare LPF-SGD with. For each pair of model and data set, we use standard mSGD settings of batch size, weight decay, momentum coefficient, learning rate, learning rate schedule, and total number of epochs across all optimizers. The details of hyper-parameter selection as well as exemplary convergence curves are shown in Section~\ref{sec:IC} (Supplement). 

In the first set of our experiments we utilized image classification models based on ResNets~\citep{he2016deep}. They are prone to over-fitting when trained without any data augmentation. The sharpness based optimizers prevent overfitting by seeking flat minima. Therefore, we first compare the performance of optimizers on ResNets without performing data augmentation. We use ResNet18, 50, and 101 models and open sourced CIFAR-10 and CIFAR-100~\citep{CIFAR100}, TinyImageNet~\citep{imagenet_cvpr09}, and ImageNet~\citep{imagenet_cvpr09} data sets. We run each experiment for $5$ seeds. We report the mean validation error along with the $95\%$ confidence interval (Table~\ref{tab:exp1}). Table~\ref{tab:exp1}, as well as Figures~\ref{fig:exp1_1} and~\ref{fig:exp1_2} in the Supplement show that LPF-SGD compares favorably to all other methods in terms of the generalization performance. We also evaluate various sharpness measures for models trained with mSGD, SAM and LPF-SGD (Table~\ref{tab:exp1_sharp} in the Supplement) and show that LPF-SGD achieves the lowest value of the LPF sharpness based measure leading to the lowest validation error.

Next, we evaluate the performance of LPF-SGD on standard ImageNet training available in PyTorch~\citep{paszke2017automatic}, where we train a ResNet18 model with basic data augmentation. Table ~\ref{tab:imgnet} shows the validation error rate for various optimizers, among which our technique achieves superior performance.

\begin{table}[H]
    \vspace{0.05in}
    \centering
    \setlength\tabcolsep{2.2pt}
    \begin{tabular}{|c|c|c|c|c|}
        \hline
        Opt         & mSGD   & ASO               & SAM               & LPF-SGD \\ \hline
        Val Error   & 30.24 & $29.54_{\pm0.10}$ & $29.49_{\pm0.10}$ & $\mathbf{29.45_{\pm0.02}}$ \\ \hline
    \end{tabular}
    \vspace{-0.12in}
    \caption{Mean validation error rate with 95\% confidence interval on ImageNet data set trained with ResNet18 model and basic augmentation.}
    \label{tab:imgnet}
    \vspace{-0.05in}
\end{table}
\begin{table*}[t]
    \centering
    \setlength\tabcolsep{0.2pt}
    \renewcommand{\arraystretch}{1.1}
    \begin{tabular}{|c|c|c|c|c|c|c|c|c|c|c|c|} 
        \hline
        & & \multicolumn{5}{c|}{CIFAR-10} & \multicolumn{5}{c|}{CIFAR-100} \\ \cline{3-12}
        Model                                           & Aug       & mSGD              & E-SGD            & ASO            & SAM                               &  LPF-SGD  & mSGD              & E-SGD            & ASO            & SAM                               &  LPF-SGD                                             \\ \hline
            \multirow{3}{1.5cm}{WRN16-8}                  & B       & $4.2_{\pm 0.2}$   & $4.5_{\pm <0.1}$ & $4.2_{\pm 0.1}$ & $3.8_{\pm 0.1}$                  & $\mathbf{3.7_{\pm <0.1}}$  & $20.6_{\pm 0.2}$  & $20.7_{\pm 0.1}$  & $20.2_{\pm 0.2}$   & $19.4_{\pm 0.2}$             & $\mathbf{18.9_{\pm 0.1}}$    \\  %0.0005 - 15
                                                        & B+C       & $3.9_{\pm 0.1}$   & $3.8_{\pm 0.1}$  & $3.6_{\pm 0.1}$ & $3.3_{\pm 0.1}$                  & $\mathbf{3.2_{\pm 0.1}}$   & $20.0_{\pm 0.1}$  & $20.1_{\pm 0.2}$  & $19.7_{\pm 0.1}$   & $18.9_{\pm 0.1}$             & $\mathbf{18.3_{\pm 0.1}}$    \\ %0.0005 - 15
                                                        & B+A+C     & $3.3_{\pm 0.1}$   & $3.6_{\pm 0.1}$  & $3.2_{\pm 0.1}$ & $\mathbf{2.9_{\pm 0.1}}$         & $3.1_{\pm 0.1}$            & $19.3_{\pm 0.2}$  & $19.4_{\pm 0.1}$  & $18.9_{\pm 0.1}$   & $18.1_{\pm 0.1}$             & $\mathbf{17.6_{\pm 0.1}}$    \\ \hline      %0.0005 - 15
            \multirow{3}{1.5cm}{WRN28-10}                 & B       & $4.0_{\pm 0.1}$   & $4.0_{\pm 0.0}$  & $3.9_{\pm 0.1}$ & $\mathbf{3.2_{\pm 0.1}}$         & $3.5_{\pm 0.1}$            & $19.1_{\pm 0.2}$  & $19.8_{\pm 0.2}$  & $18.8_{\pm 0.1}$   & $\mathbf{17.4_{\pm <0.1}}$   & $\mathbf{17.4_{\pm 0.1}}$    \\  %0.0005 - 35, 0.0005 - 25, 
                                                        & B+C       & $3.1_{\pm <0.1}$  & $3.3_{\pm 0.1}$  & $3.3_{\pm 0.1}$ & $\mathbf{2.7_{\pm 0.1}}$         & $\mathbf{2.7_{\pm <0.1}}$  & $18.3_{\pm 0.1}$  & $18.8_{\pm 0.2}$  & $18.0_{\pm 0.3}$   & $17.1_{\pm 0.1}$             & $\mathbf{16.9_{\pm 0.2}}$    \\  %0.0005 - 15
                                                        & B+A+C     & $2.6_{\pm 0.1}$   & $3.0_{\pm 0.1}$  & $2.7_{\pm 0.1}$ & $\mathbf{2.3_{\pm 0.1}}$         & $2.5_{\pm 0.1}$            & $17.3_{\pm 0.2}$	& $17.6_{\pm 0.1}$  & $16.9_{\pm 0.4}$   & $\mathbf{15.9_{\pm 0.1}}$	& $\mathbf{15.9_{\pm 0.1}}$   \\ \hline  %0.0005 - 15, 0.0007 - 15
            \multirow{3}{1.5cm}{Shake Shake (26 2x96d)}    & B      & $2.9_{\pm 0.1}$ 	& $3.3_{\pm 0.1}$  & $2.8_{\pm 0.1}$ & $2.6_{\pm 0.1}$                  & $\mathbf{2.5_{\pm <0.1}}$  & $17.1_{\pm 0.1}$  & $17.5_{\pm 0.3}$  & $17.5_{\pm 0.1}$   & $16.9_{\pm 0.1}$             & $\mathbf{16.6_{\pm 0.3}}$    \\ 
                                                        & B+C       & $2.5_{\pm <0.1}$ 	& $3.1_{\pm 0.0}$  & $2.4_{\pm 0.0}$ & $\mathbf{2.2_{\pm <0.1}}$        & $\mathbf{2.2_{\pm <0.1}}$  & $17.0_{\pm 0.4}$  & $17.3_{\pm 0.1}$  & $16.8_{\pm 0.2}$   & $16.6_{\pm 0.2}$             & $\mathbf{16.1_{\pm 0.2}}$    \\ 
                                                        & B+A+C     & $2.1_{\pm 0.1}$ 	& $3.1_{\pm 0.1}$  & $2.1_{\pm 0.1}$ & $2.0_{\pm 0.1}$ 	                & $\mathbf{2.0_{\pm 0.1}}$   & $15.7_{\pm 0.1}$  & $16.3_{\pm 0.1}$  & $15.6_{\pm 0.0}$   & $15.3_{\pm 0.2}$             & $\mathbf{15.0_{\pm <0.1}}$      \\ \hline
            \multirow{3}{1.5cm}{PyNet110 ($\alpha=270$)}  & B       & $3.7_{\pm 0.1}$   & $3.9_{\pm 0.1}$  & $3.7_{\pm 0.0}$ & $\mathbf{3.1_{\pm <0.1}}$        & $3.4_{\pm 0.1}$            & $18.4_{\pm 0.3}$  & $19.1_{\pm 0.1}$  & $18.5_{\pm 0.3}$   & $18.3_{\pm 0.2}$             & $\mathbf{18.0_{\pm 0.1}}$    \\ 
                                                        & B+C       & $2.9_{\pm 0.1}$   & $2.8_{\pm 0.1}$  & $2.9_{\pm 0.1}$ & $\mathbf{2.5_{\pm 0.2}}$         & $\mathbf{2.5_{\pm <0.1}}$  & $17.7_{\pm 0.1}$  & $18.1_{\pm 0.2}$  & $17.7_{\pm 0.4}$   & $17.4_{\pm 0.3}$             & $\mathbf{17.0_{\pm 0.1}}$    \\ 
                                                        & B+A+C     & $2.2_{\pm 0.0}$   & $2.3_{\pm 0.1}$  & $2.3_{\pm 0.0}$ & $\mathbf{2.0_{\pm <0.1}}$        & $2.1_{\pm 0.0}$            & $16.9_{\pm 0.0}$  & $15.9_{\pm <0.1}$ & $16.0_{\pm 0.1}$   & $15.8_{\pm 0.1}$             & $\mathbf{15.6_{\pm 0.1}}$    \\ \hline
            \multirow{3}{1.5cm}{PyNet272 ($\alpha=200$)}  & B       & $3.3_{\pm 0.1}$   & $3.6_{\pm 0.0}$  & $3.3_{\pm 0.1}$ & $\mathbf{2.9_{\pm 0.1}}$         & $3.3_{\pm 0.1}$            & $17.6_{\pm 0.2}$  & $18.5_{\pm 0.1}$  & $17.4_{\pm 0.1}$   & $17.0_{\pm 0.2}$             & $\mathbf{16.7_{\pm <0.1}}$   \\ 
                                                        & B+C       & $2.6_{\pm 0.0}$   & $2.7_{\pm 0.1}$  & $2.7_{\pm 0.1}$ & $2.4_{\pm 0.0}$                  & $\mathbf{2.3_{\pm 0.1}}$   & $16.9_{\pm <0.1}$ & $17.5_{\pm 0.3}$  & $16.9_{\pm 0.1}$   & $16.2_{\pm 0.2}$             & $\mathbf{15.9_{\pm <0.1}}$   \\ 
                                                        & B+A+C     & $2.1_{\pm 0.2}$   & $2.2_{\pm 0.1}$  & $2.1_{\pm 0.1}$ & $\mathbf{2.0_{\pm <0.1}}$        & $\mathbf{2.0_{\pm <0.1}}$  & $14.9_{\pm <0.1}$ & $15.0_{\pm 0.1}$  & $14.9_{\pm 0.3}$   & $\mathbf{14.7_{\pm 0.2}}$    & $\mathbf{14.7_{\pm 0.1}}$    \\ \hline
    \end{tabular}
    \vspace{-0.14in}
    \caption{Mean validation error with 95\% confidence interval. B refers to basic data augmentation, C refers to Cutout augmentation, and A refers to AutoAugmentation.}
    \label{tab:exp2}
    \vspace{-0.15in}
\end{table*}

Finally, we evaluate the performance of LPF-SGD on state-of-the-art architectures. We trained WRN16-8, WRN28-10~\citep{zagoruyko2016wide}, ShakeShake (26 2x96d)~\citep{gastaldi2017shake}, PyramidNet-110($\alpha$= 270), and PyramidNet-272($\alpha$=200)~\citep{Han_2017_CVPR} on open sourced CIFAR-10 and CIFAR-100 data sets. We also utilized three progressively increasing data augmentation schemes: i) basic (random cropping and horizontal flipping), ii) basic + cutout~\citep{devries2017improved} and iii) basic + auto-augmentation~\citep{Cubuk_2019_CVPR} + cutout~\citep{devries2017improved}. We run $5$ seeds for WRN16-8 and WRN28-10 and $2$ seeds for ShakeShake and PyramidNet models. We report the mean validation error with a $95\%$ confidence interval on the validation set (Table~\ref{tab:exp2}). Table \ref{tab:exp2} shows that LPF-SGD either outperforms or show comparable performance to SAM, and at the same time is always superior to E-SGD and ASO. As in case of ResNets, LPF-SGD clearly wins with mSGD that does not explicitly encourage the recovery of flat optima in the DL optimization landscape.
\begin{table}[h]
    \centering
    \begin{tabular}{|c|c|c|c|}
        \hline
        \multirow{2}{1.7cm}{Model}    & \multirow{2}{*}{Aug} & \multicolumn{2}{c|}{CIFAR-100}   \\ \cline{3-4}
                                    &               & ASAM                       & LPF-SGD \\ \hline
        \multirow{3}{1.7cm}{WRN 16-8} & B             & $19.2_{\pm 0.0}$           & $\mathbf{18.9_{\pm 0.1}}$  \\ \cline{2-4}
                                                    & B+C       & $\mathbf{18.3_{\pm 0.1}}$  & $\mathbf{18.3_{\pm 0.1}}$  \\ \cline{2-4}
                                                    & B+A+C   & $\mathbf{17.5_{\pm 0.2}}$  & $17.6_{\pm 0.1}$              \\ \hline
        \multirow{3}{1.7cm}{WRN 28-10}              & B             & $17.6_{\pm 0.2}$           & $\mathbf{17.4_{\pm 0.1}}$  \\ \cline{2-4}
                                                    & B       & $\mathbf{16.7_{\pm 0.1}}$  & $16.9_{\pm 0.2}$          \\ \cline{2-4}
                                                    & B+A+C   & $16.1_{\pm 0.1}$           & $\mathbf{15.9_{\pm 0.1}}$  \\ \hline
        \multirow{3}{1.7cm}{ShakeShake (26 2x96d)}      & B            &  $17.0_{\pm 0.1}$          & $\mathbf{16.6_{\pm 0.3}}$  \\ \cline{2-4}
                                                    & B+C      &  $16.7_{\pm 0.2}$          & $\mathbf{16.1_{\pm 0.2}}$  \\ \cline{2-4}
                                                    & B+A+C  &  $15.2_{\pm 0.1}$          & $\mathbf{15.0_{\pm <0.1}}$ \\ \hline
    \end{tabular}
    \vspace{-0.13in}
    \caption{\label{tab:asam} Validation error rate with $95\%$ confidence interval for ASAM and LPF-SGD.}
    \vspace{-0.16in}
\end{table}

We also present experiments capturing comparison of LPF-SGD to a newly introduced ASAM~\citep{kwon2021asam} method. For this experiment we train WRN16-8, WRN 28-10, and ShakeShake models on CIFAR-100 data set. LPF-SGD is found to be superior to ASAM as captured in Table~\ref{tab:asam}. 

\begin{table}[!ht]
\vspace{0.07in}
\centering
\setlength\tabcolsep{2.2pt}
\begin{tabular}{|c|c|c|c|}
    \hline
    Opt $\backslash$ Model                                        & WRN16-8                    & WRN28-10 & PyNet110 \\ \hline
    mSGD                                                      & 0.100 s & 0.292 s   & 0.470 s  \\ \hline
    SAM                                                      & 0.202 s                     & 0.582 s   & 0.924 s  \\ \hline
    LPF-SGD (M=2) & 0.115 s & 0.319 s   & 0.566 s  \\ \hline
    LPF-SGD (M=4) & 0.138 s & 0.380 s   & 0.679 s  \\ \hline
    LPF-SGD (M=8) & 0.196 s & 0.500 s   & 0.904 s \\ \hline
\end{tabular}
\vspace{-0.13in}
\caption{Computational time for a single iteration of mSGD, SAM, and LPF-SGD for various settings of $M$.}
\label{tab:time}
\vspace{0.05in}
\end{table}

In Table ~\ref{tab:time}, we also show the computational cost for a single iteration of mSGD, SAM, and LPF-SGD for multiple values of M, computed on a NVIDIA GTX1080 GPU. For each LPF-SGD update, we split the batch of input data into $M$ mini-batches, compute MC iterations on each mini-batch, and accumulate the gradients before making a final weight update. Therefore, the number of operations (multiplications + additions) between mSGD and LPF-SGD are preserved. LPF-SGD incurs additional cost, in comparison to SGD, due to operating on smaller batches internally (by a factor of M) and the computation of $\Sigma$ matrix. SAM, on the other hand, relies on a nested optimization scheme, which is overall slower compared to LPF-SGD, which is why LPF-SGD consistently outperforms SAM time-wise for all explored settings of $M$. Note that the computational time can be further improved as diagonal blocks of the Sigma matrix are independent and can be computed by parallel CPU threads.

\textbf{Machine Translation} Here we train a transformer model based on ~\citet{vaswani2017attention} to perform German to English translation on WMT2014 data set~\citep{bojar-etal-2014-findings}. The model was trained using Adam~\citep{DBLP:journals/corr/KingmaB14}. We compare the performance of LPF-Adam (our method) with Adam as well as Entropy-Adam (E-Adam), ASO-Adam, and SAM-Adam. Section~\ref{sec:MT} (Supplement) contains training details. In Table~\ref{tab:nlp}, we show BLEU score for both validation and test data sets and reveal that in the context of machine translation recovering flat optima with our proposed optimizer leads to the highest scores.\\

\begin{table}[!ht]
    \centering
    \begin{tabular}{|c|c|c|}
        \hline
        Optimizer    & Validation               & Test                    \\ \hline
        Adam         & $21.76_{\pm 0.26}$       & $20.97_{\pm 0.43}$      \\ \hline
        E-Adam       & $19.91_{\pm 0.03}$	    & $18.84_{\pm 0.18}$      \\ \hline
        ASO-Adam     & $21.91_{\pm 0.24}$       & $20.77_{\pm 0.21}$      \\ \hline
        SAM-Adam     & $22.06_{\pm 0.04}$       & $20.92_{\pm 0.15}$      \\ \hline
        LPF-Adam     & $\mathbf{22.10_{0.15}}$  & $\mathbf{21.14_{0.23}}$ \\ \hline
    \end{tabular}
    \vspace{-0.13in}
    \caption{BLEU scores with 95\% confidence interval for German to English translation on WMT2014 data set.}
    \label{tab:nlp}
\end{table}

Finally, the ablation studies aiming at understanding the impact of various hyper-parameters of the LPF-SGD optimizer on its performance, and the studies showing that prolonging mSGD training does not help mSGD to match LPF-SGD, as well as the studies confirming that LPF-SGD shows more consistent robustness to adversarial attacks compared to other methods are deferred to Section~\ref{sec:AS} and~\ref{sec:AR} (Supplement). 

\vspace{-0.15in}
\section{CONCLUSIONS}
\label{sec:con}
\vspace{-0.1in}

In this paper we show comprehensive empirical study investigating the connection between the flatness of the optima of the DL loss landscape and the generalization properties of DL models. We derive an algorithm, LPF-SGD, for training DL models based on the sharpness measure that best correlates with model generalization abilities. LPF-SGD compares favorably to common training strategies. Regarding societal impact of our work and extensions, new landscape-driven DL optimization tools, such as LPF-SGD, will have a strong impact on a wide range of DL applications, where better solvers translate to more efficient utilization of computational resources, i.e., new optimization strategies will maximize the performance of DL architectures. The outcomes of such research works can be leveraged by public and private entities to shift to significantly more powerful computational learning platforms.
\subsubsection*{Acknowledgements}
The authors would like to acknowledge the support of the NSF Award $\#$2041872 in sponsoring this research.

\bibliographystyle{plainnat}
\bibliography{ref}
\clearpage

\onecolumn \makesupplementtitle

\section{NETWORK BALANCING}\label{sec:bal}
We consider balanced networks, i.e., networks where norms of weights in each layer are roughly the same. In this section, we present a normalization scheme utilized in Section~\ref{sec:sharp_v_gen} to balance the network. Let $x$ be the input to the network, $\theta_{i}$ be the weight matrix of the $i^{th}$ layer, $\hat{\theta}_{i}$ denote bias matrix and $\sigma(\,\,)$ denote the relu nonlinearity. The output of a network with three layers; convolution, batch normalization and relu can be written as
\begin{equation}
    f(x) = \sigma\Big( \frac{(\theta_{1}X) - E[\theta_{1}X]}{Var(\theta_{1}X)} \theta_{2} + \hat{\theta}_{2}\Big).
\end{equation}
Let $D_{i}$ denote a diagonal normalization matrix associated with the $i^{th}$ layer. The diagonal elements of the matrix are defined as $D_{i}[j,j] = \frac{1}{||\theta_{i}^{j}||_{F} + ||\hat{\theta}_{i}^{j}||_{F}}$, where $\theta_{i}^{j}$ is the weight matrix of $j^{\text{th}}$ filter in the $i^{\text{th}}$ layer. We normalize the parameters of the network as,
\begin{equation}
    f(x) = \sigma\Big( \frac{(D_{1}W_{1}X) - E[D_{1}W_{1}X]}{Var(D_{1}W_{1}X)} D_{2}(W_{2} + B_{2})D_{2}^{-1}\Big)
\end{equation}
Note that $\hat{W}_{i} = D_{i}W_{i}(D_{i})^{-1} = W_{i}$. Since $\sigma(\lambda x) = \lambda \sigma(x)$ for $\lambda \geq 0$, we can rewrite the above equation as
\begin{equation}
    f(x) = \sigma\Big( \frac{(D_{1}W_{1}X) - E[D_{1}W_{1}X]}{Var(D_{1}W_{1}X)} D_{2}(W_{2} + B_{2})\Big)D_{2}^{-1}.
\end{equation}
We keep the multiplication with the matrix $D_{2}^{-1}$ as a constant parameter in the network but it can also be combined with the parameters of the next layer. We normalize the parameters of each layer as we move from the first layer to the last layer of the network. Figure \ref{fig:norm} shows filter wise parameter norm ($D^{-1}$) of LeNet and ResNet18 models trained on MNIST and CIFAR-10 data sets respectively. In Table~\ref{tab:norm}, we show the mean training cross entropy loss before and after normalization.
\begin{figure}[H]
    \centering
    \includegraphics[width=0.48\textwidth]{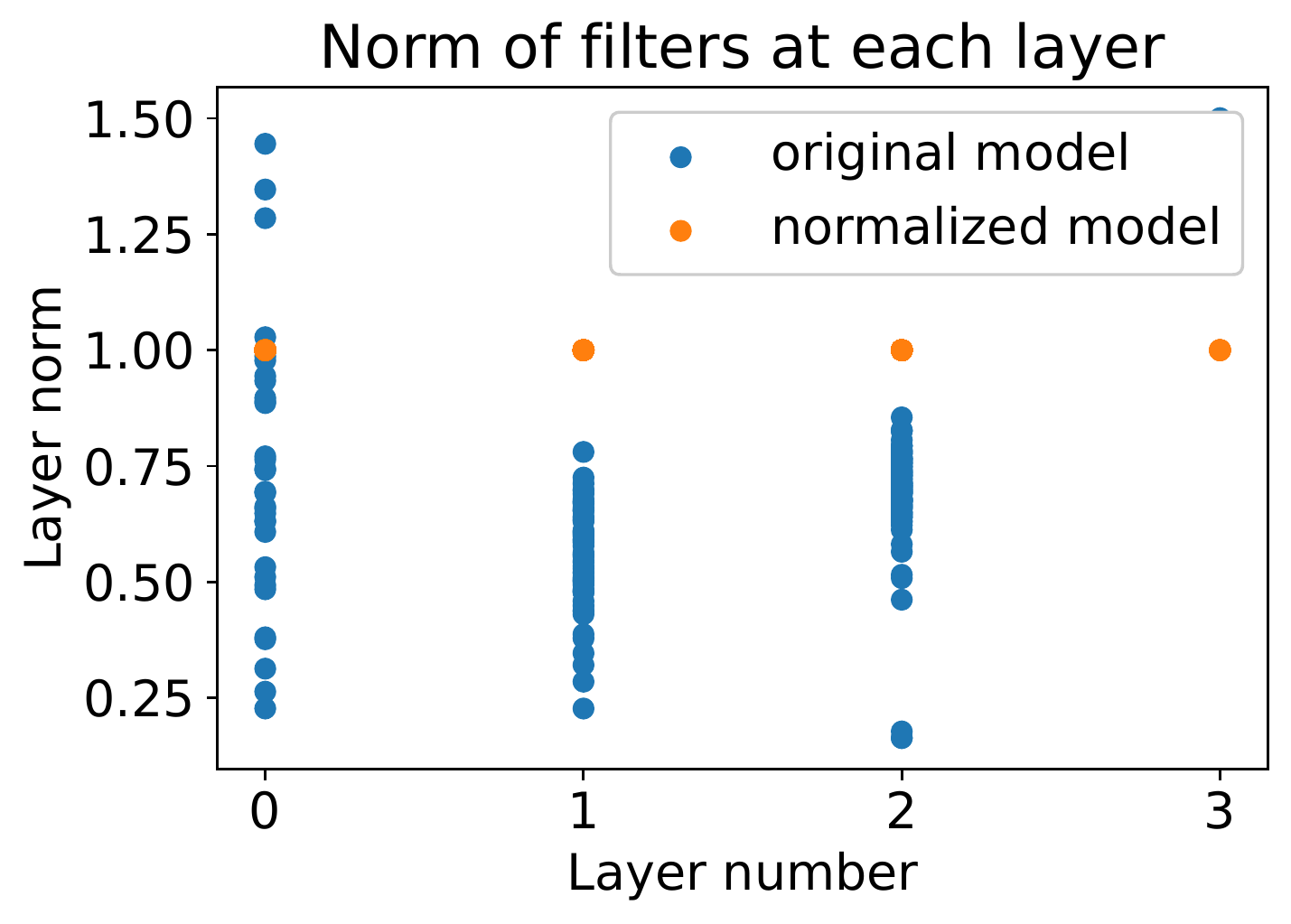}
    \includegraphics[width=0.48\textwidth]{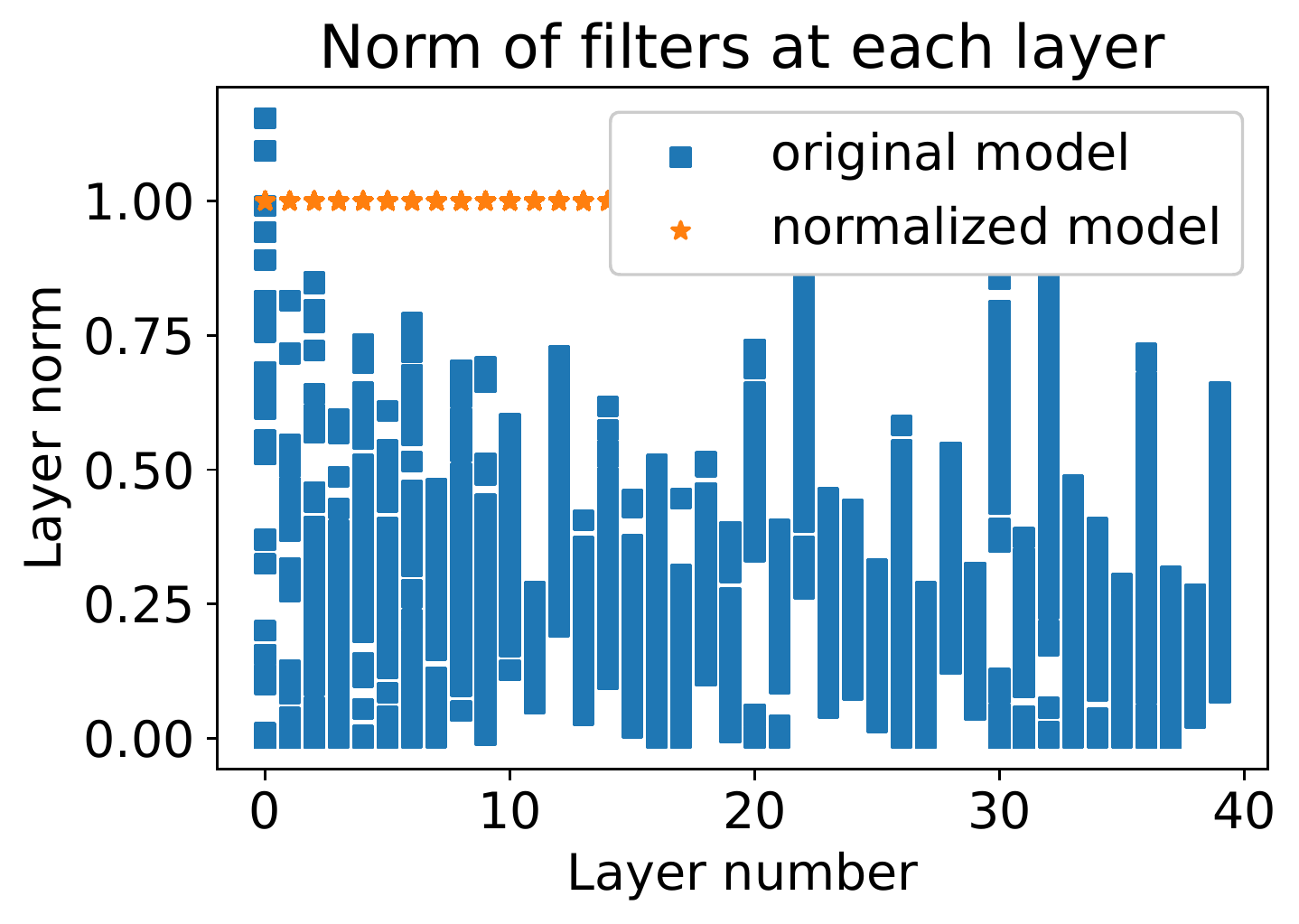}
    \caption{Norm of the filter in each layer of (\textbf{left}) LeNet and (\textbf{right}) ResNet18 networks trained on CIFAR-10 data set before and after normalization.}
    \label{fig:norm}
\end{figure}
\begin{table}[H]
    \centering
    \begin{tabular}{|c|c|c|}
        \hline
         Model & Loss before normalization & Loss after normalization \\ \hline
         LeNet & 0.0006375186720272088 & 0.0006375548382939456 \\ 
         ResNet18 & 0.0014627227121118522 & 0.0014617550651283215 \\ \hline
    \end{tabular}
    \caption{Validation loss for CIFAR-10 data set before and after normalization.}
    \label{tab:norm}
\end{table}

\section{SHARPNESS MEASURES}
\label{sec:FM}

In this section, we define various sharpness measures and the corresponding algorithms utilized in section~\ref{sec:sharp_v_gen} of the main paper to study the interplay between sharpness and generalization gap.

Let $m$ denote the number of samples in the training data set. Let $\theta$ denote the set of network parameters. $L(\theta)$ is the loss function, whose gradient is denoted as $\nabla_{\theta}L(\theta)$ and the Hessian matrix is denoted as $H=\nabla_\theta^2 L(\theta)$. Also, $\lambda_{i}(H)$ is the $i^{th}$ eigenvalue of the Hessian.  
\subsection{PAC-Bayes measure}
\begin{definition}[PAC-Bayes measure~\citep{jiang2019fantastic}]
For any $\delta > 0$ and a prior and posterior distributions over the parameters given respectively as $P =\mathcal{N}(\theta^{0}, \sigma^{2}I)$ and $Q= \mathcal{N}(\theta^{*}, \sigma^{2}I)$, with probability $1 - \delta$ the following bound holds:

\begin{equation}
  \mathbb{E}_{\theta\sim Q}[L^{true}(\theta)] \leq \mathbb{E}_{\theta\sim Q}[L(\theta)] + \sqrt{\frac{\frac{|| \theta^{*} - \theta^{0}||_{2}^{2}}{2\sigma^{2}} + ln(\frac{m}{\delta})}{2(m - 1)}},
\end{equation}
where $m$ is the size of the training data, $L^{true}$ is the true risk under unknown data distribution, and $\theta^{0}$ are the initial network parameters. The PAC-Bayes sharpness measure is then defined as:

\begin{align}
    \label{eq:sharp_meas_7}
        \mu_{PAC-Bayes} = \frac{|| \theta^{*} - \theta^{0}||_{2}^{2}}{4\sigma^{2}} + 0.5\ln\left(\frac{m}{\delta}\right),
\end{align}

where $\sigma$ is chosen such that $\mathbb{E}_{\theta\sim Q}[L(\theta)] - L(\theta^*) \leq 0.1$.
\end{definition}

\begin{algorithm}[H]
    \caption{$\mu_{PAC-Bayes}$}
    \label{alg:pac_bayes}
    \textbf{Input:} $\theta^*$: final weights, $M$: MC iterations, $\psi$: tolerance \\
    \textbf{Output:} $\sigma$
    \begin{algorithmic}
    \STATE $\sigma_{min} = \text{FLOAT\_EPSILON\_MIN}$ \\
    \STATE $\sigma_{max} = \text{FLOAT\_EPSILON\_MAX}$\\
    \WHILE{TRUE}
        \STATE $\sigma = (\sigma_{min} + \sigma_{max})) / 2$ \\
        \STATE $\hat{l} = 0$ \\
        \FOR{j = 1 to $M$} 
            \STATE $\theta = \theta^* + \mathcal{N}(0, \sigma^{2}I)$ \\
            \STATE $\hat{l} += \hat{L}(\theta)$
        \ENDFOR 
        \STATE $\hat{l} = \hat{l} / M$
        \STATE $d = \hat{l} - L(\theta^*)$
        \IF{$\epsilon - \psi \leq d \leq \epsilon + \psi$}
            \RETURN $\sigma$
        \ENDIF
        \IF{$d < \epsilon - \psi$}
            \STATE $\sigma_{min} = \sigma$
        \ELSIF{$d > \epsilon + \psi$}
            \STATE $\sigma_{max} = \sigma$
        \ENDIF
    \ENDWHILE
    \end{algorithmic}
\end{algorithm}
\subsection{$\epsilon$-sharpness}
\begin{definition} [$\epsilon$-sharpness~\citep{keskar2016largebatch}]
\label{eq:sharp_meas_6}
$\epsilon$-flatness captures the maximal range of the perturbations of the parameters that do not increase the value of the loss function by more than $\epsilon$ and is defined as the maximal value of the radius $R$ of the Euclidean ball $B_2(\theta^{*},R)$ centered at the local minimum ($\theta^{*}$) of the loss function such that $\forall_{\theta\in B_2(\theta^{*},R)} \mathcal{L}(\theta) - \mathcal{L}(\theta^{*}) < \epsilon$. $\epsilon$-sharpness is the inverse of $\epsilon$-flatness.
\end{definition}
\begin{algorithm}[H]
    \label{alg:eps_sharpness}
    \caption{$\epsilon$ - sharpness }
    \textbf{Input:} $\theta^*$: final weights, $\psi $: tolerance, $\epsilon$:  target deviation in loss  \\
    \textbf{Output:} $\epsilon$ - sharpness
    \begin{algorithmic}
        \STATE $\eta_{max}=$FLOAT\_EPSILON\_MIN
        \WHILE{TRUE}
            \STATE $\theta = \theta^* + \eta_{max} \nabla L(\theta^*)$ \\
            \STATE d = $L(\theta)  - L(\theta^{*})$ \\
            \IF{$d < \epsilon$}
                \STATE $\eta_{max} = \eta_{max}*10$ \\
            \ENDIF
        \ENDWHILE
        \STATE $\eta_{min}=$FLOAT\_EPSILON\_MIN
        \WHILE{TRUE}
            \STATE $\eta = (\eta_{max} + \eta_{min}) / 2$
            \STATE $\theta = \theta^* + \eta \nabla L(\theta^*)$ $\backslash \backslash$ step in full-data gradient direction 
            \STATE d = $L(\theta)  - L(\theta^{*})$ \\
            \IF{$\epsilon - \psi \leq d \leq \epsilon + \psi$}
                \RETURN $\frac{1}{||\theta - \theta^*||}$
            \ENDIF
            \IF{$d < \epsilon - \psi$}
                \STATE $\eta_{min} = \eta$
            \ELSIF{$d > \epsilon + \psi$}
                \STATE $\eta_{max} = \eta$
            \ENDIF
        \ENDWHILE
    \end{algorithmic}
\end{algorithm}

\subsection{Fisher Rao Norm}% 
\begin{definition}[Fisher Rao Norm ~\citep{liang2019fisher}]
\label{eq:sharp_meas_5}
The Fisher-Rao Norm (FRN), under appropriate regularity conditions~\citep{liang2019fisher}, is approximated as $\theta^{*T} \mathbb{E}[\nabla^{2} L(\theta^*)] \theta^*$.
\end{definition}
FRN is calculated as $\theta^{*T}hvp(\theta^*)$ where hvp is the hessian vector product function.
\subsection{Hessian based measures ($\lambda_{max}(H)$, $Trace(H)$, $d_{eff}(H)$, and $\|H\|_F$)}
\label{alg:eig_max}
\begin{definition}
The Hessian-based sharpness measures computed at the solution $\theta^*$ are: the Frobenius norm of the Hessian ($||H||_{F}$), trace of the Hessian (Trace ($H$), the largest eigenvalue of the Hessian ($\lambda_{max}(H))$, and the effective dimensionality of the Hessian~\citep{maddox2020rethinking, mackay1992bayesian} defined as $d_{eff} = \sum_{i=1}^{n}\frac{\lambda_{i}}{\lambda_{i} + 1}$, where $\lambda_i$ is the $i^{\text{th}}$ eigenvalue of the Hessian.
\end{definition}

We compute $100$ eigenvalues of the Hessian of the loss function using Stochastic Lanczos quadrature algorithm as described in ~\citep{ghorbani2019investigation}.  $\lambda_{max}(H)$, $Trace(H)$, and $d_{eff}(H)$ can be easily estimated from the set of $100$ eigen values. Note that for any matrix $A$,  $||A||_F^2 = \mathbf{E}_{v}[||Av||_{2}^{2}]$, where $v \sim \mathcal{N}(0,I)$. Therefore, we use the algorithm the following algorithm to efficiently compute the Frobenius norm. 
\begin{algorithm}[H]
    \caption{$||H||_{F}$}
    \label{alg:fro_norm}
     \textbf{Input:} $M$: number of iterations, $hvp(v)$: Hessian-vector product \\
     \textbf{Output:} $||H||_{F}$
    \begin{algorithmic}
        \STATE $out \leftarrow 0$ \\
        \FOR{k = 1 to M}
            \STATE $v^{k} \sim \mathcal{N}(0, I)$
            \STATE out += $||hvp(v^{k})||_{2}^{2}$ 
        \ENDFOR
        \RETURN $\sqrt{out / M}$
    \end{algorithmic}
\end{algorithm}
\subsection{Shannon Entropy}
Shannon entropy is a classical measure of sharpness which does not consider sharpness of the loss function in the parameter space and therefore is fundamentally different from the rest of the measures.
\begin{definition}[Shannon Entropy~\citep{pereyra2017regularizing}]
Let $f_{\theta^*}(x)[j]$ denote the probability of $j^{th}$ class predicted by the deep learning model $f_{\theta^*}$ for input data $x$, and let $\kappa$ be the total number of classes. The Shannon entropy at $\theta^*$ is given as:
\begin{align}
    \mu_{entropy} \!=\! -\frac{1}{m}\sum_{i = 1}^{m} \sum_{j = 1}^{\kappa} f_{\theta^*}(x_{i})[j]\log(f_{\theta^*}(x_{i})[j])
    \label{eq:sharp_meas_10}
\end{align}
\end{definition}
\begin{algorithm}[H]
    \caption{$\mu_{entropy}$}
    \textbf{Input:}  $f_{\theta^*}$: trained model \\
    \textbf{Output:} Shannon Entropy
    \begin{algorithmic}
        \STATE $\text{out} \leftarrow 0$
        \FOR{i = 1 to N}
            \FOR{j = 1 to K}
                \STATE $\text{out} += f_{\theta^*}(x_{i})[j] \times log( f_{\theta^*}(x_{i})[j])$
            \ENDFOR
        \ENDFOR
        \RETURN -out / N
    \end{algorithmic}
\end{algorithm}
\subsection{Norm of the Gradient of Local Entropy}
\begin{definition}[Gradient of the Local Entropy~\citep{DBLP:journals/corr/ChaudhariCSL17}]
The sharpness of the loss landscape at the solution $\theta^*$ is computed as the norm of the gradient of the local entropy (LE). The local entropy is 

\begin{align}
    F(\theta^*, \gamma) &= \log\Big(\int_{\theta'} \exp( - L(\theta') - \frac{\gamma}{2} ||\theta^* - \theta'||_{2}^{2}) d\theta' \Big),
    \nonumber
\end{align}
where $\gamma$ is the scoping parameter and the norm of its gradient is given as
\begin{align}
    \label{eq:sharp_meas_8}
    \mu_{\text{LE}} = ||\nabla_{\theta^*}F(\theta^*, \gamma)|| = || \gamma(\theta^*-\mathbb{E}_{\theta'\sim\mathcal{G}}[\theta'])|| 
\end{align}
where $\mathcal{G}(\theta'; \theta, \gamma) \propto \exp\Big(-\Big( \frac{1}{m} \sum_{i=1}^{m} L(\theta')\Big) - \frac{\gamma}{2} ||\theta - \theta'||_{2}^{2}\Big)$ is the Gibbs distribution.
\end{definition}

It is prohibitive to compute the local entropy, as opposed to its gradient. We utilized the norm of the gradient of the local entropy to describe the sharpness at the minimum $\theta^*$, instead of using the local entropy. We utilize the EntropySGD algorithm, however instead of updating weight we compute the norm of the gradient.
\begin{algorithm}[H]
    \caption{$\mu_{LE}$}
    \textbf{Input:} $\theta^*$: final weights, $L$: Langevin iterations, $\gamma:$ scope, $\eta:$ step size, $\epsilon:$ noise level \\
    \textbf{Output:} $\mu_{LE}$
    \begin{algorithmic}
        \STATE $\theta', \mu \leftarrow \theta^*$
        \FOR{k = 1 to L}
            \STATE B $\leftarrow$ sample mini batch \\
            \STATE $g =  \nabla_{\theta'} L(\theta', B) - \gamma (\theta - \theta')$ \\
            \STATE $\theta' \leftarrow \theta' - \eta g + \sqrt{\eta} \epsilon \mathcal{N}(0, I)$ \\
            \STATE $\mu \leftarrow (1 - \alpha)\mu + \alpha \theta'$ \\
        \ENDFOR
        \RETURN $||\gamma (\theta^* - \mu)||$
    \end{algorithmic}
\end{algorithm}

\subsection{LPF based measure}

\begin{algorithm}[H]
    \caption{LPF}
    \textbf{Input:}  $\theta^*$: final weights, $\sigma$: standard deviation of Gaussian filter kernel, $M$: MC iterations \\
    \textbf{Output:} $(L \circledast K) (\theta^*)$
    \begin{algorithmic}
        \STATE $out \leftarrow 0.0$
        \FOR{k = 1 to M}
            \STATE $\tau = \mathcal{N}(0,\sigma I)$
            \STATE $out += L(\theta^* + \tau)$
        \ENDFOR
        \RETURN $out /= M$
    \end{algorithmic}
\end{algorithm}

\begin{rmk}
$\sigma$ is set to $\sigma\!=\!0.01$ in PAC Bayes measure  so that the deviation of loss is $\approx\!0.1$, as recommended by \cite{jiang2019fantastic} (see their discussion). To be consistent, $\sigma\!=\!0.01$ in LPF and $\epsilon\!=\!0.1$ in the $\epsilon$-sharpness measure.
\end{rmk}

\section{SENSITIVITY OF THE SHARPNESS MEASURES TO THE CHANGES IN THE CURVATURE OF THE SYNTHETICALLY GENERATED LANDSCAPES}
\label{sec:Sensitivity}

\begin{figure}[H]
    \centering
    \includegraphics[width=\textwidth]{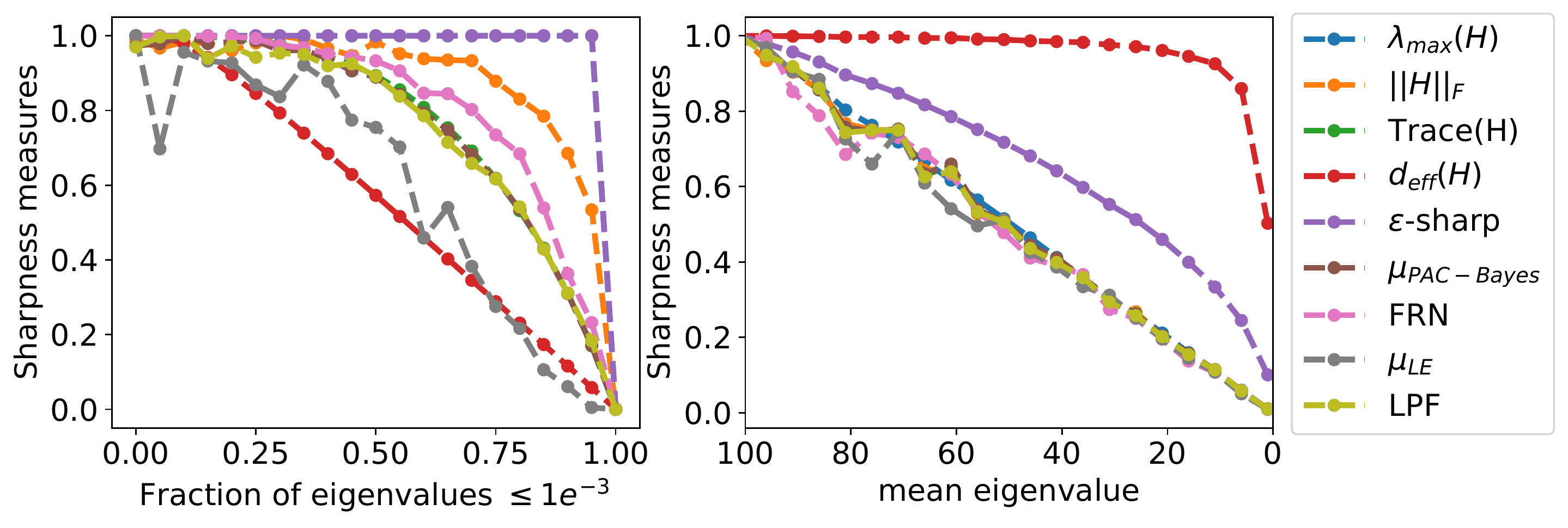}
    \caption{\textbf{Left}: The behavior of the normalized sharpness measures when the fraction of the eigenvalues of the Hessian below $1e-3$ increases from $0$ to $1$. \textbf{Right}: The behavior of the normalized sharpness measures when the mean eigenvalue of the Hessian is decreased from $100$ to $1$.}
    \label{fig:inv_cvx_func}
\end{figure}

We consider a quadratic minimization problem:
\begin{align}
    \min_{\theta} f(\theta), \:\:\:\text{where}\:\:\: f(\theta) = \frac{\theta^{T} H \theta}{2}. 
\end{align}
Note that $\nabla f = H \theta$, $\nabla^{2} f = H $ and $\theta^* = \argmin_{\theta} \frac{\theta^{T} H \theta}{2} = 0$. 

In the first experiment, we randomly sample the Hessian matrix $H$ of dimension $100$ and set its $K$ smallest eigenvalues uniformly in the interval [$1e-5$, $1e-3$]. As the value of $K$ is increased from $0$ to $100$, the loss surface becomes flatter. In the second experiment, we set the eigenvalues of Hessian $H$ uniformly as $\mathcal{U}(K - 0.10*K, K + 0.10*K)$, where K is the mean eigenvalue. Intuitively, as the value of $K$ is decreased from $100$ to $1$, the loss surface becomes flatter.

On the left plot in Figure~\ref{fig:inv_cvx_func} we show the value of the normalized sharpness measures against the fraction of eigenvalues that are $<1e-3$. Thus as we move on the $x$-axis of this plot from left to right, the number of directions along which the loss landscape is flat increases. In this case LPF is the second best measure, after $d_{eff}(H)$, where by a good measure we understand the one that is sensitive to the changes in the loss landscape. 
On the right plot in Figure~\ref{fig:inv_cvx_func} we show the value of the normalized sharpness measures against the mean eigenvalue of the Hessian. Thus as we move on the $x$-axis of this plot from left to right, the landscape along all directions becomes flatter. In this case all measure, except $\epsilon$-sharpness and $d_{eff}(H)$, are sensitive to the changes in the loss landscape. $d_{eff}(H)$ shows poor sensitivity to those changes. These experiments also well justify the choice of LPF based sharpness measure for the algorithm proposed in this paper.

\section{TRAINING DETAILS FOR SECTION~\ref{sec:sharp_v_gen}}
\label{sec:ShvGen}

\subsection{Sharpness vs Generalization (training details for Section~\ref{sec:sharpness_vs_generalization})}
\label{sec:training_details_sharp_vs_gen}
Following the experimental framework presented in ~\citep{jiang2019fantastic, dziugaite2020search}, we trained $2916$ ResNet18 models on CIFAR-10 data set by varying different model and optimizer hyper-parameters and $3$ random seeds. Each model was trained using cross entropy loss function and mSGD optimizer for $300$ epochs. The learning rate set to $0.1$ and dropped by a factor of $0.1$ at epoch $100$ and $200$. Since each model is trained with different hyper-parameters it is easy to overfit some models while under-fitting others. To mitigate this effect, we train each model until the cross entropy loss reaches the value of $\approx 0.01$. Any model that does not reach this threshold is discarded from further analysis. We compute Kendall ranking correlation coefficient between the hyper-parameters and generalization gap and report the results in Table \ref{tab:emp_order_resnet}). After convergence, we balance each network according to the normalization scheme presented in Section ~\ref{sec:bal} and compute sharpness measures using algorithms presented in Section~\ref{sec:FM}. All the models were trained on NVIDIA RTX8000, V100, and GTX1080 GPUs on our high performance computing cluster. The total computational time is $\sim 9000$ GPU hours. 

\begin{table}[!ht]
\vspace{0.1in}
    \centering
    \begin{tabular}{|c|c|c|c|c|c|c|c|}
        \hline
        Measure  & mo & width & wd & lr & bs & skip & bn \\ \hline
        Emp order &  -0.9712 & -0.6801 & -0.3135 & -0.7930 & 0.9877 & -0.2692 & -0.0955 \\ \hline
    \end{tabular}
    \caption{Ranking correlation between hyper-parameter and generalization gap. The correlation sign is consistent with our intuitive understanding.}
    \label{tab:emp_order_resnet}
\end{table}

\subsection{Sharpness vs Hyper-parameters (additional experimental results for Section~\ref{sec:sharpness_vs_generalization})}
\label{sec:correxp}
As highlighted in section ~\ref{sec:sharpness_vs_generalization}, we compute the Kendall ranking correlation between hyper-parameter and sharpness measures (Table~\ref{tab:sharp_v_hyperparam}) on ResNet18 models trained on CIFAR-10 data set as described section~\ref{sec:training_details_sharp_vs_gen}. We observe that momentum and weight decay are strongly negatively correlated to sharpness i.e increasing both hyper-parameters leads to flatter solution. It is also widely observed that increasing both these parameters also lead to lower generalization gap. Therefore, the table can provide us guidelines on how to design or modify deep architectures. This direction of research will be investigated in the future work. 

\begin{table}[!ht]
    \centering
    \begin{tabular}{|l|c|c|c|c|c|c|c|}
        \hline
        Measure                & mo & width & wd & lr & bs & skip & bn \\ \hline
        $\lambda_{\max}(H)$ & -0.891 & -0.063 & -0.291 & -0.692 & 0.981 & 0.263 & 0.996   \\
        $\|H\|_F$ & -0.930 & 0.029 & -0.474 & -0.826 & 0.994 & 0.218 & 0.996  \\
        Trace (H) & -0.942 & -0.127 & -0.381 & -0.745 & 0.984 & -0.199 & 0.987 \\
        $d_{eff}$ & -0.360 & -0.137 & -0.147 & -0.139 & 0.335 & -0.268 & 0.047 \\
        $\epsilon$-sharpness & -0.781 & 0.147 & -0.321 & -0.772 & 0.967 & 0.509 & 1.000   \\ 
        $\mu_{PAC-Bayes}$ & -0.994 & 0.981 & -0.669 & -0.971 & 0.996 & 0.322 & 0.996  \\ 
        FRN &  -0.824 & -0.226 & -0.037 & -0.545 & 0.855 & -0.605 & 1.000 \\ 
        $\mu_{entropy}$ & -0.723 & -0.174 & 0.246 & -0.352 & 0.718 & 0.613 & 0.950  \\ 
        $\mu_{LE}$ & -0.169 & 0.954 & -0.036 & -0.112 & 0.117 & 0.013 & 0.241 \\ \hline
        LPF & -0.994 & 0.874 & -0.767 & -0.934 & 0.998 & -0.543 & 0.954  \\ \hline
    \end{tabular}
    \caption{Kendall rank correlation coefficient between various sharpness measures (rows) and hyper-parameters (columns). }
    \label{tab:sharp_v_hyperparam}
\end{table}

%\begin{table}[H]
%    \centering
%    \begin{tabular}{c|c|c|c|c|c|c|c}
%        \hline
%        Measure  & mo & width & wd & lr & bs & skip & bn \\ \hline
%        Emp order &  -0.9712 & -0.6801 & -0.3135 & -0.7930 & 0.9877 & -0.2692 & -0.0955 \\ \hline
%    \end{tabular}
%    \caption{Kendall rank correlation coefficient between hyper-parameters and generalization gap. The correlation sign is consistent with the canonical ordering given in~\citep{jiang2019fantastic} and intuitive understanding.}
%    \label{tab:emp_order_resnet}
%\end{table}

\subsection{Training details and additional experimental results for Section~\ref{sec:sharpness_vs_generalization_under_noise}  (Sharpness versus generalization under data and label noise)}
\label{sec:noiseexp}

In order to evaluate the performance of sharpness measures to explain generalization in presence of data and label noise, we trained $10$ ResNet18 models with varying level of label noise and $20$ ResNet18 model with varying level of data noise on the CIFAR-10~\citep{CIFAR10} data set (Section~\ref{sec:sharpness_vs_generalization_under_noise} in the main paper). All models were trained for $350$ epochs using cross entropy loss and mSGD optimizer with a batch size of $128$, weight decay of $5e^{-4}$ and momentum set to $0.9$. The learning rate was set to $0.1$ and dropped by a factor of $0.1$ at epoch $150$ and $200$. The models were trained on NVIDIA RTX8000, V100, and GTX1080 GPUs on our high performance computing cluster. The total computational time is $\sim 600$ GPU hours. In Figure~\ref{fig:noise_corr}, we plot the values of normalized sharpness measures and generalization gap (averaged over $5$ seeds) for varying level of data noise. We also report the Kendall rank correlation coefficient in the figure title. 
\begin{figure}[!ht]
    \centering
    \includegraphics[width=0.95\textwidth]{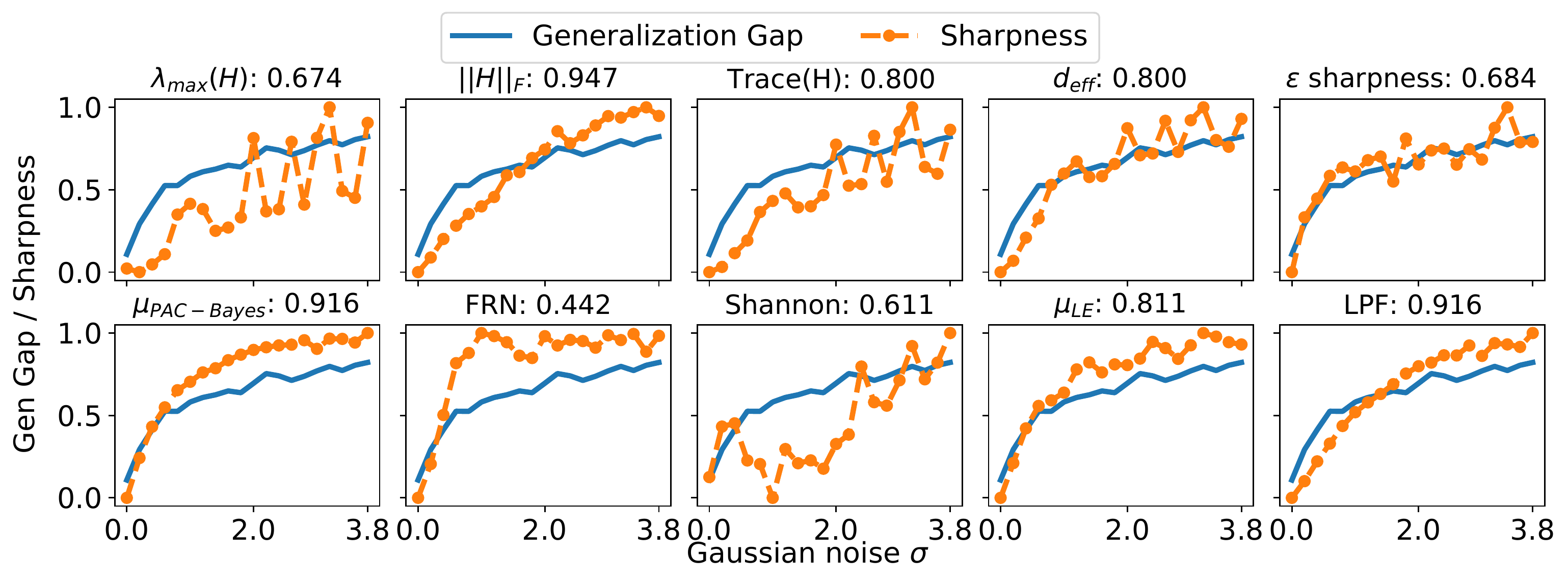}
    \caption{Normalized sharpness measures and generalization gap for varying levels of data noise. Kendall rank correlation coefficient between generalization gap and sharpness with increasing data noise are provided in the parenthesis of figure titles.}
    \label{fig:noise_corr}
\end{figure}

\subsection{Sharpness and double descent phenomenon}
\label{sec:sharpness_vs_generalization_under_double_decent}
\begin{figure}[!ht]
    \centering
    \includegraphics[width=0.95\textwidth]{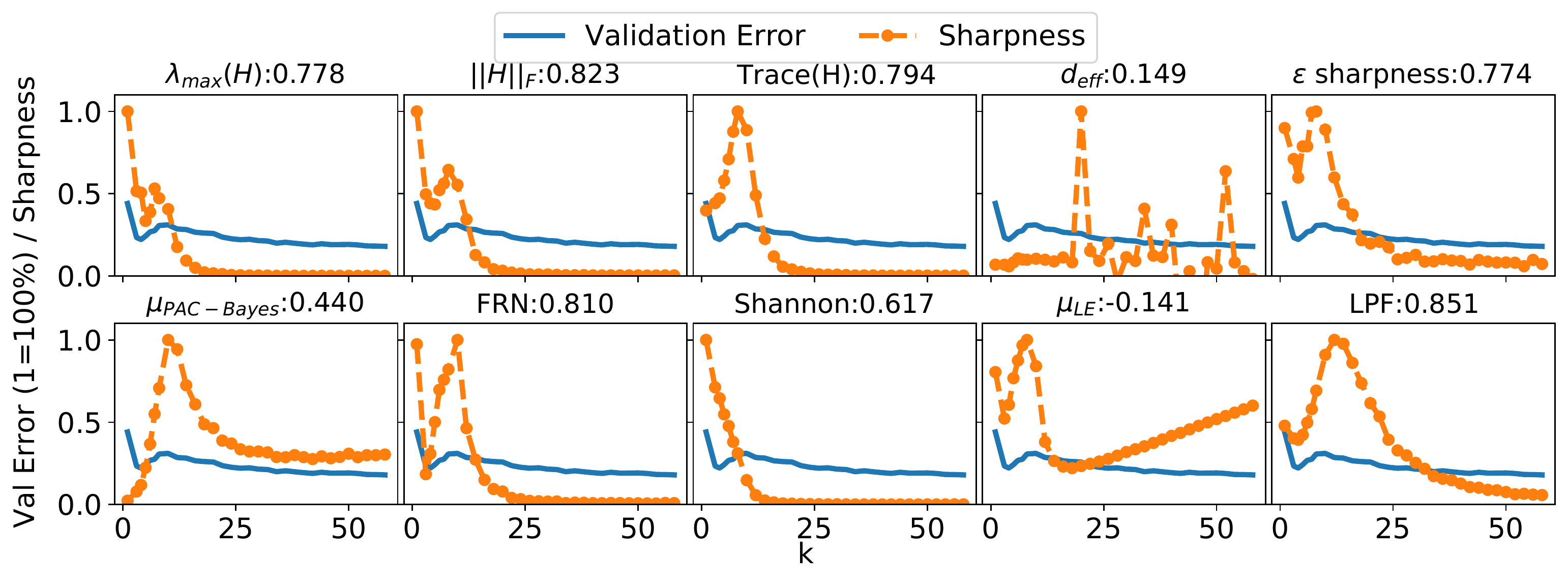}
    \caption{Normalized sharpness measures and validation error for ResNet18 when varying network width. Specifically, the widths of the consecutive layers of the network were set to $[k,2k,4k,8k]$, where $k=2,4,\dots,64$. Kendall rank correlation coefficient is provided in the titles above the figures.}
    \label{fig:resnet18_dd}
    \vspace{0.1in}
\end{figure}

As we increase the complexity of deep networks, we sometimes observe the double descent phenomenon, where increasing the network capacity may lead to the temporal increase of the test loss. We evaluate the sharpness measures against the double descent behavior for ResNet18 model. We follow the experimental framework presented in ~\citep{Nakkiran2020Deep} and set the widths of the consecutive layers of the ResNet18 model as $[k,2k,4k,8k]$. The value of k is varied in the range $[1,64]$. Note that for a standard ResNet18 model $k=64$. The models were trained on CIFAR-10 data set for $4000$ epochs with $20\%$ label noise using an Adam optimizer~\citep{DBLP:journals/corr/KingmaB14} with a batch size of $128$ and a constant learning rate set to $1e^{-4}$. All the models were trained on NVIDIA RTX8000, V100, and GTX1080 GPUs on our high performance computing cluster. The total compute time is $\sim 300$ GPU hours. In Figure~\ref{fig:resnet18_dd} we plot the validation error and normalized sharpness measures when varying network width. It can be observed that most of the flatness measures can track the double descent phenomenon, including $\lambda_{\max}(H)$, $\|H\|_F$, Trace (H), $\epsilon$-sharpness, FRN, and LPF. At the same time, LPF based measure admits the highest Kendall rank correlation coefficient from among all the flatness measures.

\section{TRAINING DETAILS FOR SECTION~\ref{sec:experiments}}\label{sec:optexp}
\subsection{Codes}
\label{sec:Codes}
We coded all our experiments in PyTorch~\citep{paszke2017automatic}. In all the experiments, we utilized the code for the SAM optimizer~\citep{SAM} available at \href{https://github.com/davda54/sam}{https://github.com/davda54/sam} that we treated as a baseline code for developing Adaptive Smooth with Denoising (ASO) and LPF-SGD. In terms of mSGD, this optimizer is included in the PyTorch environment. 
For E-SGD, we utilized public codes released by ~\citet{pittorino2021entropic} available at \href{https://openreview.net/forum?id=xjXg0bnoDmS}{https://openreview.net/forum?id=xjXg0bnoDmS}. Finally, the codes for our method, LPF-SGD, will be open-sourced and publicly released.
\subsection{Image Classification}\label{sec:IC}
\subsubsection{Image Classification: ResNets on CIFAR 10/100 data set with no data augmentations} 
\label{subsec:resnet}
Image classification models such as ResNets~\citep{he2016deep} are prone to extreme over-fitting when trained without any data augmentation. The flatness based optimizers such as SAM and E-SGD prevent model overfitting by seeking flat minimizers. Therefore, we first compare the performance of mSGD, E-SGD, ASO, SAM, and LPF-SGD on ResNet architectures without performing any data augmentation. Specifically, we trained ResNet18 model available in torchvision~\citep{paszke2017automatic} on TinyImageNet~\citep{ILSVRC} and ImageNet~\citep{ILSVRC} data sets, and a modified version of ResNet18,50,101~\citep{he2016deep} available at ~\href{https://github.com/kuangliu/pytorch-cifar}{https://github.com/kuangliu/pytorch-cifar} on CIFAR-10 and CIFAR-100~\citep{CIFAR10} data sets. The modification was small and was only done to accommodate $32 \times 32$ image sizes in CIFAR data set. We also train a LeNet~\citep{LeCun:1989:BAH:1351079.1351090} model available at ~\href{https://github.com/pytorch/examples/blob/master/mnist/main.py}{https://github.com/pytorch/examples/blob/master/mnist/main.py} on MNIST~\citep{MNIST} data set. 

We set the radius for SAM to $\rho=0.05$ and keep it fixed for the entire training procedure, as suggested by the authors (note that the authors did extreme grid search and found little to no difference when changing parameter $\rho$). For E-SGD, we performed a grid search over $\gamma_{0} = \{0.1, 0.5, 0.05\}$, $\gamma_{1} = \{0.0001, 0.0005, 0.005\}$ and $\eta = \{0.05,0.01,0.1 \}$. The parameters $(\gamma_0, \gamma_1, \eta) = (0.5,0.0001,0.05)$ were found to be the best performing parameters. The SGLD iterations ($M$) were set to $5$. The learning rate of the base optimizer was also increased by a factor of $M$. For ASO, the parameter $a$ was tuned by performing a grid search over the set $\{ 0.001, 0.005, 0.009, 0.0375, 0.07 \}$. We find $a = 0.009$ to be the best performing hyper-parameter as suggested by the authors. The hyper-parameters common to mSGD, E-SGD, ASO, SAM, and LPF-SGD optimizers are provided in the Table~\ref{tab:exp1_1}, while the individual hyper-parameters for each optimizer are provided in Table~\ref{tab:exp1_2}. Note that for E-SGD the total number of training epochs and the epoch at which we drop the learning rate are divided by $M=5$, i.e.: the total number of stochastic gradient Langevin dynamics steps per iteration. The models were trained on NVIDIA RTX8000, V100, and GTX1080 GPUs on our high performance computing cluster. The total computational time is $\sim 1500$ GPU hours. The plots showing epoch vs error curves are captured in Figure~\ref{fig:exp1_1} and Figure~\ref{fig:exp1_2}.

\begin{table}[!ht]
    \setlength\tabcolsep{2.2pt}
    \renewcommand{\arraystretch}{1.3}
    \centering
    \begin{tabular}{|c|c|c|c|c|c|c|}
        \hline
        Dataset & Model & BS & WD & MO & Epochs  & LR (Policy)\\ \hline
        MNIST & LeNet & 128 & $5e^{-4}$ & 0.9 & 150 & 0.01 (x 0.1 at ep=[50,100])  \\ 
        CIFAR10, 100 & ResNet-18, 50, 101 & 128 & $5e^{-4}$ & 0.9 & 200 & 0.1 (x 0.1 at ep=[100,120])  \\ 
        TinyImageNet & ResNet-18 & 128 & $1e^{-4}$ & 0.9 & 100 & 0.1 (x 0.1 at ep=[30, 60, 90]) \\ 
        ImageNet & ResNet-18 & 256 & $1e^{-4}$ & 0.9 & 100 & 0.1 (x 0.1 at ep=[30, 60, 90]) \\ \hline
    \end{tabular}
    \caption{Training hyper-parameters common to all optimizers used for obtaining Table~\ref{tab:exp1}. BS: batch size, WD: weight decay, and MO: momentum coefficient.}
    \vspace{10pt}
    \label{tab:exp1_1}
\end{table}

\begin{table}[!ht]
    \setlength\tabcolsep{2.2pt}
    \renewcommand{\arraystretch}{1.3}
    \centering
    \begin{tabular}{|c|c|c|c|c|c|}
        \hline
        \multirow{2}{*}{Dataset} & \multirow{2}{*}{Model} & E-SGD & ASO  & SAM & LPF-SGD \\ \cline{3-6}
        &                       & ($\gamma_0$, $\gamma_1$, $\eta$, M)& a & $\rho$  & (M, $\gamma$ (policy)) \\ \hline 
        MNIST & LeNet           & (0.5,0.0001,0.05,5) & 0.009 & 0.05                &   (1,0.001  (fixed))   \\
        CIFAR & ResNet18,50,101 & (0.5,0.0001,0.05,5) & 0.009 & 0.05                &   (1,0.002  (fixed)) \\ 
        TinyImageNet& ResNet18  & (0.5,0.0001,0.05,5) & 0.009 & 0.05                &   (1,0.001  (fixed)) \\ 
        ImageNet    & ResNet18  & (0.5,0.00001,0.05,5) & 0.009 & 0.05    &   (1,0.0005 (fixed)) \\ \hline
    \end{tabular}
    \caption{Summary of E-SGD, ASO, SAM and LPF-SGD hyper-parameters used for obtaining Table~\ref{tab:exp1}.}
    \label{tab:exp1_2}
\end{table}

Finally, after convergence, we balance our network using the normalization scheme described in section~\ref{sec:bal} and compute various sharpness measures on the best performing model on CIFAR-10 and CIFAR-100 data set for baseline mSGD and top two performing optimizers (SAM and LPF-SGD). 
% Note that computing all sharpness measures is very compute intensive hence,  
Table~\ref{tab:exp1_sharp} shows normalized sharpness measures and the corresponding validation error. Note that LPF-SGD leads to a lower value of the LPF based sharpness measure and a smaller test error. 

\begin{table}[!ht]
    \vspace{0.1in}
    \setlength\tabcolsep{3pt}
    \renewcommand{\arraystretch}{1.3}
    \centering
    \begin{tabular}{|c|c|c|c|c|c|c|c|c|c|}
        \hline
        Data & Model & Opt & $||H||_{F}$ & $\epsilon$-sharp & $\mu_{PAC}$ & FRN & $\mu_{entropy}$ & LPF & val-err \\ \hline
        \multirow{9}{*}{CIFAR10}& \multirow{3}{*}{ResNet18}& mSGD    & 1.00&0.52&1.00&1.00&1.00&1.00&11.49  \\
                                                        & & SAM     & 0.34&0.36&0.70&0.45&0.61&0.40&10.00 \\
                                                        & & LPF-SGD & 0.71&1.00&0.58&0.46&0.21&0.17&9.04 \\ \cline{2-10}
        
        &\multirow{3}{*}{ResNet50} & mSGD & 1.00&0.43&1.00&1.00&1.00&1.00&10.21 \\ 
        & & SAM                          & 0.37&0.41&0.66&0.64&0.60&0.45&8.81\\ 
        & & LPF-SGD                         & 0.79&1.00&0.88&0.56&0.24&0.28&8.60\\ \cline{2-10}
        
        & \multirow{3}{*}{ResNet101} & mSGD  & 1.00&0.59&1.00&1.00&1.00&1.00&9.49 \\ 
        &                            & SAM  & 0.36&0.46&0.70&0.59&0.67&0.51&8.33 \\ 
        &                            & LPF-SGD & 0.75&1.00&0.99&0.49&0.33&0.34&8.69 \\ \hline
    
        \multirow{9}{*}{CIFAR100} & \multirow{3}{*}{ResNet18}& mSGD  & 0.18&0.64&1.00&0.29&0.65&1.00&38.29  \\ 
        &                                                    & SAM  & 0.09&0.47&0.78&0.20&0.50&0.52&36.17 \\ 
        &                                                    & LPF-SGD & 1.00&1.00&0.59&1.00&1.00&0.90&30.02  \\ \cline{2-10}
    
        & \multirow{3}{*}{ResNet50} & mSGD  & 0.54&0.50&1.00&0.62&1.00&1.00&35.55 \\ 
        &                           & SAM  & 0.18&0.54&0.78&0.27&0.64&0.46&33.15  \\
        &                           & LPF-SGD & 1.00&1.00&0.53&1.00&0.65&0.41&30.64  \\ \cline{2-10}
        
        & \multirow{3}{*}{ResNet101} & mSGD   & 1.00&0.56&1.00&1.00&0.34&1.00&32.73\\  
        &                            & SAM   & 0.46&0.43&0.64&0.55&0.25&0.41&30.70 \\  
        &                            & LPF-SGD  & 0.59&1.00&0.44&0.44&1.00&0.12&29.89 \\ \hline
        
    \end{tabular}
    \caption{Normalized sharpness measures and validation error for ResNet18,50,101 models trained on CIFAR-10 and CIFAR-100 data sets using standard SGD, SAM and LPF-SGD.}
    \label{tab:exp1_sharp}
\end{table}

\newpage

\begin{figure}[H]
    \centering
    \includegraphics[width=\textwidth]{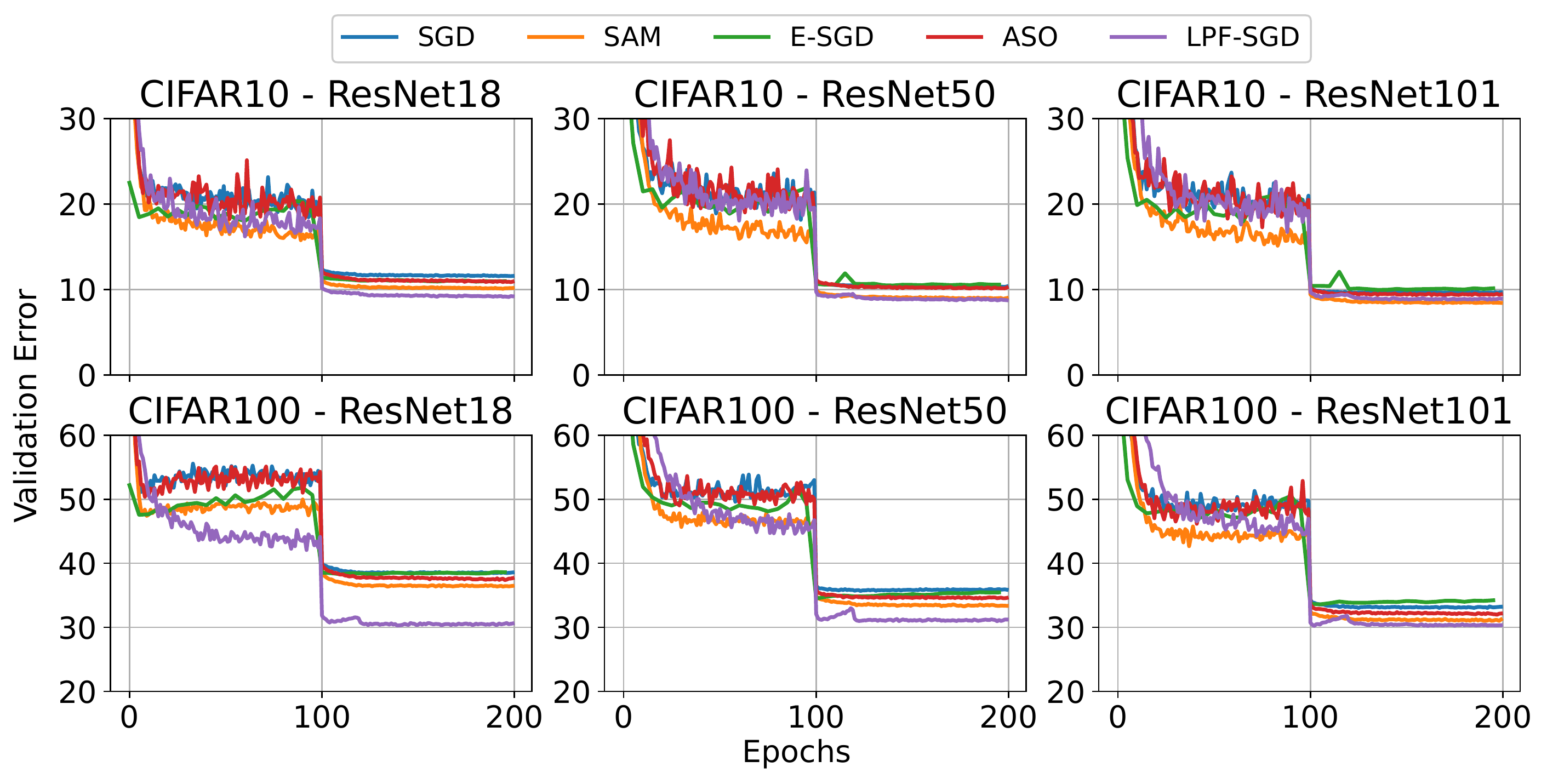}
    \caption{Validation error vs epochs for ResNet18 (left), ResNet50 (middle), and ResNet101 (right) models trained on CIFAR-10 (top) and CIFAR-100 (bottom) data sets using mSGD, SAM, and LPF-SGD. }
    \label{fig:exp1_1}
\end{figure}
\begin{figure}[H]
    \centering
    \includegraphics[width=\textwidth]{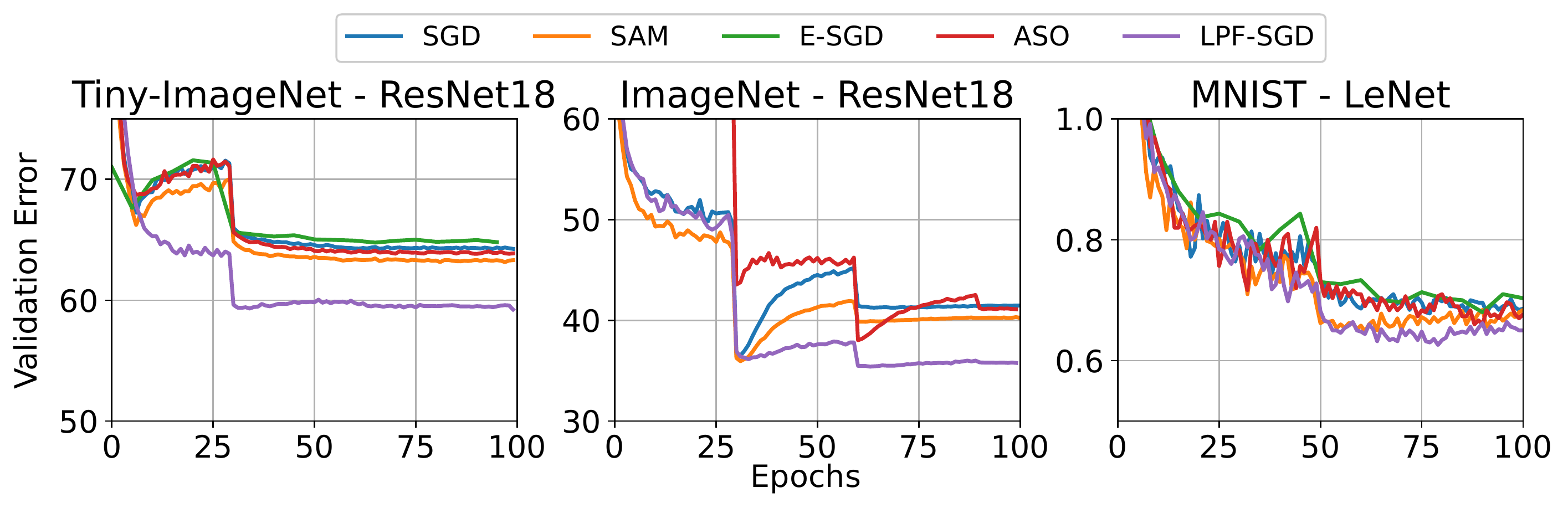}
    \caption{Validation error vs epochs for TinyImageNet (left), ImageNet (middle), and MNIST (right) data sets. TinyImage and ImageNet data sets was used to train the ResNet18 model while MNIST data set was used to train the LeNet model.}
    \label{fig:exp1_2}
\end{figure}

\subsubsection{Image Classification: ResNet18 on ImageNet data set with basic data augmentation}\label{sec:resnet2}
In the second set of experiments, we train a ResNet18 model on ImageNet data set utilizing the standard training hyper-parameters available in pytorch~\citep{paszke2017automatic}. The hyper-parameters for mSGD, ASO, SAM, and LPF-SGD optimizers are provided in the Table~\ref{tab:exp_imagenet_1}. The models were trained on NVIDIA RTX8000, V100, and GTX1080 GPUs on our high performance computing cluster. The total computational time is $\sim 1500$ GPU hours.
\begin{table}[!ht]
    \vspace{0.1in}
    \setlength\tabcolsep{2.2pt}
    \renewcommand{\arraystretch}{1.3}
    \centering
    \begin{tabular}{|c|c|c|c|c|c|c|c|c|c|}
        \hline
        \multirow{2}{*}{Dataset} & \multirow{2}{*}{Model} & \multirow{2}{*}{BS} & \multirow{2}{*}{WD} & \multirow{2}{*}{MO} & \multirow{2}{*}{Epochs}  & \multirow{2}{*}{LR (Policy)} & ASO  & SAM & LPF-SGD \\ \cline{8-10}
        &   & & & & & & a & $\rho$     & (M, $\gamma$, $\alpha$) \\ \hline 
        ImageNet & ResNet-18 & 256 & $1e^{-4}$ & 0.9 & 100 & 0.1 (x 0.1 at ep=[30, 60, 90]) & 0.009 & 0.05   & (8, 0.0001, 5)\\ \hline
    \end{tabular}
    \vspace{-0.1in}
    \caption{Training hyper-parameters used for obtaining Table~\ref{tab:imgnet}. BS: batch size, WD: weight decay, and MO: momentum coefficient.}
    \label{tab:exp_imagenet_1}
\end{table}
\subsubsection{Image Classification: Wide ResNets, ShakeShake and PyramidNets on CIFAR 10/100 data set with various data augmentations schemes}
\label{sec:widenet}

In the third set of experiments, we trained WRN16-8 and WRN28-10~\citep{zagoruyko2016wide} models available at ~\href{https://github.com/xternalz/WideResNet-pytorch}{https://github.com/xternalz/WideResNet-pytorch}, ShakeShake (26 2x96d)~\citep{gastaldi2017shake} available at ~\href{https://github.com/hysts/pytorch
_shake\_shake}{https://github.com/hysts/pytorch\_shake\_shake}, and PyramidNet-110($\alpha= 270$)~\citep{Han_2017_CVPR} and PyramidNet-272($\alpha=200$)~\citep{Han_2017_CVPR} models available at ~\href{https://github.com/dyhan0920/PyramidNet-PyTorch}{https://github.com/dyhan0920/PyramidNet-PyTorch}. 
We utilized three progressively increasing augmentation schemes: basic (random cropping and horizontal flipping), basic + cutout~\citep{devries2017improved}, and basic + cutout + auto-augmentation~\citep{Cubuk_2019_CVPR}. The cutout scheme is available at~\href{https://github.com/davda54/sam}{https://github.com/davda54/sam} and the auto-augmentation scheme is available at ~\href{https://github.com/4uiiurz1/pytorch-auto-augment}{https://github.com/4uiiurz1/pytorch-auto-augment}. 

The hyper-parameters for SAM, E-SGD, and ASO were tuned as described in the Subsection~\ref{subsec:resnet}. For LPF-SGD, we perform a grid search over the radius $\gamma_0$ and number of MC iterations ($M$) for experiments with WRN16-8 and WRN28-10 model and use the best performing hyper-parameters ($M=8$, $\gamma_0 = 5e^{-4}$, and $\alpha=15$) for the remaining experiments. Table ~\ref{tab:exp2_1} shows various hyper-parameters common to mSGD, E-SGD, ASO, SAM, and LPF-SGD optimizers, and Table~\ref{tab:exp2_2} shows individual hyper-parameters for LPF-SGD optimizer (for SAM hyperparameter $\rho$ is fixed to $0.05$, for ASO hyperparameter $a$ is fixed to $0.009$, and for E-SGD hyper-parameters ($\gamma_0$, $\gamma_1$, $\eta$, L)  are fixed to $(0.5,0.0001,0.05,5)$, and thus we do not report these hyper-parameters in the table).
\begin{table}[!ht]
    \centering
    \setlength\tabcolsep{3pt}
    \begin{tabular}{|p{2cm}|c|c|c|c|c|c|}
        \hline
        \multirow{2}{*}{Model}   &   \multirow{2}{*}{BS}  & \multirow{2}{*}{WD} & \multirow{2}{*}{MO} & \multirow{2}{*}{Epochs} & \multicolumn{2}{c|}{LR(Policy)}  \\ \cline{6-7}
                &       &    &    &        & CIFAR-10 & CIFAR-100 \\ \hline
        WRN16-8     &   128 & $5e^{-4}$ & 0.9 & 200 & \multicolumn{2}{c|}{0.1($\times$ 0.2 at [60,120,160])} \\ \hline
        WRN28-10    &   128 & $5e^{-4}$ & 0.9 & 200 & \multicolumn{2}{c|}{0.1($\times$ 0.2 at [60,120,160])} \\ \hline
        ShakeShake (26 2x96d)  &   128 & $1e^{-4}$ & 0.9 & 1800 & \multicolumn{2}{c|}{0.2(cosine decrease)} \\ \hline
        PyNet110    &   128 & $1e^{-4}$ & 0.9 & 300 &  0.1($\times$ 0.1 at [100,150]) & 0.5($\times$ 0.1 at [100,150])\\ \hline
        PyNet272    &   128 &  $1e^{-4}$ & 0.9 & 300 &  \multicolumn{2}{c|}{0.1($\times$ 0.1 at [100,150])}\\ \hline
    \end{tabular}
    \caption{Training hyper-parameters common to all optimizers used to obtain Table~\ref{tab:exp2}. BS: batch size, WD: weight decay, MO: momentum coefficient, and LR: learning rate.}
    \label{tab:exp2_1}
\end{table}
\begin{table}[!ht]
    \centering
    \setlength\tabcolsep{3pt}
    \begin{tabular}{|c|c|c|c|c|c|c|}
        \hline
        \multirow{2}{*}{Model} & \multirow{2}{*}{Aug} & \multirow{2}{*}{M} & \multicolumn{2}{c|}{CIFAR-10} & \multicolumn{2}{c|}{CIFAR-100} \\ \cline{4-7}
         & & & $\gamma_0$ & $\alpha$ (policy) & $\gamma_0$ & $\alpha$ (policy) \\ \hline
        \multirow{3}{*}{WRN16-8}    &   Basic           & 8 &  0.0005   & 15 & 0.0005 & 15 \\ \cline{2-7}
                                    &   Basic+Cut       & 8 &  0.0005   & 15 & 0.0005 & 15 \\ \cline{2-7}
                                    &   Basic+Cut+AA    & 8 &  0.0005   & 15 & 0.0005 & 15 \\ \hline
        \multirow{3}{*}{WRN28-10}   &   Basic           & 8 &  0.0005   & 35 & 0.0005 & 25 \\ \cline{2-7}
                                    &   Basic+Cut       & 8 &  0.0005    & 35 & 0.0005 & 25 \\ \cline{2-7}
                                    &   Basic+Cut+AA    & 8 &  0.0005    & 35 & 0.0007 & 15 \\ \hline
        \multirow{3}{*}{ShakeShake 26 2x96d} &   Basic           & 8 &  0.0005   & 15 & 0.0005 & 15 \\ \cline{2-7}
&   Basic+Cut       & 8 &  0.0005   & 15 & 0.0005 & 15 \\ \cline{2-7}
&   Basic+Cut+AA    & 8 &  0.0005   & 15 & 0.0005 & 15 \\ \hline
        \multirow{3}{*}{PyNet110}   &   Basic           & 8 &  0.0005   & 15 & 0.0005 & 15 \\ \cline{2-7}
&   Basic+Cut       & 8 &  0.0005   & 15 & 0.0005 & 15 \\ \cline{2-7}
&   Basic+Cut+AA    & 8 &  0.0005   & 15 & 0.0005 & 15 \\ \hline
        \multirow{3}{*}{PyNet272}   &   Basic           & 8 &  0.0005   & 15 & 0.0005 & 15 \\ \cline{2-7}
&   Basic+Cut       & 8 &  0.0005   & 15 & 0.0005 & 15 \\ \cline{2-7}
&   Basic+Cut+AA    & 8 &  0.0005   & 15 & 0.0005 & 15 \\ \hline
    \end{tabular}
    \caption{Hyper-parameters for LPF-SGD optimizer.}
    \label{tab:exp2_2}
\end{table}
In Figures~\ref{fig:exp2_1}, ~\ref{fig:exp2_2}, ~\ref{fig:exp2_3}, ~\ref{fig:exp2_4}, and ~\ref{fig:exp2_5} we provide error vs epoch curves. All the models were trained on NVIDIA RTX8000, V100, and GTX1080 GPUs on our high performance computing cluster. The total computational time is $\sim 6000$ GPU hours.

\newpage

\begin{figure}[H]
    \centering
    \includegraphics[width=\textwidth]{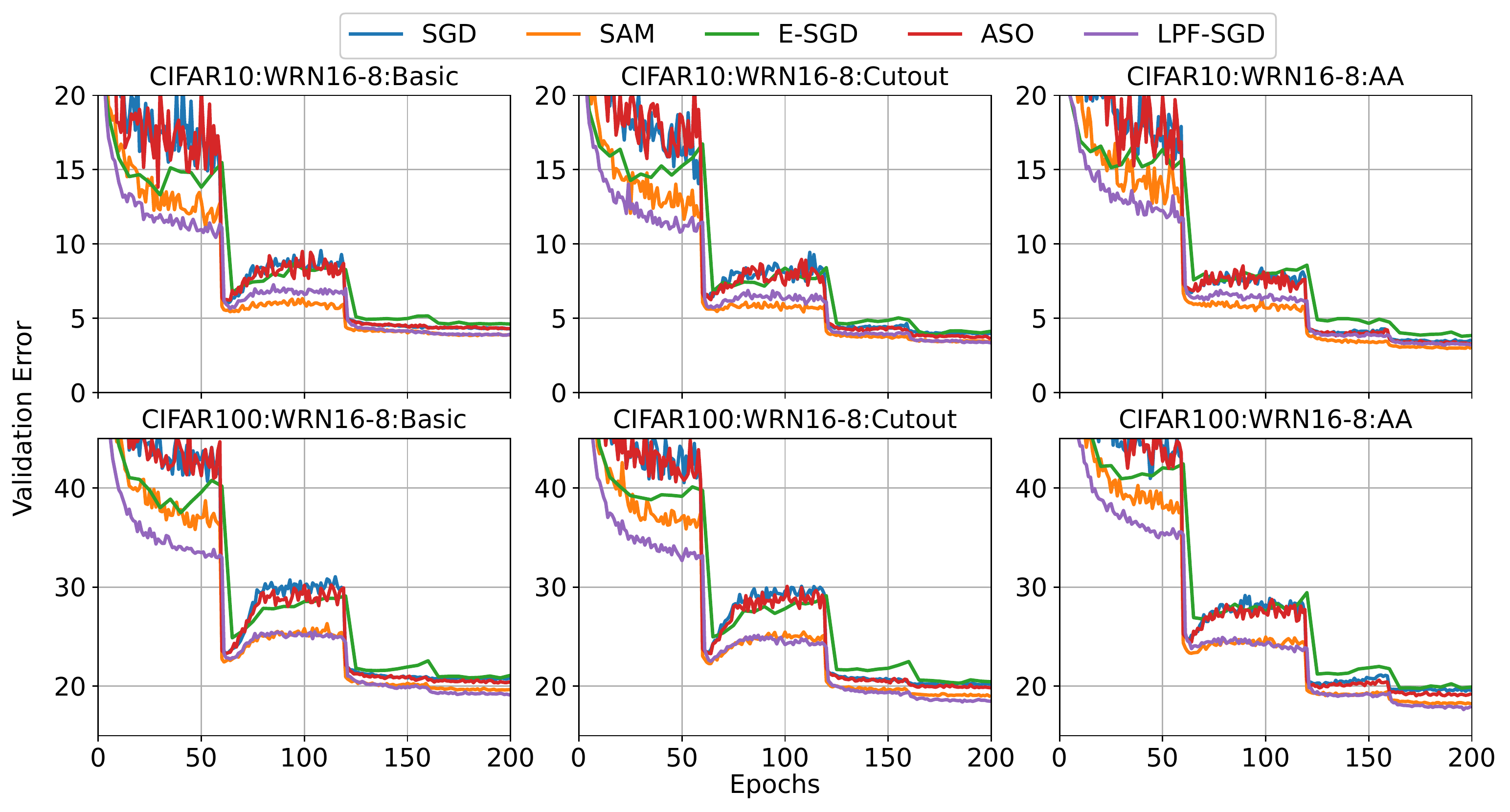}
    \caption{Validation error vs epochs for WideResNet 16-8 model trained on CIFAR-10 (top) and CIFAR-100 (bottom) data sets with Basic (left), Basic + Cutout (middle) and Basic+AutoAugmentation+Cutout (right) augmentation schemes using mSGD, SAM, E-SGD (E-SGD), ASO, and LPF-SGD optimization algorithms.}
    \label{fig:exp2_1}
\end{figure}
\begin{figure}[H]
    \centering
    \includegraphics[width=\textwidth]{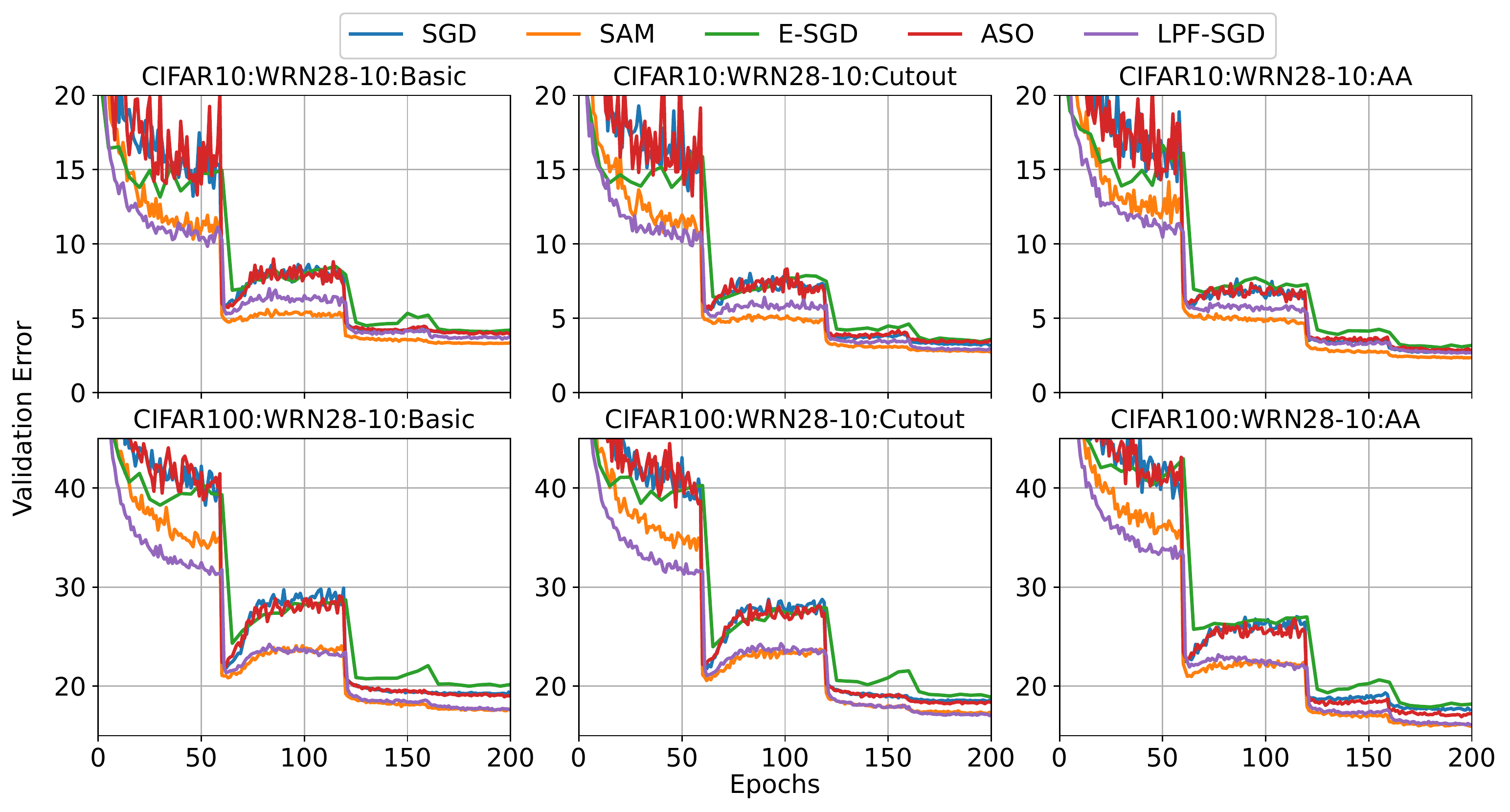}
    \caption{Validation error vs epochs for WideResNet 28-10 model trained on CIFAR-10 (top) and CIFAR-100 (bottom) data sets with Basic (left), Basic + Cutout (middle) and Basic+AutoAugmentation+Cutout (left) augmentation schemes using mSGD, SAM, E-SGD, ASO, and LPF-SGD optimization algorithms.}
    \label{fig:exp2_2}
\end{figure}
\begin{figure}[H]
    \centering
    \includegraphics[width=\textwidth]{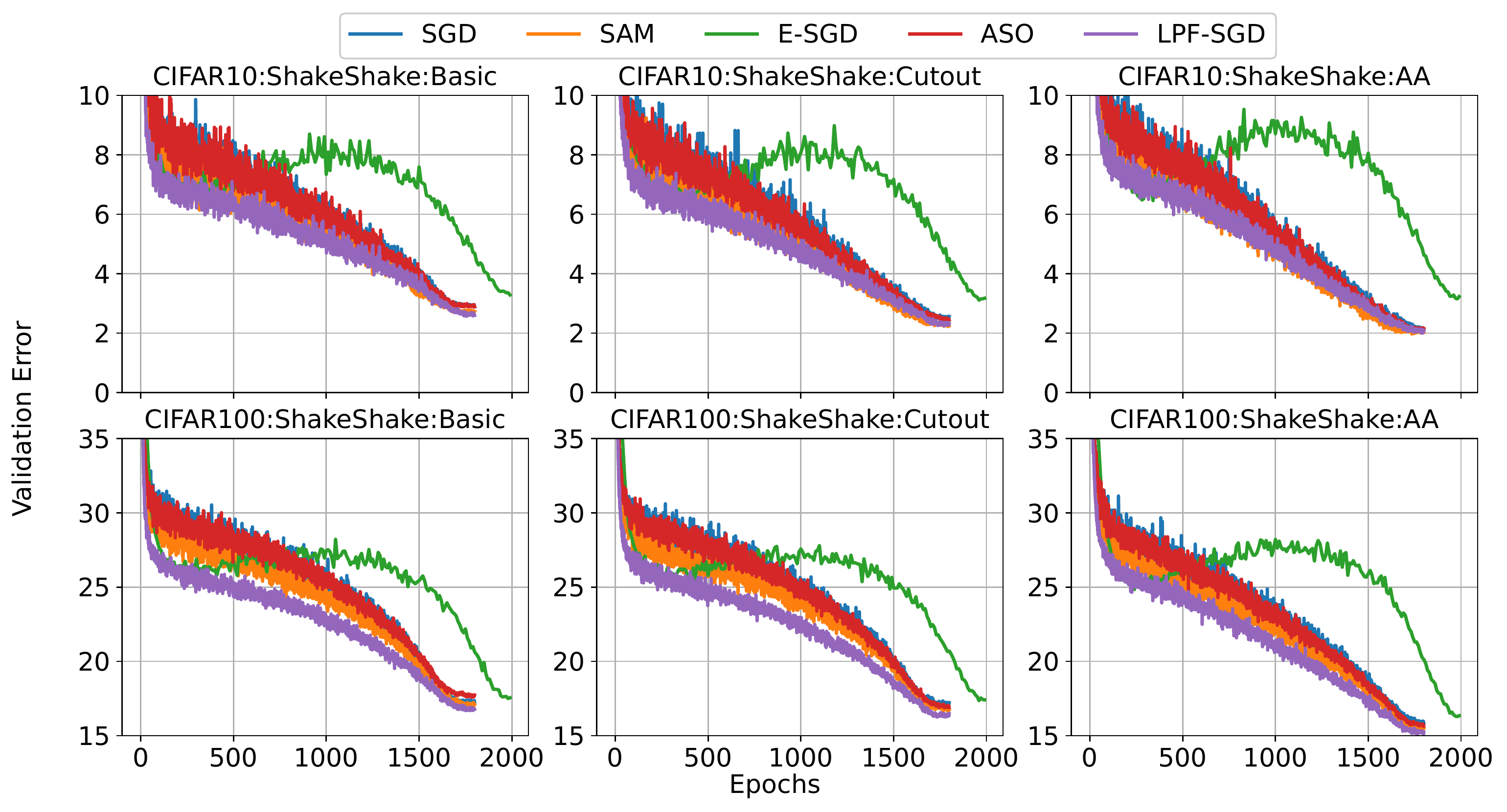}
    \caption{Validation error vs epochs for ShakeShake (26 2x96d) model trained on CIFAR-10 (top) and CIFAR-100 (bottom) data sets with Basic (left), Basic + Cutout (middle) and Basic+AutoAugmentation+Cutout (right) augmentation schemes using mSGD, SAM, E-SGD, ASO, and LPF-SGD optimization algorithms.}
    \label{fig:exp2_3}
\end{figure}
\begin{figure}[H]
    \centering
    \includegraphics[width=\textwidth]{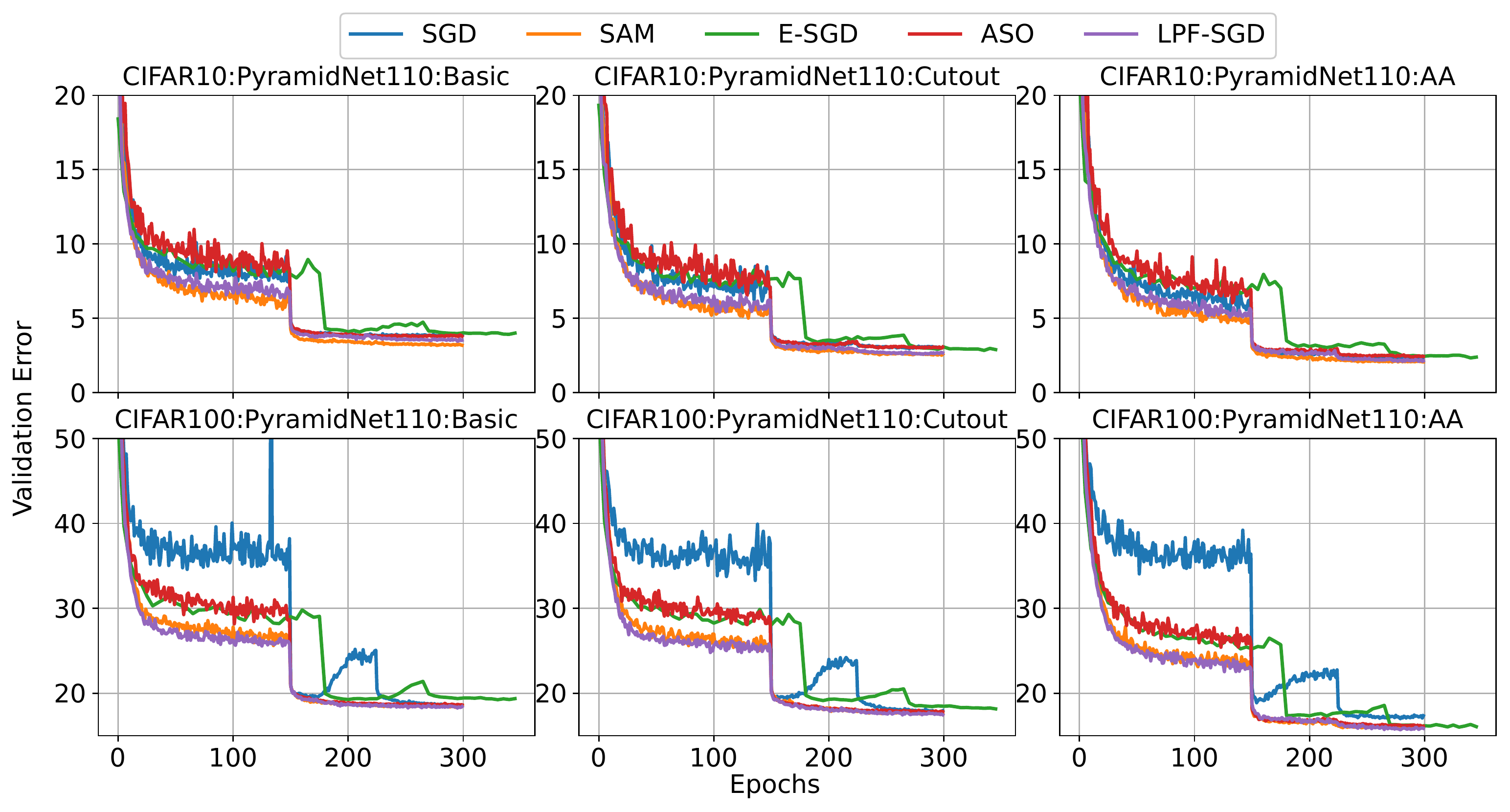}
    \caption{Validation error vs epochs for PyramidNet110 ($\alpha=270$) model trained on CIFAR-10 (top) and CIFAR-100 (bottom) data sets with Basic (left), Basic + Cutout (middle) and Basic+AutoAugmentation+Cutout (right) augmentation schemes using mSGD, SAM, E-SGD, ASO, and LPF-SGD optimization algorithms.}
    \label{fig:exp2_4}
\end{figure}
\begin{figure}[H]
    \centering
    \includegraphics[width=\textwidth]{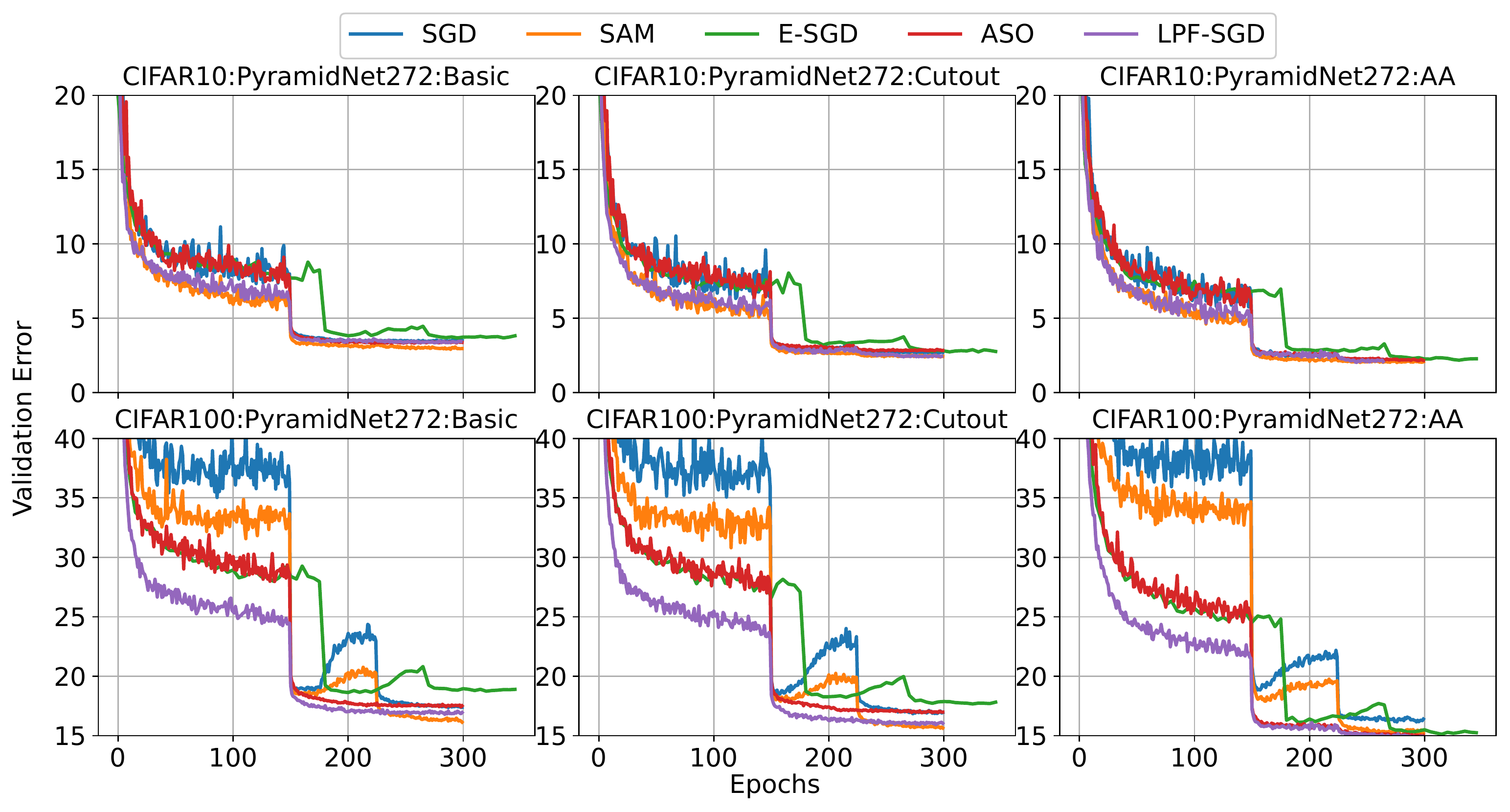}
    \caption{Validation error vs epochs for PyramidNet272 ($\alpha=200$) model trained on CIFAR-10 (top) and CIFAR-100 (bottom) data sets with Basic (left), Basic + Cutout (middle) and Basic+AutoAugmentation+Cutout (right) augmentation schemes using mSGD, SAM, E-SGD, ASO, and LPF-SGD optimization algorithms.}
    \label{fig:exp2_5}
\end{figure}

\subsubsection{Comparison with ASAM}\label{sec:asam}
The hyper-parameter $\rho$ for ASAM was tuned by performing grid search over the set $\{ 0.1, 0.5, 1.0\}$ and $\rho = 1.0$ was found to be the best performing parameter. The ASAM codes are available at \href{https://github.com/davda54/sam}{https://github.com/davda54/sam}.

\subsection{Machine Translation}\label{sec:MT}
We train a mini-transformer model based on ~\citet{vaswani2017attention} with $3$ encoder layers and $3$ decoder layers to perform German to English translation on WMT 2014 data set~\citep{bojar-etal-2014-findings}. The dropout rate of the model was set to $0.1$ while the size of the embedding layer was set to $512$. Similarly to~\citet{vaswani2017attention}, we utilize byte-pair encoding and construct a common vocabulary of $32,000$ tokens. The model was trained using Adam~\citep{DBLP:journals/corr/KingmaB14} as the base optimizer, natively available in pytorch. For all other optimizers, we modify the codes described in section ~\ref{sec:Codes} to change the base optimizer from SGD to Adam. The hyper-parameters common to Adam, E-Adam, ASO-Adam, SAM-Adam, and LPF-Adam are provided in the Table~\ref{tab:exp_mt1}.  The hyper-parameter $\rho$ in SAM-Adam was tuned by performing a grid search over the set $\{0.01, 0.1, 0.25, 0.5, 1.0\}$ and $\rho=0.01$ was found to be the best performing hyper-parameter. For E-Adam, we perform grid search over $\gamma_0 = \{0.1, 0.5, 0.05\}$, $\gamma_1=\{0.0001, 0.0005,0.005\}$, and $\eta=\{0.05,0.01,0.1\}$. $(g_{0}, g_{1}, \eta) = (0.5, 0.0001, 0.1)$ were found to be the best performing hyper-parameters. The SGLD iterations ($M$) were set to $5$ and the learning rate of the base optimizer (Adam) was also increased by a factor of $M$. The smooth out parameter $a$ in ASO-Adam was tuned by performing grid search over the set $\{ 0.001, 0.005, 0.009, 0.05 \}$ and $a = 0.009$ was found to be the best performing hyper-parameter. Finally, the LPF-Adam parameters were tuned by performing grid search over the set $\gamma = \{ 0.001, 0.0001, 0.0005 \}$ and $\gamma=0.0005$ was found to be the best performing hyper-parameters. The value of $M$ was set to $8$ and $\alpha$ was set to $15$. 

\begin{table}[!ht]
    \vspace{0.1in}
    \setlength\tabcolsep{2.2pt}
    \renewcommand{\arraystretch}{1.3}
    \centering
    \begin{tabular}{|c|c|c|c|c|c|c|}
        \hline
        Dataset & Model & BS & WD & $(\beta_{1},\beta_{2})$ & Itr  & LR (Policy) \\ \hline
        WMT2014 & Mini-Trans & 256 & $1e^{-4}$ & $(0.9,0.99)$ & 100k & $0.0005$ ($\times$ 0.1 ReduceLROnPlateau) \\ \hline
    \end{tabular}
    \vspace{-0.1in}
    \caption{Training hyper-parameters common to all optimizers used for obtaining results in Table~\ref{tab:nlp}. BS: batch size, WD: weight decay.}
    \label{tab:exp_mt1}
\end{table}

\section{ABLATION STUDIES}
\label{sec:AS}
In order to understand the impact of various hyper-parameters of the LPF-SGD optimizer on its performance, we perform three sets of ablation studies. First, we study the impact of the number of MC iterations ($M$). Specifically, we train WRN16-8, WRN28-10, and Pyramidnet-110 models on CIFAR-10 and CIFAR-100 data sets with various different data augmentation techniques using LPF-SGD optimizer while varying the number of MC iterations (M) over the set $\{ 1, 2, 4, 8\}$. For $M
>8$ the computational cost is $>2\times$SGD which may not be viable for empirical purposes. Table ~\ref{tab:ablation_study1} shows that increasing M leads to the drop in the test error, as expected, though the differences in performance between different values of M are not drastic.

\begin{table}[!ht]
    \centering
    \setlength\tabcolsep{1.7pt}
    \renewcommand{\arraystretch}{1.0}
    \begin{tabular}{|c|c|c|c|c|c|c|c|c|c|} 
        \hline
        & & \multicolumn{4}{c|}{CIFAR-10} & \multicolumn{4}{c|}{CIFAR-100} \\ \cline{3-10}
        \multirow{3}{1.5cm}{Model}  & \multirow{3}{2cm}{Augmentation} & \multicolumn{4}{c|}{LPF-SGD ($M$)}       &  \multicolumn{4}{c|}{LPF-SGD ($M$)} \\ \cline{3-10}
        & & M=1 & M=2 	& M=4 	& M=8 	& M=1 	& M=2 	& M=4 	& M=8 \\\hline
        \multirow{3}{1.5cm}{WRN16-8}    & Basic                   & $4.1_{\pm0.1}$	&	$4.0_{\pm0.1}$	&	$3.8_{\pm0.1}$	& $\mathbf{3.7_{\pm <0.1}}$                 &	$19.3_{\pm0.2}$	&	$19.1_{\pm0.2}$	&	$19.1_{\pm0.1}$ & $\mathbf{18.9_{\pm 0.1}}$ \\ 
        & Basic+Cut                                             & $3.6_{\pm0.0}$	&	$3.5_{\pm<0.1}$	&	$3.4_{\pm0.1}$	& $\mathbf{3.2_{\pm 0.1}}$                  &	$18.9_{\pm0.2}$	&	$18.7_{\pm0.2}$	&	$18.5_{\pm0.2}$ & $\mathbf{18.3_{\pm 0.1}}$ \\ 
        & Basic+AA+Cut                                          & $3.1_{\pm0.1}$	&	$3.2_{\pm0.0}$	&	$\mathbf{3.1_{\pm<0.1}}$	& $\mathbf{3.1_{\pm 0.1}}$      &	$17.9_{\pm0.2}$	&	$17.8_{\pm0.0}$	&	$17.6_{\pm0.2}$ & $\mathbf{17.6_{\pm 0.1}}$ \\ \hline
        \multirow{3}{1.5cm}{WRN28-10}    & Basic                  & $3.8_{\pm0.1}$	&	$3.7_{\pm0.0}$	&	$3.6_{\pm0.1}$	& $\mathbf{3.5_{\pm 0.1}}$                  &	$17.7_{\pm0.1}$	&	$17.7_{\pm0.2}$	&	$17.5_{\pm0.1}$	& $\mathbf{17.4_{\pm 0.1}}$ \\ 
        & Basic+Cut                                             & $3.1_{\pm<0.1}$	&	$3.0_{\pm<0.1}$	&	$3.0_{\pm0.1}$	& $\mathbf{2.7_{\pm <0.1}}$                 &	$17.1_{\pm0.1}$	&	$17.2_{\pm0.2}$	&	$16.9_{\pm0.2}$	& $\mathbf{16.9_{\pm 0.2}}$ \\ 
        & Basic+AA+Cut                                          & $2.5_{\pm0.1}$	&	$2.6_{\pm<0.1}$	&	$2.6_{\pm<0.1}$	& $\mathbf{2.5_{\pm 0.1}}$                  &	$15.9_{\pm0.1}$ &	$16.2_{\pm0.2}$ &	$15.8_{\pm0.2}$ & $\mathbf{15.9_{\pm 0.1}}$ \\ \hline
        \multirow{3}{1.5cm}{PyNet110 ($\alpha=270$)}    & Basic   & $3.7_{\pm0.1}$	&	$3.7_{\pm<0.1}$	&	$3.6_{\pm0.1}$	& $\mathbf{3.4_{\pm 0.1}}$                  &	$18.5_{\pm0.2}$	&	$18.5_{\pm0.3}$	&	$\mathbf{18.1_{\pm0.1}}$	& $18.2_{\pm 0.3}$ \\ 
        & Basic+Cut                                             & $2.9_{\pm0.1}$	&	$2.9_{\pm0.1}$	&	$2.9_{\pm0.1}$	& $\mathbf{2.5_{\pm <0.1}}$                 &	$17.8_{\pm0.4}$	&	$17.8_{\pm0.2}$	&	$\mathbf{17.2_{\pm0.1}}$	& $17.3_{\pm 0.2}$ \\ 
        & Basic+AA+Cut                                          & $2.4_{\pm0.1}$	&	$2.4_{\pm0.1}$	&	$2.5_{\pm<0.1}$	& $\mathbf{2.1_{\pm 0.0}}$                  &	$15.8_{\pm0.1}$	&	$15.7_{\pm0.1}$	&	$15.8_{\pm0.2}$	& $\mathbf{15.6_{\pm 0.1}}$ \\ \hline
    \end{tabular}
    \vspace{-0.1in}
    \caption{Validation error rate with 95\% confidence interval for LPF-SGD optimizer trained with different values of MC iterations (M).}
    \label{tab:ablation_study1}
    \vspace{0.1in}
\end{table}

\begin{table}[!ht]
    \centering
    \renewcommand{\arraystretch}{1.1}
    \begin{tabular}{|c|c|c|c|c|c|} 
        \hline
        & & \multicolumn{2}{c|}{CIFAR-10} & \multicolumn{2}{c|}{CIFAR-100} \\ \cline{3-6}
        \multirow{3}{1.5cm}{Model}  & \multirow{3}{*}{Aug} & \multicolumn{2}{c|}{LPF-SGD ($\gamma$ pol)}       &  \multicolumn{2}{c|}{LPF-SGD ($\gamma$ pol)} \\ \cline{3-6}
        & & Fixed & Cosine & Fixed & Cosine \\\hline
        \multirow{3}{1.5cm}{WRN16-8}                    & Basic             & $3.9_{\pm0.1}$ & $\mathbf{3.7_{\pm<0.1}}$ & $20.1_{\pm0.1}$ & $\mathbf{18.9_{\pm 0.1}}$   \\ 
                                                        & Basic+Cut           & $3.5_{\pm0.1}$ & $\mathbf{3.2_{\pm0.1}}$  & $19.4_{\pm0.2}$ & $\mathbf{18.3_{\pm 0.1}}$   \\ 
                                                        & Basic+AA+Cut         & $3.1_{\pm0.1}$ & $\mathbf{3.1_{\pm0.1}}$  & $19.0_{\pm0.1}$ & $\mathbf{17.6_{\pm 0.1}}$   \\ \hline
        \multirow{3}{1.5cm}{WRN28-10}                     & Basic             & $3.8_{\pm0.1}$ & $\mathbf{3.5_{\pm0.1}}$  & $18.6_{\pm0.1}$ & $\mathbf{17.4_{\pm 0.1}}$   \\ 
                                                        & Basic+Cut           & $2.9_{\pm0.1}$ & $\mathbf{2.7_{\pm<0.1}}$ & $18.0_{\pm0.2}$ & $\mathbf{16.9_{\pm 0.2}}$   \\ 
                                                        & Basic+AA+Cut         & $2.6_{\pm0.1}$ & $\mathbf{2.5_{\pm0.1}}$  & $17.1_{\pm0.1}$ & $\mathbf{15.9_{\pm 0.1}}$   \\ \hline
        \multirow{3}{1.5cm}{PyNet110 ($\alpha=270$)}      & Basic             & $3.6_{\pm0.1}$ & $\mathbf{3.4_{\pm0.1}}$  & $\mathbf{18.0_{\pm0.1}}$ & $18.2_{\pm 0.3}$   \\ 
                                                        & Basic+Cut           & $2.8_{\pm0.1}$ & $\mathbf{2.5_{\pm0.1}}$  & $\mathbf{17.0_{\pm0.1}}$ & $17.3_{\pm 0.2}$   \\ 
                                                        & Basic+AA+Cut         & $2.3_{\pm0.1}$ & $\mathbf{2.1_{\pm<0.1}}$ & $15.8_{\pm0.3}$ & $\mathbf{15.6_{\pm 0.1}}$   \\ \hline
    \end{tabular}
    \vspace{-0.1in}
    \caption{Validation error rate with 95\% confidence interval for LPF-SGD optimizer trained with different policy of $\gamma$ hyper-parameter.}
    \label{tab:ablation_study2}
     \vspace{0.1in}
\end{table}

In the second set of experiments, we study the impact of $\gamma$ policy on the test error rate. Specifically, we consider two policies for setting up $\gamma$ parameter, a fixed value as well as cosine policy given in Equation~\ref{eq:gamma_policy} ($\gamma$ parameter is chosen via hyper-parameter search). Similarly to the previous study, we train WRN16-8, WRN28-10, and Pyramidnet-110 models on CIFAR-10 and CIFAR-100 data sets with various different data augmentation techniques. Table ~\ref{tab:ablation_study2} shows that cosine policy is superior and corresponds to progressively increasing the area of the loss surface that is explored (as we converge to the flat valley).

In the third set of experiments, we study the impact of two different kinds of co-variance matrix ($\Sigma$) on the test error rate. Specifically, we train WRN16-8 and WRN28-10 models on CIFAR-10 and CIFAR-100 data sets with isotropic co-variance (identity) matrix and compare its performance with the parameter dependent anisotropic approach as in LPF-SGD (note that the parameter-dependent anisotropic strategy is equivalent to using isotropic covariance on a balanced network). Table~\ref{tab:iso_aniso} captures the results and reveals that anisotropic strategy is superior to isotropic in terms of the final performance. 

Finally, in order to verify whether the performance gain of the proposed method can be achieved by prolonging the number of training epochs for the mSGD method, we trained WRN-16-8 model and WRN-28-10 model on CIFAR-10 and CIFAR-100 data set for $400$ epochs (instead of the current setting of $200$ epochs). In Table ~\ref{tab:exp2_supp}, we show that even with prolonged training, the mSGD is not able to match the performance of LPF-SGD.

\begin{table}[h]
    \renewcommand{\arraystretch}{1.1}
    \centering
    \begin{tabular}{|c|c|c|c|c|c|}
    \hline
    \multirow{2}{1.5cm}{Model}      & \multirow{2}{*}{Aug} & \multicolumn{2}{c|}{CIFAR-10} & \multicolumn{2}{c|}{CIFAR-100} \\ \cline{3-6}
                                    &           & Iso              & An-Iso     & Iso              & An-Iso       \\ \hline
    \multirow{3}{1.5cm}{WRN 16-8}   & Basic            & $3.8_{\pm 0.0}$  &  $\mathbf{3.7_{\pm <0.1}}$  &  $19.7_{\pm 0.2}$ & $\mathbf{18.9_{\pm 0.1}}$ \\ \cline{2-6}
                                    & Basic+Cut        & $3.3_{\pm 0.0}$  &  $\mathbf{3.2_{\pm 0.1}}$   &  $19.0_{\pm 0.2}$ & $\mathbf{18.3_{\pm 0.1}}$ \\ \cline{2-6}
                                    & Basic+AA+Cut     & $3.3_{\pm 0.1}$  &  $\mathbf{3.1_{\pm 0.1}}$   &  $18.1_{\pm 0.2}$ & $\mathbf{17.6_{\pm 0.1}}$ \\ \hline
    \multirow{3}{1.5cm}{WRN 28-10}  & Basic            & $3.7_{\pm 0.1}$  &  $\mathbf{3.5_{\pm 0.1}}$   &  $18.1_{\pm 0.1}$ & $\mathbf{17.4_{\pm 0.1}}$ \\ \cline{2-6}
                                    & Basic+Cut        & $2.9_{\pm 0.0}$  &  $\mathbf{2.7_{\pm <0.1}}$  &  $17.6_{\pm 0.1}$ & $\mathbf{16.9_{\pm 0.2}}$ \\ \cline{2-6}
                                    & Basic+AA+Cut     & $2.8_{\pm 0.1}$  &  $\mathbf{2.5_{\pm 0.1}}$   &  $16.5_{\pm 0.1}$ & $\mathbf{15.9_{\pm 0.1}}$ \\ \hline
    \end{tabular}
    \vspace{-0.1in}
    \caption{Validation error rate with 95\% confidence interval for LPF-SGD optimizer trained with isotropic and parameter dependent anisotropic covariance matrix.}
    \label{tab:iso_aniso}
    \vspace{0.1in}
\end{table}

\begin{table}[h]
    \centering
    \setlength\tabcolsep{2.2pt}
    \renewcommand{\arraystretch}{1.1}
    \begin{tabular}{|c|c|c|c|c|c|c|c|} 
        % & & & \multicolumn{5}{|l|}{\multirow{1}{*}{CIFAR-10}} \\
        \hline
        \multirow{2}{*}{Model} & \multirow{2}{*}{Aug}   &  \multicolumn{3}{c|}{CIFAR-10} & \multicolumn{3}{c|}{CIFAR-100} \\ \cline{3-8}
        & & \begin{tabular}{@{}c@{}}mSGD \\ (200 epochs)\end{tabular} & \begin{tabular}{@{}c@{}}mSGD \\ (400 epochs)\end{tabular} &  \begin{tabular}{@{}c@{}}LPF-SGD \\ (200 epochs)\end{tabular} & \begin{tabular}{@{}c@{}}mSGD \\ (200 epochs)\end{tabular} & \begin{tabular}{@{}c@{}}mSGD \\ (400 epochs)\end{tabular} &  \begin{tabular}{@{}c@{}}LPF-SGD \\ (200 epochs)\end{tabular} \\ \hline
         \multirow{3}{2cm}{WRN16-8}    & B        & $4.2_{\pm 0.2}$   & $4.1_{\pm 0.0}$             & $\mathbf{3.7_{\pm <0.1}}$ & $20.6_{\pm 0.2}$  & $20.4_{\pm 0.3}$  & $\mathbf{18.9_{\pm 0.1}}$ \\ %0.0005 - 15
                                       & B+C      & $3.9_{\pm 0.1}$   & $3.6_{\pm 0.1}$             & $\mathbf{3.2_{\pm 0.1}}$  & $20.0_{\pm 0.1}$  & $19.8_{\pm 0.1}$  & $\mathbf{18.3_{\pm 0.1}}$ \\  %0.0005 - 15
                                       & B+A+C    & $3.3_{\pm 0.1}$   & $\mathbf{3.1_{\pm 0.1}}$    & $\mathbf{3.1_{\pm 0.1}}$  & $19.3_{\pm 0.2}$  & $19.0_{\pm 0.2}$  & $\mathbf{17.6_{\pm 0.1}}$ \\ \hline      %0.0005 - 15
         \multirow{3}{2cm}{WRN28-10}   & B        & $4.0_{\pm 0.1}$   & $3.7_{\pm 0.1}$             & $\mathbf{3.5_{\pm 0.1}}$  & $19.1_{\pm 0.2}$  & $18.5_{\pm 0.1}$  & $\mathbf{17.4_{\pm 0.1}}$ \\  %0.0005 - 35, 0.0005 - 25, 
                                       & B+C      & $3.1_{\pm <0.1}$  & $2.9_{\pm 0.2}$             & $\mathbf{2.7_{\pm <0.1}}$ & $18.3_{\pm 0.1}$  & $17.6_{\pm 0.1}$  & $\mathbf{16.9_{\pm 0.2}}$ \\ %0.0005 - 15
                                       & B+A+C    & $2.6_{\pm 0.1}$   & $\mathbf{2.4_{\pm 0.0}}$    & $2.5_{\pm 0.1}$           & $17.3_{\pm 0.2}$  & $17.0_{\pm 0.2}$  & $\mathbf{15.9_{\pm 0.1}}$ \\ \hline  %0.0005 - 15, 0.0007 - 15

    \end{tabular}
    \vspace{-0.1in}
    \caption{Mean validation error with 95\% confidence interval. B refers to basic data augmentation, C refers to Cutout augmentation, and A refers to AutoAugmentation.}
    \label{tab:exp2_supp}
\end{table}

\section{ADVERSARIAL ROBUSTNESS}
\label{sec:AR}

In this section, we evaluate the performance of models trained with various flatness based optimizers under different adversarial attacks that cab be found in the standard foolbox library~\citep{rauber2017foolbox}. Specifically, we evaluate the robust error rate (Table~\ref{tab:adversarial}) of WRN16-8 and WRN28-10 models trained using mSGD, E-SGD, ASO, SAM, and LPF-SGD on CIFAR-10 and CIFAR-100 data sets with different data augmentation schemes under simultaneous four different adversarial attacks: L2 Fast Gradient Method~\citep{DBLP:journals/corr/GoodfellowSS14}, Linf Projected Gradient Descent (PGD)~\citep{madry2017towards}, Linf Additive Uniform Noise Attack (AUNA)~\citep{inci2018deepcloak}, and Linf Deep Fool Attack~\citep{moosavi2016deepfool}) with standard step size of $\epsilon=8/255$. 

\begin{table}[!ht]
    \centering
    \setlength\tabcolsep{2pt}
    \renewcommand{\arraystretch}{1.1}
    \begin{tabular}{|c|c|c|c|c|c|c|c|c|c|c|c|} 
        \hline
        & & \multicolumn{5}{c|}{CIFAR-10} & \multicolumn{5}{c|}{CIFAR-100} \\ \cline{3-12}
        Model                     & Aug  & mSGD               & E-SGD & ASO   & SAM                       & LPF-SGD & mSGD               & E-SGD & ASO   & SAM                       & LPF-SGD                  \\ \hline
        \multirow{3}{1cm}{WRN 16-8}    & B         & $10.3_{\pm0.1}$          & $\mathbf{9.0_{\pm0.3}}$   & $10.4_{\pm0.1}$ & $9.9_{\pm0.3}$           & $9.7_{\pm0.2}$            & $29.5_{\pm0.3}$          & $\mathbf{28.6_{\pm0.3}}$  & $29.4_{\pm0.4}$ & $28.9_{\pm0.2}$          & $\mathbf{28.6_{\pm0.3}}$     \\ 
                                        & B+C    & $10.8_{\pm0.2}$          & $\mathbf{9.4_{\pm0.5}}$   & $10.8_{\pm0.3}$ & $9.9_{\pm0.1}$           & $9.9_{\pm0.2}$               & $29.1_{\pm0.1}$          & $28.7_{\pm0.4}$           & $29.3_{\pm0.1}$ & $28.7_{\pm0.4}$          & $\mathbf{28.3_{\pm0.3}}$     \\ 
                                        & B+A+C  & $10.9_{\pm0.3}$          & $9.9_{\pm0.3}$            & $10.2_{\pm0.4}$ & $\mathbf{9.4_{\pm0.1}}$  & $10.2_{\pm0.1}$              & $30.4_{\pm0.2}$          & $29.3_{\pm0.2}$           & $30.0_{\pm0.3}$ & $29.6_{\pm0.2}$          & $\mathbf{28.7_{\pm0.4}}$     \\ \hline
        \multirow{3}{1cm}{WRN 28-10}   & B         & $9.0_{\pm0.2}$           & $9.5_{\pm0.2}$            & $8.6_{\pm0.3}$  & $\mathbf{8.1_{\pm0.1}}$  & $8.6_{\pm0.1}$            & $26.4_{\pm0.2}$          & $\mathbf{25.8_{\pm0.2}}$  & $26.9_{\pm0.1}$ & $\mathbf{25.8_{\pm0.2}}$ & $26.7_{\pm0.1}$              \\ 
                                        & B+C     & $8.3_{\pm0.3}$           & $8.6_{\pm0.1}$            & $8.4_{\pm0.1}$  & $8.4_{\pm0.2}$           & $\mathbf{7.7_{\pm0.4}}$     & $26.2_{\pm0.1}$          & $25.6_{\pm0.2}$           & $26.2_{\pm0.3}$ & $25.8_{\pm0.2}$          & $\mathbf{25.4_{\pm0.3}}$     \\ 
                                        & B+A+C  & $8.3_{\pm0.3}$           & $8.1_{\pm0.1}$            & $8.2_{\pm0.3}$  & $\mathbf{7.9_{\pm0.2}}$  & $\mathbf{7.9_{\pm0.3}}$      & $26.7_{\pm0.2}$          & $\mathbf{25.6_{\pm0.2}}$  & $26.9_{\pm0.6}$ & $25.8_{\pm0.3}$          & $26.0_{\pm0.2}$              \\ \hline
    \end{tabular}
    \caption{Robust error rate with 95\% confidence interval under simultaneous four different adversarial attacks: L2 Fast gradient method (FGM),  Linf Projected gradient descent (PGD) attack, Linf Additive uniform noise attack, and Linf Deep fool attack with standard step size $\epsilon=8 / 255$. B refers to basic data augmentation, C refers to Cutout augmentation, and A refers to AutoAugmentation.}
    \label{tab:adversarial}
\end{table}

\section{THEORETICAL PROOFS}
\label{sec:proofs}

\subsection{Proof for Theorem 1}
Proof in this section is inspired by the analysis in~\citep{duchi2012randomized}.
\begin{lma}\label{lma:1}
Let $l_o(\theta;\xi)$ be $\alpha$ Lipschitz continuous with respect to $l_2$-norm. Let variable $Z$ be distributed according to the distribution $\mu$. Then
\begin{align}\label{eq:lm1_1}
    \norm{\nabla l_\mu(x;\xi)-\nabla l_\mu(y;\xi)}=&\mathbb{E}_{Z\sim\mu}[\nabla l_\mu(x+Z;\xi)-\nabla l_\mu(y+Z;\xi)]\\\notag
    \leq& \alpha\int |\mu(z-x)-\mu(z-y)|dz.
\end{align}
If distribution $\mu$ is rotationally symmetric and non-increasing, the bound is tight and can be attained by the function
\begin{align*}
    l_o(\theta;\xi)=\alpha\frac{\norm{x}^2+\norm{y}^2}{\norm{x-y}}\left|<\frac{x-y}{\norm{x}^2+\norm{y}^2},\theta>-\frac{1}{2}\right|.
\end{align*}
\end{lma}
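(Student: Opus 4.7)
The plan is to proceed from the convolution representation of $l_\mu$. First, using the change of variables $z=\theta-\tau$ in the definition~\eqref{eq:f_mu}, I would rewrite
\begin{equation*}
l_\mu(\theta;\xi)=\int l_o(z;\xi)\,\mu(\theta-z)\,dz,
\end{equation*}
and, since $\mu$ is a smooth Gaussian density with all moments finite, differentiate under the integral sign to get
\begin{equation*}
\nabla l_\mu(\theta;\xi)=\int l_o(z;\xi)\,\nabla_\theta\mu(\theta-z)\,dz.
\end{equation*}
Substituting $x$ and $y$ and subtracting gives the equality in~\eqref{eq:lm1_1}, once one reads its right-hand side as $\mathbb{E}_{Z\sim\mu}[\nabla l_o(x+Z;\xi)-\nabla l_o(y+Z;\xi)]$ (the printed version appears to contain a typo).

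Next I would integrate by parts in the $z$ variable, using $\nabla_\theta\mu(\theta-z)=-\nabla_z\mu(\theta-z)$ and the rapid Gaussian decay at infinity to kill the boundary terms, transferring the gradient from the kernel onto $l_o$:
\begin{equation*}
\nabla l_\mu(x;\xi)-\nabla l_\mu(y;\xi)=\int\nabla l_o(z;\xi)\,[\mu(x-z)-\mu(y-z)]\,dz.
\end{equation*}
Because $l_o$ is only assumed $\alpha$-Lipschitz and not necessarily classically differentiable, this step needs justification: I would mollify $l_o$ by a shrinking Gaussian kernel, apply the argument to the mollified version, and pass to the limit by dominated convergence; equivalently, one can invoke Rademacher's theorem to get $\norm{\nabla l_o}\le\alpha$ almost everywhere and interpret the integration by parts against a weak gradient. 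Taking norms and pulling them inside the integral via the triangle inequality then yields
\begin{equation*}
\norm{\nabla l_\mu(x;\xi)-\nabla l_\mu(y;\xi)}\le\int\norm{\nabla l_o(z;\xi)}\cdot|\mu(x-z)-\mu(y-z)|\,dz\le\alpha\int|\mu(z-x)-\mu(z-y)|\,dz,
\end{equation*}
where the last step uses the central symmetry $\mu(z-x)=\mu(x-z)$ of the zero-mean Gaussian.

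For the tightness claim under a rotationally symmetric, non-increasing $\mu$, the plan is to verify directly that the proposed test function is $\alpha$-Lipschitz (its subgradient has norm $\alpha$ wherever defined) and then to compute its smoothed gradient explicitly. The key geometric fact is that such a $\mu$ forces $\mu(z-x)-\mu(z-y)$ to have constant sign on each side of a hyperplane orthogonal to $x-y$, so the absolute value in the bound can be removed exactly when $\nabla l_o(z)$ is aligned with $x-y$ on one half-space and anti-aligned on the other -- which is precisely what the kink of the absolute-value construction realizes.

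I expect the main obstacle to be the rigorous justification of the integration-by-parts step under only Lipschitz regularity of $l_o$; the mollification-and-limit argument is standard but requires care to dominate the integrand uniformly so that limits commute with both the outer integration and the norm. The tightness calculation is then essentially bookkeeping: once the sign pattern of $\mu(z-x)-\mu(z-y)$ is paired with the piecewise-constant gradient of the absolute-value test function, each inequality in the chain collapses to an equality.
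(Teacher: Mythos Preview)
Your proposal is correct and follows essentially the same route as the paper: both arrive at the representation $\nabla l_\mu(x;\xi)-\nabla l_\mu(y;\xi)=\int\nabla l_o(z;\xi)\,[\mu(z-x)-\mu(z-y)]\,dz$, bound $\norm{\nabla l_o}$ pointwise by $\alpha$, and verify tightness via the same half-space decomposition paired with the piecewise-constant gradient of the absolute-value test function. The only difference is that you reach the key identity by putting the derivative on the kernel and integrating by parts (which, as you note, is robust to mere Lipschitz regularity of $l_o$), whereas the paper differentiates $l_o$ directly under the integral and performs a single change of variables---a shorter path that silently assumes differentiability.
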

\begin{proof}
Let $Z$ be the random variable satisfies distribution $\mu$.
\begin{align*}
    &\mathbb{E}_{Z\sim\mu}[\nabla l_\mu(x+Z;\xi)-\nabla l_\mu(y+Z;\xi)]\\
    =&\int \nabla l_\mu(x+z;\xi)\mu(z)dz-\int \nabla l_\mu(y;\xi)\mu(z)dz\\
    =&\int \nabla l_\mu(x;\xi)\mu(z)dz-\int \nabla l_\mu(y;\xi)\mu(z)dz\\
    =&\int_{I_>}\nabla l_o(z)[\mu(z-x)-\mu(z-y)]dz-\int_{I_<}\nabla l_0(z)[\mu(z-y)-\mu(z-x)]dz
\end{align*}
where
\begin{align*}
    I_{>} =& \{z\in\mathbb{R}^d|\mu(z-x)>\mu(z-y)\},\\
    I_{<} =& \{z\in\mathbb{R}^d|\mu(z-x)<\mu(z-y)\}.
\end{align*}
Obviously,
\begin{align*}
    &\norm{\mathbb{E}_{Z\sim\mu}[\nabla l_\mu(x+Z;\xi)-\nabla l_\mu(y+Z;\xi)]}\\
    \leq&\sup_{z\in I_>\cup I_<}\norm{\nabla l_o(z)}\left|\int_{I_>}[\mu(z-x)-\mu(z-y)]dz-\int_{I_<}l(z)[\mu(z-y)-\mu(z-x)]dz\right|\\
    \leq& \alpha\left|\int_{I_>}[\mu(z-x)-\mu(z-y)]dz-\int_{I_<}l(z)[\mu(z-y)-\mu(z-x)]dz\right|\\
    =&\alpha\int |\mu(z-x)-\mu(z-y)|dz.
\end{align*}
We already prove the inequality \ref{eq:lm1_1}. We are going to show that the bound is tight and could be attained. Since $\mu$ is an rotationally symmetric and non-increasing, the set $I_>$ could be rewritten as
\begin{align*}
    I_{>} =& \{z\in\mathbb{R}^d|\mu(z-x)>\mu(z-y)\}\\
    =&\{z\in\mathbb{R}^d|\norm{z-x}^2>\norm{z-y}^2\}\\
    =&\{z\in\mathbb{R}^d|\langle z,x-y\rangle>\frac{1}{2}(\norm{x}^2+\norm{y}^2)\},
\end{align*}
similarly,
\begin{align*}
    I_{<} =\{z\in\mathbb{R}^d|\langle z,x-y\rangle<\frac{1}{2}(\norm{x}^2+\norm{y}^2)\}.
\end{align*}
For given $x,y$, define function $l_o$ as
\begin{align*}
    l_o(\theta;\xi)=\alpha\frac{\norm{x}^2+\norm{y}^2}{\norm{x-y}}\left|<\frac{x-y}{\norm{x}^2+\norm{y}^2},\theta>-\frac{1}{2}\right|.
\end{align*}
Therefore, the gradient of function $f$ is
\begin{equation}
\nabla l_o(\theta;\xi)=\left\{
\begin{aligned}
\alpha\frac{x-y}{\norm{x-y}} &&&\text{if}\langle \theta,x-y\rangle>\frac{1}{2}(\norm{x}^2+\norm{y}^2)\\
-\alpha\frac{x-y}{\norm{x-y}} &&&\text{if}\langle \theta,x-y\rangle<\frac{1}{2}(\norm{x}^2+\norm{y}^2)
\end{aligned}
\right.
\end{equation}
Hence, 
\begin{align*}
    &\norm{\mathbb{E}_{Z\sim\mu}[\nabla l_\mu(x+Z;\xi)-\nabla l_\mu(y+Z;\xi)]}\\
    =&\norm{\int_{I_>}\nabla l_o(z)[\mu(z-x)-\mu(z-y)]dz-\int_{I_<}\nabla l_o(z)[\mu(z-y)-\mu(z-x)]dz}\\
    =&\norm{\int_{I_>}\alpha\frac{x-y}{\norm{x-y}}[\mu(z-x)-\mu(z-y)]dz+\int_{I_<}\alpha\frac{x-y}{\norm{x-y}}[\mu(z-y)-\mu(z-x)]dz}\\
    =&\norm{\alpha\frac{x-y}{\norm{x-y}}\int|\mu(z-x)-\mu(z-y)|dz}\\
    =&\alpha\int|\mu(z-x)-\mu(z-y)|dz\norm{\frac{x-y}{\norm{x-y}}}\\
    =&\alpha\int|\mu(z-x)-\mu(z-y)|dz
\end{align*}
We already show that the equality holds for given function $l_o$. Therefore the bound is tight.
\end{proof}

% \begin{repthm}{thm:gau}
% % \begin{thm}\label{thm:gau}
% Let $\mu$ be the $\mathcal{N}(0,\sigma^2I_{d\times d})$ distribution. Assume the differentiable loss function $f(w;\xi):\mathbb{R}^d\rightarrow \mathbb{R}$ is $L_0$-Lipschitz continuous and $\beta_0$-smooth with respect to $l_2$-norm. The smoothed loss function $f_\mu(w;\xi)$ is defined as (\ref{eq:f_mu}). Then the following properties hold:
% \begin{itemize}
%   \item[i)]$f_\mu$ is $L_0$-Lipschitz continuous.
%   \item[ii)]$f_\mu$ is continuously differentiable; moreover, its gradient is $\min\{\frac{L_0}{\sigma},\beta_0\}$-Lipschitz continuous, i.e. $f_\mu$ is $\min\{\frac{L_0}{\sigma},\beta_0\}$-smooth.
%   \item[iii)] If f is convex, $f(w;\xi)\leq f_\mu(w;\xi)\leq f(w;\xi)+L_0\sigma\sqrt{d}$.
% \end{itemize}
% In addition, for each bound i)-iii), there exists a function $f$ such that the bound cannot be improved by more than a constant factor.
% % \end{thm}
% \end{repthm}

\begin{repthm}{thm:gau}
% \begin{thm}\label{thm:gau}
% Let $\mu$ be the $\mathcal{N}(0,\sigma^2I_{d\times d})$ distribution. Assume the differentiable loss function $l_o(\theta;\xi):\mathbb{R}^d\rightarrow \mathbb{R}$ is $\alpha$-Lipschitz continuous and $\beta$-smooth with respect to $l_2$-norm. The smoothed loss function $l_\mu(\theta;\xi)$ is defined as (\ref{eq:f_mu}). Then the following properties hold:
% \begin{itemize}
%   \item[i)]$l_\mu$ is $\alpha$-Lipschitz continuous.
%   \item[ii)]$l_\mu$ is continuously differentiable; moreover, its gradient is $\min\{\frac{\alpha}{\sigma},\beta\}$-Lipschitz continuous, i.e. $f_\mu$ is $\min\{\frac{\alpha}{\sigma},\beta\}$-smooth.
%   \item[iii)] If l is convex, $l(\theta;\xi)\leq l_\mu(\theta;\xi)\leq l(\theta;\xi)+\alpha\sigma\sqrt{d}$.
% \end{itemize}
% In addition, for each bound i)-iii), there exists a function $l$ such that the bound cannot be improved by more than a constant factor.
Let $\mu$ be the $\mathcal{N}(0,\sigma^2I_{d\times d})$ distribution. Assume the differentiable loss function $l_o(\theta;\xi):\mathbb{R}^d\rightarrow \mathbb{R}$ is $\alpha$-Lipschitz continuous and $\beta$-smooth with respect to $l_2$-norm. The smoothed loss function $l_\mu(\theta;\xi)$ is defined as (\ref{eq:f_mu}). Then the following properties hold:
\begin{itemize}
  \item[i)]$l_\mu$ is $\alpha$-Lipschitz continuous.
  \item[ii)]$l_\mu$ is continuously differentiable; moreover, its gradient is $\min\{\frac{\alpha}{\sigma},\beta\}$-Lipschitz continuous, i.e., $f_\mu$ is $\min\{\frac{\alpha}{\sigma},\beta\}$-smooth.
  \item[iii)] If $l_o$ is convex, $l_o(\theta;\xi)\leq l_\mu(\theta;\xi)\leq l_o(\theta;\xi)+\alpha\sigma\sqrt{d}$.
\end{itemize}
In addition, for each bound i)-iii), there exists a function $l_o$ such that the bound cannot be improved by more than a constant factor.
% \end{thm}
\end{repthm}

\begin{proof}We are going the prove the properties one by one.
\begin{itemize}
    \item[i)]Since $\nabla l_\mu(\theta;\xi)=\mathbb{E}_{Z\sim\mu}[\nabla l_o(\theta+Z;\xi)]$, we have
    \begin{align*}
        \norm{\nabla l_\mu(\theta;\xi)}=\norm{\mathbb{E}_{Z\sim\mu}[\nabla l_o(\theta+Z;\xi)]}
        \leq\mathbb{E}_{Z\sim\mu}[\norm{\nabla l_o(\theta+Z;\xi)}]\leq \alpha.
    \end{align*}
    Therefore, $l_\mu$ is $\alpha$-Lipschitz continuous. To prove the bound is tight, we define
    $$l_o(\theta;\xi)=\frac{1}{2}v^T\theta,$$
    where $v\in\mathbb{R}^d$ is a scalar. Hence, we have
    $$l_\mu(\theta;\xi)=\mathbb{E}_{Z\sim\mu}[l_o(\theta+Z;\xi)]=\mathbb{E}_{Z\sim\mu}[\frac{1}{2}v^T(\theta-Z)]=\frac{1}{2}v^T\theta=l_o(\theta;\xi).$$
    Both $l_o$ and smoothed $l_\mu$ have the gradient $v$ and $l_\mu$ is exactly $\alpha$-Lipschitz.
    
    \item[ii)] The proof scheme for this part is organized as follow: Firstly we show that $l_\mu$ is $\frac{\alpha}{\sigma}$-smooth and the bound can not be improved by more than a constant factor. Then we show that $l_\mu$ is $\beta$-smooth and the bound can not be improved by more than a constant factor as well. In all, we could draw the conclusion that $l_\mu$ is $\min\{\frac{\alpha}{\sigma},\beta\}$-smooth and the bound is tight.
    
    By Lemma \ref{lma:1}, for $\forall x,y\in\mathbb{R}^n$,
    \begin{align}\label{ieq:norm_I2}
        \norm{\nabla l_\mu(x;\xi)-\nabla l_\mu(y;\xi)}
    \leq \alpha\underbrace{\int |\mu(z-x)-\mu(z-y)|dz}_{I_2}.
    \end{align}
    Denote the integral as $I_2$. We follow a technique used in~\citep{lakshmanan2008decentralized, duchi2012randomized}. Since $\mu(z-x)\geq\mu(z-y)$ is equivalent to $\norm{z-x}\geq\norm{z-y}$,
    \begin{align*}
        I_2=&\int |\mu(z-x)-\mu(z-y)|dz\\
        =&\int_{z:\norm{z-x}\geq\norm{z-y}}[\mu(z-x)-\mu(z-y)]dz+\int_{z:\norm{z-x}\leq\norm{z-y}}[\mu(z-y)-\mu(z-x)]dz\\
        =&2\int_{z:\norm{z-x}\leq\norm{z-y}}[\mu(z-x)-\mu(z-y)]dz\\
        =&2\int_{z:\norm{z-x}\leq\norm{z-y}}\mu(z-x)dz-2\int_{z:\norm{z-x}\leq\norm{z-y}}\mu(z-y)dz.
    \end{align*}
    Denote $u=z-x$ for $\mu(z-x)$ term and $u=z-y$ for $\mu(z-y)$ term, we have
    \begin{align*}
        I_2=&2\int_{z:\norm{u}\leq\norm{u-(x-y)}}\mu(u)dz-2\int_{z:\norm{u}\geq\norm{u-(x-y)}}\mu(u)dz\\
        =&2\mathbb{P}_{Z\sim\mu}(\norm{Z}\leq\norm{Z-(x-y)})-2\mathbb{P}_{Z\sim\mu}(\norm{Z}\geq\norm{Z-(x-y)}).
    \end{align*}
    Obviously,
    \begin{align*}
        &\mathbb{P}_{Z\sim\mu}(\norm{Z}\leq\norm{Z-(x-y)})\\
        =&\mathbb{P}_{Z\sim\mu}(\norm{Z}^2\leq\norm{Z-(x-y)}^2)\\
        =&\mathbb{P}_{Z\sim\mu}(2\langle z, x-y\rangle\leq\norm{x-y}^2)\\
        =&\mathbb{P}_{Z\sim\mu}(2\langle z, \frac{x-y}{\norm{x-y}}\rangle\leq\norm{x-y}),
    \end{align*}
    $\frac{x-y}{\norm{x-y}}$ has norm 1 and $Z\sim \mathcal{N}(0,\sigma^2 I)$ implies $\langle z, \frac{x-y}{\norm{x-y}}\rangle\sim \mathcal{N}(0,\sigma^2I)$. Hence, we have
    \begin{align*}
        &\mathbb{P}_{Z\sim\mu}(\norm{Z}\leq\norm{Z-(x-y)})\\
        =&\mathbb{P}_{Z\sim\mu}(\langle z, \frac{x-y}{\norm{x-y}}\rangle\leq\frac{\norm{x-y}}{2})\\
        =&\int_{-\infty}^{\frac{\norm{x-y}}{2}}\frac{1}{\sqrt{2\pi\sigma^2}}\exp(-\frac{u^2}{2\sigma^2})du.
    \end{align*}
    Similarly,
    \begin{align*}
        &\mathbb{P}_{Z\sim\mu}(\norm{Z}\geq\norm{Z-(x-y)})\\
        =&\mathbb{P}_{Z\sim\mu}(\langle z, \frac{x-y}{\norm{x-y}}\rangle\geq\frac{\norm{x-y}}{2})\\
        =&\int_{\frac{\norm{x-y}}{2}}^{+\infty}\frac{1}{\sqrt{2\pi\sigma^2}}\exp(-\frac{u^2}{2\sigma^2})du.
    \end{align*}
    Therefore,
    \begin{align}
        I_2&=2\int_{-\infty}^{\frac{\norm{x-y}}{2}}\frac{1}{\sqrt{2\pi\sigma^2}}\exp(-\frac{u^2}{2\sigma^2})du-2\int_{\frac{\norm{x-y}}{2}}^{+\infty}\frac{1}{\sqrt{2\pi\sigma^2}}\exp(-\frac{u^2}{2\sigma^2})du\notag\\
        &=2\int_{-\frac{\norm{x-y}}{2}}^{\frac{\norm{x-y}}{2}}\frac{1}{\sqrt{2\pi\sigma^2}}\exp(-\frac{u^2}{2\sigma^2})du\notag\\
        &\leq\frac{\sqrt{2}\norm{x-y}}{\sigma\sqrt{\pi}}\label{eq:I2}
    \end{align}
    In conclusion, combine formula (\ref{ieq:norm_I2}) and (\ref{eq:I2}) we have
    \begin{align*}
        \norm{\nabla l_\mu(x;\xi)-\nabla l_\mu(y;\xi)}
    \leq \alpha\frac{\sqrt{2}\norm{x-y}}{\sigma\sqrt{\pi}}\leq\frac{\alpha}{\sigma}\norm{x-y}.
    \end{align*}
    We finish proving that $l_\mu$ is $\frac{\alpha}{\sigma}$-smooth. We are going to show the bound is tight. For any given $x,y$, define function $l_o$ as
    $$l_o(\theta;\xi)=\alpha\frac{\norm{x}^2+\norm{y}^2}{\norm{x-y}}\left|<\frac{x-y}{\norm{x}^2+\norm{y}^2},\theta>-\frac{1}{2}\right|,$$  
    Uniform Lemma \ref{lma:1} and former proof, we know that
    \begin{align}
        \norm{\nabla l_\mu(x;\xi)-\nabla l_\mu(y;\xi)}
        =&\alpha\int |\mu(z-x)-\mu(z-y)|dz\notag\\
        =&\frac{\sqrt{2}\alpha}{\sigma \sqrt{\pi}}\int_{-\frac{\norm{x-y}}{2}}^{\frac{\norm{x-y}}{2}}\exp(-\frac{u^2}{2\sigma^2})d
    \end{align}
    Because
    $$\frac{\sqrt{2}\alpha}{\sigma\sqrt{\pi}}\exp(-\frac{\norm{x-y}^2}{8\sigma^2})\norm{x-y}
    \leq\frac{\sqrt{2}\alpha}{\sigma \sqrt{\pi}}\int_{-\frac{\norm{x-y}}{2}}^{\frac{\norm{x-y}}{2}}\exp(-\frac{u^2}{2\sigma^2})d\leq\frac{\sqrt{2}\alpha}{\sigma\sqrt{\pi}}\norm{x-y}$$
    Obviously, taking $x,y$ such that $\norm{x-y}\leq2\sqrt{2}\sigma$, 
    $$\frac{\sqrt{2}\alpha}{e\sigma\sqrt{\pi}}\norm{x-y}\leq\norm{\nabla l_\mu(x;\xi)-\nabla l_\mu(y;\xi)}\leq\frac{\sqrt{2}\alpha}{\sigma\sqrt{\pi}}\norm{x-y}$$
    we could conclude the Lipschitz bound for $\nabla l_\mu$ cannot be improved by more than a constant factor.
    
    Then we are going to show smooth objective $l_\mu$ is $\beta$ smooth and the bound is tight.
    \begin{align*}
        \norm{\nabla l_\mu(x;\xi)-\nabla l_\mu(y;\xi)}=&\norm{\nabla\mathbb{E}_{Z\sim\mu}[l_o(x+Z)]-\nabla\mathbb{E}_{Z\sim\mu}[l_o(x+Z)]}\\
        =&\norm{\mathbb{E}_{Z\sim\mu}[\nabla l_o(x+Z)-\nabla l_o(y+Z)]}\\
        =&\norm{\int [\nabla l_o(x+Z)-\nabla l_o(y+Z)]\mu(z)dz}\\
        \leq&\int \norm{\nabla l_o(x+Z)-\nabla l_o(y+Z)}\mu(z)dz\\
        \leq&\int \beta\norm{(x+z)-(y+z)}\mu(z)dz\\
        =&\beta\norm{x-y}\int\mu(z)dz\\
        =&\beta\norm{x-y}
    \end{align*}
    Therefore, $l_\mu$ is $\beta$-smooth. Then we are going to show the bound is tight and cannot be improved. Define $\alpha$ Lipschitz continuous and $\beta$-smooth function $l_o:\mathbb{R}^d\to\mathbb{R}$ as
    \begin{align*}
       l_o(\theta;\xi)=\frac{1}{2}\beta\norm{w}^2\quad\quad \theta\in B(0,\frac{\alpha}{\beta}). 
    \end{align*}
    Hence, we have
    \begin{align*}
        \norm{\nabla l_\mu(x;\xi)-\nabla l_\mu(y;\xi)}=&\norm{\int (\beta x-\beta y)\mu(z)dz}\\
        =&\norm{\beta(x-y)\int \mu(z) dz}\\
        =&\beta\norm{x-y}.
    \end{align*}
    Therefore. $l_\mu$ is exactly $\beta$-smooth.
    
    \item[iii)] By Jensen's inequality, for left hand side:
    \begin{align*}
        l_\mu(\theta;\xi)=\mathbb{E}_{Z\sim\mu}[l_o(\theta+Z;\xi)]\geq l_o(\theta+\mathbb{E}_{Z\sim\mu}[Z];\xi)=l_o(\theta;\xi).
    \end{align*}
    For the tightness proof, defining $l_o(\theta;\xi)=\frac{1}{2}v^T\theta$ for $v\in\mathbb{R}^d$ leads to $l_\mu=l_o$. 
    
    For right hand side:
     \begin{align*}
        l_\mu(\theta;\xi)=&\mathbb{E}_{Z\sim\mu}[l_o(\theta+Z;\xi)]\\
        \leq&l_o(\theta;\xi)+\alpha\mathbb{E}_{Z\sim\mu}[\norm{Z}]\quad\quad (\alpha\text{-Lipchitz continuous})\\
        \leq&l_o(\theta;\xi)+\alpha\sqrt{\mathbb{E}[\norm{Z}^2]}\quad\quad (\frac{\norm{Z}^2}{\sigma^2}\sim\mathcal{X}^2(d))\\
        =&l_o(\theta;\xi)+\alpha\sigma\sqrt{d}.
    \end{align*}
    For the tightness proof, taking $l_o(\theta;\xi)=\alpha\norm{\theta}$.Since $l_\mu(\theta;\xi)\geq c \alpha\sigma\sqrt{d}$ for some constant $c$. Therefore, the bound cannot be improved by more than a constant factor.
\end{itemize}
\end{proof}

\subsection{Brief justification for Theorem \ref{prop:gen}}
\label{sec:genjust}

Note that former works also use the ratio of upper-bounds to approximate the ratio of generalization error. For example, in~\citep{DBLP:journals/corr/ChaudhariCSL17}, they use the ratio of upper-bounds to compare the generalization error of Entropy-SGD and Vanilla-SGD (see their Equation 9). Also, in~\citep{sokolic2017generalization} they use the ratio of upper-bounds (see their Equation 15) to compare the generalization error of the model with and without invariant method. Finally, comparing the properties, including convergence rate or generalization error, of learning algorithms using $\mathcal{O}$ notation is common in the field.

\subsection{Proof for Theorem \ref{prop:gen}}

We first consider the generalization error in the context of the original loss $L$, and then we analyze smoothed loss function $L \circledast K$. The true loss is defined as
\begin{align}\label{eq:L_true}
    L^{true}(\theta):=\mathbb{E}_{\xi\sim D}l(\theta;\xi).
\end{align}
 where $l$ is an arbitrary loss function (i.e., $l_o$ for SGD case and $l_{\mu}$ for LPF-SGD case). Since the distribution $\mathcal{D}$ is unknown, we replace the true loss by the empirical loss given as
 \begin{align}\label{eq:L}
     L^{\mathcal{S}}(\theta):= \frac{1}{m}\sum_{i=1}^ml(\theta;\xi).
 \end{align}
 In order to bound the generalization error $\epsilon_g$, we consider the following stability bound.

\begin{D1}[$\epsilon_s$-uniform stability \citep{hardt2016train}]
% \begin{definition}[$\epsilon_s$-uniform stability \citep{hardt2016train}]
Let $\mathcal{S}$ and $\mathcal{S}'$ denote two data sets from input data distribution $\mathcal{D}$ such that $\mathcal{S}$ and $\mathcal{S}'$ differ in at most one example. Algorithm $A$ is $\epsilon_s$-uniformly stable if and only if for all data sets $\mathcal{S}$ and $\mathcal{S}'$ we have
\begin{align}
    \sup_\xi\mathbb{E}[l(A(\mathcal{S});\xi)-l(A(\mathcal{S}');\xi)]\leq\epsilon_s.
\end{align}
% \end{definition}
\end{D1}

The following theorem, proposed in \citep{hardt2016train}, implies that the generalization error could be bounded using the uniform stability bound. 

\begin{repthm}{thm:gen1}
% \begin{thm}[\citep{hardt2016train}]\label{thm:gen1}
     If $A$ is an $\epsilon_s$-uniformly stable algorithm, then the generalization error (the gap between the true risk and the empirical risk) of $A$ is upper-bounded by the stability factor $\epsilon_s$:
    \begin{align}
        \epsilon_g:=\mathbb{E}_{\mathcal{S},A}[L^{true}(A(\mathcal{S}))-L^{\mathcal{S}}(A(\mathcal{S}))]\leq\epsilon_s
    \end{align}
% \end{thm}
\end{repthm}
Denote the original true loss and empirical loss as:
\begin{align*}
    L^{true}_o(\theta):=\mathbb{E}_{\xi\sim D}l_o(\theta;\xi)\quad\text{and}\quad
    L^{\mathcal{S}}_o(\theta):= \frac{1}{m}\sum_{i=1}^ml_o(\theta;\xi).
\end{align*}
Denote the stability gap and generalization error of original loss function as $\epsilon_s^o$ and $\epsilon_g^o$, respectively. Theorem~\ref{thm:gen2} bounds links the stability with Lipschitz factor $\alpha$, smoothing factor $\beta$, and number of iterations $T$ of SGD. Its proof can be found in~\citep{hardt2016train}.

\begin{thm}[Uniform stability of SGD~\citep{hardt2016train}]
\label{thm:gen2}
Assume that $l_o(\theta;\xi)\in[0,1]$ is a $\alpha$-Lipschitz and $\beta$-smooth loss function for every example $\xi$. Suppose that we rum SGD for $T$ steps with monotonically non-increasing step size $\eta_t\leq c/t$. Then SGD is uniformly stable with the stability factor $\epsilon_s^o$ satisfying:
     \begin{align}
         \epsilon_s^o\leq\frac{1+1/\beta c}{n-1}(2c\alpha^2)^{\frac{1}{\beta c+1}}T^{\frac{\beta c}{\beta c+1}}.
     \end{align}
% \end{thm}
\end{thm}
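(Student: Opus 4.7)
The plan is to treat LPF-SGD as ordinary SGD applied to the smoothed surrogate $l_\mu$ and then feed the altered Lipschitz/smoothness constants of $l_\mu$ into the off-the-shelf SGD stability bound (Theorem~\ref{thm:gen2}). Concretely, first I would invoke Theorem~\ref{thm:gau} to get that $l_\mu(\theta;\xi)$ is still $\alpha$-Lipschitz and is $\hat{\beta}$-smooth with $\hat{\beta}:=\min\{\alpha/\sigma,\beta\}\le\beta$. This is the one qualitative change the LPF produces: the Lipschitz constant is preserved while the smoothness constant can only improve. Since the LPF-SGD iterates are, in expectation, SGD iterates on $l_\mu$ (cf. Eq.~\ref{eq:main3}), Theorem~\ref{thm:gen2} applies to LPF-SGD verbatim with $\beta$ replaced by $\hat{\beta}$, and the resulting uniform-stability bound is, through Theorem~\ref{thm:gen1}, an upper bound $\hat{\epsilon}_g^{\mu}$ on the generalization error.

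With the two bounds in hand I would carry out the algebra of the ratio. Writing $p=\frac{1}{\beta c+1}$ and $\hat{p}=\frac{1}{\hat{\beta} c+1}$, one has $1-p=\frac{\beta c}{\beta c+1}$ and $1+\frac{1}{\beta c}=\frac{1}{1-p}$, with the analogous identities for $\hat{p}$. Plugging these into
\[
\rho=\frac{\hat{\epsilon}_g^{\mu}}{\hat{\epsilon}_g^{o}}=\frac{(1+1/\hat{\beta}c)(2c\alpha^2)^{\hat{p}}T^{\,1-\hat{p}}}{(1+1/\beta c)(2c\alpha^2)^{p}T^{\,1-p}}
\]
collapses, after cancellation of the $1/(n-1)$ factor and the identification $\hat{p}-p$ of exponents, to the stated form $\rho=\frac{1-p}{1-\hat{p}}\bigl(\tfrac{2c\alpha}{T}\bigr)^{\hat{p}-p}=O\bigl(T^{-(\hat{p}-p)}\bigr)$. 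The hypothesis $\sigma>\alpha/\beta$ gives $\alpha/\sigma<\beta$, so $\hat{\beta}=\alpha/\sigma$ and hence $\hat{p}>p$, guaranteeing that the exponent $\hat{p}-p$ is strictly positive, which is what makes the ratio shrink in $T$.

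For the two corollaries I would argue monotonicity directly. For (i), once $\hat{p}>p$ is established, solving $\frac{1-p}{1-\hat{p}}(2c\alpha/T)^{\hat{p}-p}\ll 1$ for $T$ yields the threshold $T\gg 2c\alpha^{2}\bigl(\tfrac{1-p}{1-\hat{p}}\bigr)^{\frac{1}{\hat{p}-p}}$ stated in the theorem. For (ii), I would take $\tfrac{d\rho}{d\sigma}$; since $\hat{p}$ is a monotone increasing function of $\sigma$ on the regime $\sigma>\alpha/\beta$, it suffices to show $\tfrac{\partial\log\rho}{\partial\hat{p}}=\tfrac{1}{1-\hat{p}}+\log(2c\alpha/T)<0$, which rearranges to a lower bound on $T$ of the form stated; this step essentially just bounds $1/(1-\hat{p})$ by a quantity expressible through $p$ to produce the clean threshold $T>2c\alpha^{2}e^{-p}$.

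The main obstacle I anticipate is not the ratio computation itself but keeping the monotonicity argument in (ii) clean: the derivative in $\sigma$ mixes a growing prefactor $(1-p)/(1-\hat{p})$ with a shrinking power $(2c\alpha/T)^{\hat{p}-p}$, so one must verify that the shrinkage dominates uniformly over the admissible range of $\sigma$. A secondary subtlety is justifying that the standard SGD stability bound transfers to LPF-SGD despite the Monte-Carlo approximation in Eq.~\ref{eq:main3}; I would handle this by noting that each MC sample is an unbiased gradient of $l_\mu$ on a random datum, so the abstract stability analysis of SGD, which relies only on expansiveness of each update, applies unchanged with $\beta$ replaced by $\hat{\beta}$.
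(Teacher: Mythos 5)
There is a fundamental mismatch between what you set out to prove and the statement you were asked to prove. The statement is Theorem~\ref{thm:gen2}, the uniform stability bound for plain SGD, $\epsilon_s^o\leq\frac{1+1/\beta c}{n-1}(2c\alpha^2)^{\frac{1}{\beta c+1}}T^{\frac{\beta c}{\beta c+1}}$. Your proposal does not prove this; it \emph{assumes} it (you explicitly call it ``the off-the-shelf SGD stability bound'') and then uses it, together with Theorem~\ref{thm:gau}, to derive the ratio result of Theorem~\ref{prop:gen}. In other words, you have written a plan for proving a downstream theorem, and the one ingredient you treat as a black box is precisely the statement under consideration. Relative to the assigned task this is circular, and nothing in your proposal touches the actual content of the stability bound. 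For the record, the paper itself does not reprove this theorem either: it is imported verbatim from \citet{hardt2016train} with the remark that the proof can be found there.

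If you wanted to actually prove Theorem~\ref{thm:gen2}, the argument (from Hardt et al.) is of a different character from anything in your proposal: one couples two SGD trajectories run on neighboring data sets $\mathcal{S}$, $\mathcal{S}'$ with the same randomness, tracks $\delta_t=\norm{\theta_t-\theta_t'}$, and uses (a) the fact that a gradient step on a $\beta$-smooth function is $(1+\eta_t\beta)$-expansive, (b) the fact that the differing example is selected with probability $1/n$ and contributes at most $2\eta_t\alpha$ to the divergence, and (c) the boundedness $l_o\in[0,1]$ to control the event that the trajectories separate before some burn-in time $t_0$. Unrolling the resulting recursion with $\eta_t\leq c/t$ and then optimizing over $t_0$ is exactly what produces the exponents $\frac{1}{\beta c+1}$ and $\frac{\beta c}{\beta c+1}$ in the bound. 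None of these steps appear in your write-up (you gesture at ``expansiveness of each update'' only to argue the bound transfers to $l_\mu$, not to establish the bound itself), so as a proof of the stated theorem the proposal has a complete gap. The algebra you do carry out is, in substance, the paper's proof of Theorem~\ref{prop:gen}, and would be a reasonable submission for that statement instead.
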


Now, we have already bound the stability gap $\epsilon_s^{o}$ on original loss. Then we will move onto the stability gap $\epsilon_s^{\mu}$ of loss for Gaussian LPF kernel smoothed loss function. Let $\mu$ be distribution $\mathcal{N}(0,\sigma^2 I)$. By the definition of Gaussian LPF (Definition~\ref{def:lpf}), the true loss and the empirical loss with respect to the Gaussian LPF smoothed function are
\begin{align}
    &L_\mu^{true}(\theta):=(L^{true}_o \circledast K) (\theta)= \int_{-\infty}^{\infty} L^{true}_o(\theta-\tau) \mu(\tau) d\tau=\mathbb{E}_{Z\sim\mu}[L^{true}_o(\theta+Z)],\\
    &L_\mu^{\mathcal{S}}(\theta)\quad:=(L^{\mathcal{S}}_o \circledast K) (\theta) = \int_{-\infty}^{\infty} L^{\mathcal{S}}_o(\theta-\tau) \mu(\tau) d\tau=\mathbb{E}_{Z\sim\mu}[L^{\mathcal{S}}_o(\theta+Z)],
\end{align}
where K is the Gaussian LPF kernel satisfies distribution $\mu$ and Z is a random variable satisfies distribution $\mu$. Since $L^{true}_o(\theta):=\mathbb{E}_{\xi\sim D}l_o(\theta;\xi)$ and $L^{\mathcal{S}}_o(\theta):= \frac{1}{m}\sum_{i=1}^ml_o(\theta;\xi)$, $L_\mu^{true}$ and $L_\mu$ could be rewritten as
\begin{align}
    &L_{\mu}^{true}(\theta)\!=\!\!\!\int_{-\infty}^{\infty} \!\!\!\!\mathbb{E}_{\xi\sim D}[l_o(\theta-\tau;\xi)]\mu(\tau) d\tau\!=\!\mathbb{E}_{\xi\sim D}\left[\!\int_{-\infty}^{\infty}\!\!\!l_o(\theta-\tau;\xi)\mu(\tau) d\tau\right]\!=\!\mathbb{E}_{\xi\sim D}\left[l_\mu(\theta;\xi)\right]\label{eq:Ltrue_mu}\\
    &L_{\mu}^{\mathcal{S}}(\theta)\!=\!\int_{-\infty}^{\infty} \!\!\frac{1}{m}\sum_{i=1}^ml_o(\theta;\xi)\mu(\tau) d\tau\!=\!\frac{1}{m}\sum_{i=1}^m\left[\!\int_{-\infty}^{\infty}\!\!l_o(\theta-\tau;\xi_i)\mu(\tau) d\tau\right]=\frac{1}{m}\sum_{i=1}^ml_\mu(\theta;\xi_i).\label{eq:L_mu}
\end{align}
Compare formulas (\ref{eq:Ltrue_mu}-\ref{eq:L_mu}) with (\ref{eq:L_true}-\ref{eq:L}). We could conclude that the true and empirical Gaussian LPF smoothed loss function ($L^{true}_\mu$ and $L^{\mathcal{S}}_\mu$) is exactly the formula of original true and empirical loss ($L^{true}_o$ and $L^{\mathcal{S}}_o$) by replacing $l_o(\theta;\xi)$ with smoothed $l_\mu(\theta;\xi)$. Since LPF-SGD is exactly performing SGD iteration on Gaussian LPF smoothed loss function instead of original loss, the generalization error and stability gap of LPF-SGD also satisfies Theorem \ref{thm:gen1} and Theorem \ref{thm:gen2} after replacing $l_o$ with $l_\mu$.

In section \ref{subsec:gau} we analyze the change of Lipschitz continuous and smooth properties of the objective function after Gaussian LPF smoothing. Therefore, by Theorem \ref{thm:gau}, $l_\mu$ is $\alpha$-Lipschitz continuous and $\min\{\frac{\alpha}{\sigma},\beta\}$-smooth. Define $\hat{\beta}=\min\{\frac{\alpha}{\sigma},\beta\}$, then we could bound the stability gap for LPF-SGD as
\begin{align*}
    \epsilon_s^\mu\leq\frac{1+1/\hat{\beta} c}{n-1}(2c\alpha^2)^{\frac{1}{\beta c+1}}T^{\frac{\beta c}{\beta c+1}}.
\end{align*}

Combine Theorem \ref{thm:gau} with Theorem \ref{thm:gen2} we could have the following proposition.

\begin{repthm}{prop:gen}[Generalization error (GE) bound of LPF-SGD]
Assume that $l_o(\theta;\xi)\in[0,1]$ is a $\alpha$-Lipschitz and $\beta$-smooth loss function for every example $\xi$. 
    Suppose that we run SGD and LPF-SGD for $T$ steps with non-increasing learning rate $\eta_t\leq c/t$. Denote the generalization error bound (GE bound) of SGD and LPF-SGD as $\hat{\epsilon}_g^o$ and $\hat{\epsilon}_g^{\mu}$, respectively. Then the ratio of GE bound is
    \vspace{-0.05in}
    \begin{equation}
\rho=\frac{\hat{\epsilon}_g^{\mu}}{\hat{\epsilon}_g^o}=\frac{1-p}{1-\hat{p}}\left(\frac{2c\alpha}{T}\right)^{\hat{p}-p}=O(\frac{1}{T^{\hat{p}-p}}),
    \end{equation}
    \vspace{-0.15in}
    
    where $p=\frac{1}{\beta c+1}$, $\hat{p}=\frac{1}{\min\{\frac{\alpha}{\sigma},\beta\}c+1}$.
    
    Finally, the following two properties hold:
    \vspace{-0.1in}
    \begin{itemize}
        \item[i)] If  $\sigma>\frac{\alpha}{\beta}$ and $T\gg2c\alpha^2\left(\frac{1-p}{1-\hat{p}}\right)^{\frac{1}{\hat{p}-p}}$, $\rho\ll 1$.
        \vspace{-0.05in}
        \item[ii)] If $\sigma>\frac{\alpha}{\beta}$ and $T>2c\alpha^2\exp(\frac{2}{1-p})$, increasing $\sigma$ leads to a smaller $\rho$.
    \end{itemize}
    \vspace{-0.1in}
\end{repthm}

\begin{proof}
 For easy notation, denote $\hat{\beta}=\min\{\frac{\alpha}{\sigma},\beta\}$, $\epsilon_S^o$ and $\epsilon_s^\mu$ are stability gaps of SGD and LPF-SGD, respectively. 
%  Therefore, when the learning rate is $\eta=c/t$ and run optimization algorithm for $T$ steps, denote the stability gap for SGD and LPF-SGD as $\epsilon_s^o$ and $\epsilon_s^\mu$, 
 From Theorem \ref{thm:gen2} and based on the facts that $l_o$ is $\alpha$-Lipschitz continuous and $\beta$-smooth and that smoothed objective $l_\mu$ is $\alpha$-Lipschitz continuous and $\min\{\frac{\alpha}{\sigma},\beta\}$-smooth, the upper bounds for the stability gaps are
\begin{align*}
    \epsilon_s^o&\leq\frac{1+1/\beta c}{n-1}(2c\alpha^2)^{\frac{1}{\beta c+1}}T^{\frac{\beta c}{\beta c+1}},\\
    \epsilon_s^\mu&\leq\frac{1+1/\hat{\beta} c}{n-1}(2c\alpha^2)^{\frac{1}{\hat{\beta} c+1}}T^{\frac{\hat{\beta} c}{\hat{\beta} c+1}}.
\end{align*}
Denote $p=\frac{1}{\beta c+1}$, $\hat{p}=\frac{1}{\hat{\beta}c+1}$, the bound could be rewritten as
\begin{align*}
    \epsilon_s^o&\leq\frac{1 }{(n-1)(1-p)}(2c\alpha^2)^{p}T^{1-p},\\
    \epsilon_s^\mu&\leq\frac{1 }{(n-1)(1-\hat{p})}(2c\alpha^2)^{\hat{p}}T^{1-\hat{p}}.
\end{align*}
By Theorem \ref{thm:gen1}, the generalization errors $\epsilon_{g}^o$ and $\epsilon_g^\mu$ are bounded by stability gaps $\epsilon_s^o$ and $\epsilon_s^{\mu}$:
% Therefore, we approximate the ratio of generalization error with the ratio of stability gap
\begin{align*}
    \epsilon_{g}^o&\leq \epsilon_{s}^o\leq\frac{1 }{(n-1)(1-p)}(2c\alpha^2)^{p}T^{1-p},\\
    \epsilon_g^\mu&\leq\epsilon_s^\mu\leq\frac{1 }{(n-1)(1-\hat{p})}(2c\alpha^2)^{\hat{p}}T^{1-\hat{p}}.
\end{align*}
Therefore, the GE bound for SGD and LPF-SGD could be written as
\begin{align*}
    \hat{\epsilon}_{g}^o = \frac{1 }{(n-1)(1-p)}(2c\alpha^2)^{p}T^{1-p} \text{ and }
    \hat{\epsilon}_g^\mu = \frac{1 }{(n-1)(1-\hat{p})}(2c\alpha^2)^{\hat{p}}T^{1-\hat{p}},
\end{align*}
respectively. Then the ratio of GE bound is
\begin{align*}
    \rho=\frac{\hat{\epsilon}_g^{\mu}}{\hat{\epsilon}_g^o}=\frac{1-p}{1-\hat{p}}\left(\frac{2c\alpha}{T}\right)^{\hat{p}-p}=O(\frac{1}{T^{\hat{p}-p}}).
\end{align*}

\begin{itemize}
    \item[i)] When $\sigma>\frac{\alpha}{\beta}$ and $T>2c\alpha^2\left(\frac{1-p}{1-\hat{p}}\right)^{\frac{1}{\hat{p}-p}}$, $\rho=\frac{1-p}{1-\hat{p}}(\frac{2c\alpha^2}{T})^{\hat{p}-p}< 1$. Therefore, if $\sigma>\frac{\alpha}{\beta}$ and $T\gg 2c\alpha^2\left(\frac{1-p}{1-\hat{p}}\right)^{\frac{1}{\hat{p}-p}}$, $\rho\ll1$ and property i) holds.
    \item[ii)] Denote $x:=\hat{p}-p$, the reciprocal of approximated ratio could be re-written as
    \begin{align*}
        \frac{1}{\rho}\approx(1-\frac{\hat{p}-p}{1-p})(\frac{T}{2c\alpha^2})^{\hat{p}-p}=(1-\frac{x}{1-p})(\frac{T}{2c\alpha^2})^{x}
    \end{align*}
    Define function $h(x) =(1-ax)b^x$, where $a=\frac{1}{1-p}$ and $b=\frac{T}{2c\alpha^2}$. Compute the derivative of function $h$:
    \begin{align*}
        h'(x)=(-ax\ln b-a+\ln b)b^x
    \end{align*}
    $$h'(x_0)=0\Longleftrightarrow x_0=\frac{lnb-a}{a\ln b}$$
    When $x\leq\frac{lnb-a}{a\ln b}$, $h'(x)\geq0$. Otherwise $h'(x)<0$. Since $0<p<\hat{p}<1$, obviously the domain of function $h$ is in the interval $[0,1]$. If $\frac{lnb-a}{a\ln b}>1$, the function $h$ is increasing in its domain. Which means that if the difference between $\hat{p}$ and $p$ increase, the reciprocal of approximated ratio of stability gap $\frac{1}{\rho}$ increase, which is equivalent to the approximate ratio $\rho$ of stability gap decrease. Because
    $$\frac{\ln b-a}{a\ln b}>1\Longleftrightarrow \ln b>\frac{a}{1-a}\Longleftrightarrow T>2c\alpha^2e^{-p}.$$
    In all, we could conclude if $T>2c\alpha^2e^{-p}$, $\hat{p}-p$ increase leads to the approximate ratio of stability gap $\rho$ decrease.
    
    What's more, we are going to analysis the relation between Gaussian filter factor $\sigma$ and the difference $\hat{p}-p$. Since
    $$p=\frac{1}{\beta c+1},\quad\hat{p}=\frac{1}{\hat{\beta}c+1},$$
    
    where $\hat{\beta}=\min\{\frac{\alpha}{\sigma},\beta\}$. $\hat{p}-p$ increase is equivalent to $\hat{\beta}$ decrease. When the Gaussian factor $\sigma$ is large enough ($\frac{\alpha}{\sigma}<\beta$), the smoother factor $\hat{\beta}$ for function $l_\mu$ is exactly $\frac{\alpha}{\sigma}$. Moreover, increasing the factor $\sigma$ leads to the decrease of $\hat{\beta}$. 
    
    Due to the analysis above, if $T>2c\alpha^2e^{-p}$ and $\frac{\alpha}{\sigma}<\beta$, increasing $\sigma$ will cause the approximate ratio $\rho$ to decrease and the generalization error will be smaller. We finish the proof for condition ii).
\end{itemize}
\end{proof}

\subsection{ Non-scalar covariance version}\label{subsec:non-qua}
In this section, we analysis the case when the covariance $\Sigma = \gamma*diag(||\theta_{1}||, ||\theta_{2}|| \cdots ||\theta_{k}||)$ for Gaussian kernel K is no longer a scalar diagonal matrix. For easy notation, we denoted $\Sigma=diag(\sigma_1^2,\cdots,\sigma_d^2)$ where $\sigma_i^2=\gamma*||\theta_{i}||$.

\begin{thm}\label{thm:gau2}
Let $\mu$ be the $\mathcal{N}(0,\Sigma)$ distribution, where $\Sigma=diag(\sigma_1^2,\cdots,\sigma_d^2)\in\mathbb{R}^{d\times d}$ is diagonal. Denote $\sigma_-^2=\min\{\sigma_1^2,\cdots,\sigma_d^2\}$. Assume the differentiable loss function $l_o(\theta;\xi):\mathbb{R}^d\rightarrow \mathbb{R}$ is $\alpha$-Lipschitz continuous and $\beta$-smooth with respect to $l_2$-norm. The smoothed loss function $l_\mu(\theta;\xi)$ is defined as (\ref{eq:f_mu}). Then the following properties hold:
\begin{itemize}
  \item[i)]$l_\mu$ is $\alpha$-Lipschitz continuous.
  \item[ii)]$l_\mu$ is continuously differentiable; moreover, its gradient is $\min\{\frac{\alpha}{\sigma_-},\beta\}$-Lipschitz continuous, i.e. $l_\mu$ is $\min\{\frac{\alpha}{\sigma_-},\beta\}$-smooth.
  \item[iii)] If l is convex, $l_\mu(\theta;\xi)=l(\theta;\xi)+\alpha\sqrt{tr(\Sigma)}=l(\theta;\xi)+\alpha\sqrt{\sum_{i=1}^d\sigma_i^2}$.
\end{itemize}
In addition, for bound i) and iii), there exists a function $l$ such that the bound cannot be improved by more than a constant factor.
\end{thm}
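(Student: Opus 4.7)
The strategy is to reduce the anisotropic case to the isotropic case (Theorem~\ref{thm:gau}) via a convolutional decomposition of the Gaussian measure. Since every diagonal entry of $\Sigma$ satisfies $\sigma_i^2 \geq \sigma_-^2$, the matrix $\Sigma - \sigma_-^2 I$ is positive semi-definite. Hence if $Z_1 \sim \mathcal{N}(0,\sigma_-^2 I)$ and $Z_2 \sim \mathcal{N}(0,\Sigma - \sigma_-^2 I)$ are independent, then $Z_1 + Z_2 \sim \mathcal{N}(0,\Sigma) = \mu$. Writing $\mu_1 = \mathcal{N}(0,\sigma_-^2 I)$ and $\mu_2 = \mathcal{N}(0,\Sigma - \sigma_-^2 I)$, Fubini gives the tower representation
\begin{equation*}
 l_\mu(\theta;\xi) \;=\; \mathbb{E}_{Z_2 \sim \mu_2}\bigl[\,l_{\mu_1}(\theta + Z_2;\xi)\,\bigr],
\end{equation*}
where $l_{\mu_1}$ is the isotropic-variance smoothing to which Theorem~\ref{thm:gau} directly applies. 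Thus I immediately know that $l_{\mu_1}$ is $\alpha$-Lipschitz, $\min\{\alpha/\sigma_-,\beta\}$-smooth, and (when $l_o$ is convex) sandwiched between $l_o$ and $l_o + \alpha\sigma_-\sqrt{d}$.

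\textbf{Propagating properties through the outer expectation.} The second step is to observe that each of the three properties is preserved when averaging a function against an independent translation. For (i), I will bound $|l_\mu(x;\xi)-l_\mu(y;\xi)| \leq \mathbb{E}_{Z_2}|l_{\mu_1}(x+Z_2;\xi)-l_{\mu_1}(y+Z_2;\xi)| \leq \alpha\|x-y\|$; for (ii), I will differentiate under the integral sign (justified by dominated convergence using the Lipschitz bound) to get $\nabla l_\mu(\theta;\xi) = \mathbb{E}_{Z_2}[\nabla l_{\mu_1}(\theta + Z_2;\xi)]$, and then $\|\nabla l_\mu(x;\xi) - \nabla l_\mu(y;\xi)\| \leq \mathbb{E}_{Z_2}\|\nabla l_{\mu_1}(x+Z_2;\xi) - \nabla l_{\mu_1}(y+Z_2;\xi)\| \leq \min\{\alpha/\sigma_-,\beta\}\|x-y\|$. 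Convexity of $l_{\mu_1}$ also persists under averaging.

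\textbf{The sandwich bound in (iii).} Rather than passing through $l_{\mu_1}$, I will attack (iii) directly to get the cleaner constant $\alpha\sqrt{\operatorname{tr}(\Sigma)}$. The lower bound $l_o(\theta;\xi)\leq l_\mu(\theta;\xi)$ follows from Jensen applied to convex $l_o$ with $\mathbb{E}[Z]=0$. For the upper bound,
\begin{equation*}
 l_\mu(\theta;\xi) - l_o(\theta;\xi) \;\leq\; \alpha\,\mathbb{E}_{Z\sim\mu}\|Z\| \;\leq\; \alpha\sqrt{\mathbb{E}\|Z\|^2} \;=\; \alpha\sqrt{\operatorname{tr}(\Sigma)},
\end{equation*}
using $\alpha$-Lipschitzness and Jensen's inequality (concavity of $\sqrt{\cdot}$). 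Note this is strictly sharper than propagating the isotropic bound through $Z_2$, which would give $\alpha\sigma_-\sqrt{d} + \alpha\sqrt{\operatorname{tr}(\Sigma - \sigma_-^2 I)}$.

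\textbf{Tightness.} For (i), the linear function $l_o(\theta;\xi)=\tfrac{1}{2}v^\top\theta$ is unchanged by convolution with any centered Gaussian (its gradient $v$ can be picked with $\|v\|=\alpha$), matching the bound. For (iii), picking $l_o(\theta;\xi)=\alpha\|\theta\|$ and using $\mathbb{E}\|Z\|\geq c\sqrt{\operatorname{tr}(\Sigma)}$ (which follows from standard concentration of the norm of a Gaussian vector with diagonal covariance) yields the bound up to a constant. The main subtlety I anticipate is the smoothness bound in (ii): the proof in Theorem~\ref{thm:gau} leveraged rotational symmetry of the isotropic Gaussian, which fails for general $\Sigma$; the decomposition trick above is precisely what lets me sidestep this, and is the technical linchpin of the argument.
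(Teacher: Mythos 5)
Your proof is correct, and for part (ii) it takes a genuinely different route from the paper. For (i) and (iii) the two arguments coincide: pushing the norm inside the expectation of the gradient, Jensen for the lower bound, and $\alpha\,\mathbb{E}\|Z\|\leq\alpha\sqrt{\mathbb{E}\|Z\|^2}=\alpha\sqrt{\operatorname{tr}(\Sigma)}$ for the upper bound, with the same tightness witnesses $\tfrac{1}{2}v^\top\theta$ and $\alpha\|\theta\|$. For (ii), however, the paper redoes the total-variation computation of Lemma~\ref{lma:1} directly for the anisotropic kernel, projecting $Z$ onto $\tfrac{x-y}{\|x-y\|}$ to get a one-dimensional Gaussian with variance $\sum_i p_i^2\sigma_i^2\geq\sigma_-^2$ and bounding the resulting integral by $\sqrt{2/\pi}\,\|x-y\|/\sigma_-$; your convolutional decomposition $\mathcal{N}(0,\Sigma)=\mathcal{N}(0,\sigma_-^2I)*\mathcal{N}(0,\Sigma-\sigma_-^2I)$ instead reduces everything to the already-proved isotropic Theorem~\ref{thm:gau} and then uses that Lipschitz constants (of $l_{\mu_1}$ and of $\nabla l_{\mu_1}$) are preserved under averaging over an independent translation. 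Your route is arguably cleaner: the paper's direct computation reuses the step ``$\mu(z-x)\geq\mu(z-y)\Leftrightarrow\|z-x\|\geq\|z-y\|$,'' which is literally valid only for rotationally symmetric densities (for general diagonal $\Sigma$ the correct equivalence is in the Mahalanobis norm $\|\Sigma^{-1/2}(\cdot)\|$, after which the same final bound emerges), whereas your reduction confines all symmetry arguments to the isotropic component where they are unimpeachable. Two small points you should make explicit: $\Sigma-\sigma_-^2I$ is singular, so $\mu_2$ is a degenerate Gaussian supported on a subspace, which is harmless but worth a sentence; and the tightness claim $\mathbb{E}\|Z\|\geq c\sqrt{\operatorname{tr}(\Sigma)}$ deserves a one-line justification (e.g.\ H\"older applied to $X=\|Z\|^2$ using $\mathbb{E}X^2\leq 3(\mathbb{E}X)^2$ gives $c=1/\sqrt{3}$), though the paper itself asserts it without proof. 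Like the paper, you correctly refrain from claiming tightness of (ii) in the anisotropic setting.
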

\begin{proof}
We are going to prove the properties one by one.
\begin{itemize}
\item[i)]The proof for properties i) is exactly the same as Theorem \ref{thm:gau}.
\item[ii)]As is shown in the proof for Theorem \ref{thm:gau}, firstly, we need to first address that $l_\mu$ is $\frac{\alpha}{\sigma}$-smooth. then show that $l_\mu$ is $\beta$-smooth. Since, the proof for second part remains the same as what in Theorem \ref{thm:gau}. We will focus on demonstrating $l_\mu$ is $\frac{\alpha}{\sigma}$-smooth.

By Lemma \ref{lma:1}, for $\forall x,y\in\mathbb{R}^n$,
    \begin{align}\label{ieq:norm_I2_2}
        \norm{\nabla l_\mu(x;\xi)-\nabla l_\mu(y;\xi)}
    \leq \alpha\underbrace{\int |\mu(z-x)-\mu(z-y)|dz}_{I_2}.
    \end{align}
    Denoted the integral as $I_2$. We follow the technique in~\citep{lakshmanan2008decentralized} and~\citep{duchi2012randomized}. Since $\mu(z-x)\geq\mu(z-y)$ is equivalent to $\norm{z-x}\geq\norm{z-y}$,
    \begin{align*}
        I_2=&\int |\mu(z-x)-\mu(z-y)|dz\\
        =&\int_{z:\norm{z-x}\geq\norm{z-y}}[\mu(z-x)-\mu(z-y)]dz+\int_{z:\norm{z-x}\leq\norm{z-y}}[\mu(z-y)-\mu(z-x)]dz\\
        =&2\int_{z:\norm{z-x}\leq\norm{z-y}}[\mu(z-x)-\mu(z-y)]dz\\
        =&2\int_{z:\norm{z-x}\leq\norm{z-y}}\mu(z-x)dz-2\int_{z:\norm{z-x}\leq\norm{z-y}}\mu(z-y)dz.
    \end{align*}
    Denote $u=z-x$ for $\mu(z-x)$ term and $u=z-y$ for $\mu(z-y)$ term, we have
    \begin{align*}
        I_2=&2\int_{z:\norm{u}\leq\norm{u-(x-y)}}\mu(u)dz-2\int_{z:\norm{u}\geq\norm{u-(x-y)}}\mu(u)dz\\
        =&2\mathbb{P}_{Z\sim\mu}(\norm{Z}\leq\norm{Z-(x-y)})-2\mathbb{P}_{Z\sim\mu}(\norm{Z}\geq\norm{Z-(x-y)}).
    \end{align*}
    Obviously,
    \begin{align*}
        &\mathbb{P}_{Z\sim\mu}(\norm{Z}\leq\norm{Z-(x-y)})\\
        =&\mathbb{P}_{Z\sim\mu}(\norm{Z}^2\leq\norm{Z-(x-y)}^2)\\
        =&\mathbb{P}_{Z\sim\mu}(2\langle z, x-y\rangle\leq\norm{x-y}^2)\\
        =&\mathbb{P}_{Z\sim\mu}(2\langle z, \frac{x-y}{\norm{x-y}}\rangle\leq\norm{x-y}),
    \end{align*}
    Denote $p=\frac{x-y}{\norm{x-y}}\in\mathbb{R}^{d\times d}$, $\frac{x-y}{\norm{x-y}}$ has norm 1 implies $\sum_{i=1}^dp_i^2=1$. Since $Z\sim \mathcal{N}(0,\Sigma)$ and $\Sigma=diag(\sigma_1^2,\cdots,\sigma_d^2)$, each element in vector $Z$ satisfies $z_i\sim \mathcal{N}(0,\sigma_i^2)$. Hence, we have
    $$\langle z, \frac{x-y}{\norm{x-y}}\rangle=\sum_{i=1}^d p_i z_i\sim \mathcal{N}(0,\sum_{i=1}^dp_i^2\sigma_i^2).$$
    Denote $\sigma^2=\sum_{i=1}^dp_i^2\sigma_i^2$, $\sigma_+^2=\max\{\sigma_1^2,\cdots,\sigma_d^2\}$ and $\sigma_-^2=\min\{\sigma_1^2,\cdots,\sigma_d^2\}$. Because $\sum_{i=1}^dp_i^2=1$, it is easy to know
    \begin{align}
        \langle z, \frac{x-y}{\norm{x-y}}\rangle\sim \mathcal{N}(0,\sigma^2),\quad\text{where } \sigma_-^2\leq\sigma^2\leq\sigma_+^2.
    \end{align}
    Hence, we have
    \begin{align*}
        &\mathbb{P}_{Z\sim\mu}(\norm{Z}\leq\norm{Z-(x-y)})\\
        =&\mathbb{P}_{Z\sim\mu}(\langle z, \frac{x-y}{\norm{x-y}}\rangle\leq\frac{\norm{x-y}}{2})\\
        =&\int_{-\infty}^{\frac{\norm{x-y}}{2}}\frac{1}{\sqrt{2\pi\sigma^2}}\exp(-\frac{u^2}{2\sigma^2})du.
    \end{align*}
    Similarly,
    \begin{align*}
        &\mathbb{P}_{Z\sim\mu}(\norm{Z}\geq\norm{Z-(x-y)})\\
        =&\mathbb{P}_{Z\sim\mu}(\langle z, \frac{x-y}{\norm{x-y}}\rangle\geq\frac{\norm{x-y}}{2})\\
        =&\int_{\frac{\norm{x-y}}{2}}^{+\infty}\frac{1}{\sqrt{2\pi\sigma^2}}\exp(-\frac{u^2}{2\sigma^2})du.
    \end{align*}
    Therefore,
    \begin{align}
        I_2&=2\int_{-\infty}^{\frac{\norm{x-y}}{2}}\frac{1}{\sqrt{2\pi\sigma^2}}\exp(-\frac{u^2}{2\sigma^2})du-2\int_{\frac{\norm{x-y}}{2}}^{+\infty}\frac{1}{\sqrt{2\pi\sigma^2}}\exp(-\frac{u^2}{2\sigma^2})du\notag\\
        &=2\int_{-\frac{\norm{x-y}}{2}}^{\frac{\norm{x-y}}{2}}\frac{1}{\sqrt{2\pi\sigma^2}}\exp(-\frac{u^2}{2\sigma^2})du\notag\\
        &\leq\frac{\sqrt{2}\norm{x-y}}{\sigma\sqrt{\pi}}\leq\frac{\sqrt{2}\norm{x-y}}{\sigma_-\sqrt{\pi}}\label{eq:I2_2}
    \end{align}
    In conclusion, combine formula (\ref{ieq:norm_I2_2}) and (\ref{eq:I2_2}) we have
    \begin{align*}
        \norm{\nabla l_\mu(x;\xi)-\nabla l_\mu(y;\xi)}
    \leq \alpha\frac{\sqrt{2}\norm{x-y}}{\sigma_-\sqrt{\pi}}\leq\frac{\alpha}{\sigma_-}\norm{x-y}.
    \end{align*}
    We finish proving that $l_\mu$ is $\frac{\alpha}{\sigma_-}$-smooth. Since covariance matrix $\Sigma$ for distribution $\mu$ is no longer a scalar matix and $\mu$ is not rotationally symmetric, the bound can no longer be achieved.
    % We are going to show the bound is tight. For any given $x,y$, define function $f$ as
    % $$l(\theta;\xi)=L_0\frac{\norm{x}^2+\norm{y}^2}{\norm{x-y}}\left|<\frac{x-y}{\norm{x}^2+\norm{y}^2},\theta>-\frac{1}{2}\right|,$$  
    % Uniform Lemma \ref{lma:1} and former proof, we know that
    % \begin{align}
    %     \norm{\nabla l_\mu(x;\xi)-\nabla l_\mu(y;\xi)}
    %     =&\alpha\int |\mu(z-x)-\mu(z-y)|dz\notag\\
    %     =&\frac{\alpha}{\sigma \sqrt{2\pi}}\int_{-\frac{\norm{x-y}}{2}}^{\frac{\norm{x-y}}{2}}\exp(-\frac{u^2}{2\sigma^2})d
    % \end{align}
    % Because
    % $$\frac{\alpha}{\sigma\sqrt{2\pi}}\exp(-\frac{\norm{x-y}^2}{8\sigma^2})\norm{x-y}
    % \leq\frac{\alpha}{\sigma \sqrt{2\pi}}\int_{-\frac{\norm{x-y}}{2}}^{\frac{\norm{x-y}}{2}}\exp(-\frac{u^2}{2\sigma^2})d\leq\frac{\alpha}{\sigma\sqrt{2\pi}}\norm{x-y}$$
    % Obviously, taking $x,y$ such that $\norm{x-y}\leq2\sqrt{2}\sigma\sqrt{\log(\frac{2\sigma_-}{\sigma_+})}$, 
    % $$\frac{\alpha\sigma_+}{2\sigma\sigma_-\sqrt{2\pi}}\norm{x-y}\leq\norm{\nabla l_\mu(x;\xi)-\nabla l_\mu(y;\xi)}\leq\frac{\alpha}{\sigma\sqrt{2\pi}}\norm{x-y},$$
    % combine with $\sigma_-^2\leq\sigma^2\leq\sigma_+^2$ we get
    % $$\frac{\alpha}{2\sigma_-\sqrt{2\pi}}\norm{x-y}\leq\norm{\nabla l_\mu(x;\xi)-\nabla l_\mu(y;\xi)}\leq\frac{\alpha}{\sigma_-\sqrt{2\pi}}\norm{x-y}.$$
    % Denote $c_1=\frac{1}{2\sqrt{2\pi}}$ and $c_2=\frac{1}{\sqrt{2\pi}}$, we have
    % $$c_1\frac{L_0}{\sigma_-}\leq\norm{\nabla l_\mu(x;\xi)-\nabla l_\mu(y;\xi)}\leq c_2\frac{\alpha}{\sigma_-}\norm{x-y}.$$
    % we could conclude the Lipschitz bound for $\nabla l_\mu$ cannot be improved by more than a constant factor.
    
    \item[iii)] By Jensen's inequality, for left hand side:
    \begin{align*}
        l_\mu(\theta;\xi)=\mathbb{E}_{Z\sim\mu}[l_o(\theta+Z;\xi)]\geq l_o(\theta+\mathbb{E}_{Z\sim\mu}[Z];\xi)=l_o(\theta;\xi).
    \end{align*}
    For the tightness proof, defining $l_o(\theta;\xi)=\frac{1}{2}v^T\theta$ for $v\in\mathbb{R}^d$ leads to $l_\mu=l_o$. 
    
    For right hand side:
     \begin{align*}
        l_\mu(\theta;\xi)=&\mathbb{E}_{Z\sim\mu}[l_o(\theta+Z;\xi)]\\
        \leq&l_o(\theta;\xi)+\alpha\mathbb{E}_{Z\sim\mu}[\norm{Z}]\quad\quad (\alpha\text{-Lipchitz continuous})\\
        \leq&l_o(\theta;\xi)+\alpha\sqrt{\mathbb{E}[\norm{Z}^2]}.
    \end{align*}
    Letting $C^TC=\Sigma$ and $V\sim \mathcal{N}(0,I)$, because $Z\sim \mathcal{N}(0,\Sigma)$, we have
    \begin{align*}
        \mathbb{E}[\norm{Z}^2]=\mathbb{E}[\norm{CV}^2]=\mathbb{E}[V^TC^TCV]=tr(C^TC\mathbb{E}[V^TV])=tr(\Sigma).
    \end{align*}
    Therefore,
    \begin{align*}
        l_\mu(\theta;\xi)=l_o(\theta;\xi)+\alpha\sqrt{tr(\Sigma)}=l_o(\theta;\xi)+\alpha\sqrt{\sum_{i=1}^d\sigma_i^2}.
    \end{align*}
    For the tightness proof, taking $l_o(\theta;\xi)=\alpha\norm{\theta}$. Since $l_\mu(\theta;\xi)\geq c \alpha \sqrt{tr(\Sigma)}$ for some constant $c$. Therefore, the bound cannot be improved by more than a constant factor.
\end{itemize}
\end{proof}

\begin{prop}\label{thm:gen2_2}
    Let $\mu$ be the $\mathcal{N}(0,\Sigma)$ distribution, where $\Sigma=diag(\sigma_1^2,\cdots,\sigma_d^2)\in\mathbb{R}^{d\times d}$ is diagonal. Denote $\sigma_-^2=\norm{\Sigma}_{\infty}=\min\{\sigma_1^2,\cdots,\sigma_d^2\}$. Assume loss function $l_o(\theta;\xi)\!:\!\mathbb{R}^d\!\rightarrow \!\mathbb{R}$ is $\alpha$-Lipschitz and $\beta$-smooth. The smoothed loss function $l_\mu$ is defined as (\ref{eq:f_mu}). Suppose we execute SGD and LPF-SGD for T steps with non-increasing learning rate $\eta_t\leq c/t$. Denote the generalization error bound (GE bound) of SGD and LPF-SGD as $\hat{\epsilon}_g^o$ and $\hat{\epsilon}_g^{\mu}$, respectively. Then the ratio of GE bound is
    \vspace{-0.05in}
    \begin{equation}
\rho=\frac{\hat{\epsilon}_g^{\mu}}{\hat{\epsilon}_g^o}=\frac{1-p}{1-\hat{p}}\left(\frac{2c\alpha}{T}\right)^{\hat{p}-p}=O(\frac{1}{T^{\hat{p}-p}}),
    \end{equation}
    \vspace{-0.15in}
    
    where $p=\frac{1}{\beta c+1}$, $\hat{p}=\frac{1}{\min\{\frac{\alpha}{\sigma},\beta\}c+1}$.
    
    Finally, the following two properties hold:
    \vspace{-0.1in}
    \begin{itemize}
        \item[i)] If  $\sigma_->\frac{\alpha}{\beta}$ and $T\gg2c\alpha^2\left(\frac{1-p}{1-\hat{p}}\right)^{\frac{1}{\hat{p}-p}}$, $\rho\ll 1$.
        \vspace{-0.05in}
        \item[ii)] If $\sigma_->\frac{\alpha}{\beta}$ and $T>2c\alpha^2e^{-p}$, increasing $\sigma_-$ leads to a smaller $\rho$.
    \end{itemize}
    \vspace{-0.1in}
\end{prop}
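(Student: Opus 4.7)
The plan is to combine three ingredients already established in the excerpt: the smoothness constants of $l_\mu$ provided by Theorem~\ref{thm:gau}, the uniform-stability bound for SGD stated as Theorem~\ref{thm:gen2} (Hardt et al.), and the stability-to-generalization conversion of Theorem~\ref{thm:gen1}. Because LPF-SGD is nothing more than SGD applied to the smoothed surrogate $l_\mu$ (the MC estimator in Algorithm~\ref{alg:lpf-sgd} is an unbiased estimator of $\nabla l_\mu$, which is all that Hardt-style stability analysis requires), we can reuse the SGD bound verbatim after substituting the appropriate smoothness constants for the smoothed objective.

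First, I would invoke Theorem~\ref{thm:gau} to record that $l_\mu$ is $\alpha$-Lipschitz and $\hat\beta$-smooth with $\hat\beta = \min\{\alpha/\sigma,\beta\}$. Then I would apply Theorem~\ref{thm:gen2} twice, to $l_o$ (with smoothness $\beta$) and to $l_\mu$ (with smoothness $\hat\beta$), yielding stability factors
\begin{equation*}
\epsilon_s^o \leq \tfrac{1+1/(\beta c)}{n-1}(2c\alpha^2)^{1/(\beta c+1)}T^{\beta c/(\beta c+1)},\quad \epsilon_s^\mu \leq \tfrac{1+1/(\hat\beta c)}{n-1}(2c\alpha^2)^{1/(\hat\beta c+1)}T^{\hat\beta c/(\hat\beta c+1)}.
\end{equation*}
Applying Theorem~\ref{thm:gen1} transports each stability factor to a generalization-error bound $\hat\epsilon_g^o$, $\hat\epsilon_g^\mu$. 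Introducing the shorthand $p=1/(\beta c+1)$ and $\hat p=1/(\hat\beta c+1)$ rewrites both bounds compactly as $\hat\epsilon_g^{\,\cdot} \propto (1-\cdot)^{-1}(2c\alpha^2)^{\cdot}T^{1-\cdot}$, and forming their quotient collapses directly to the stated ratio $\rho=\tfrac{1-p}{1-\hat p}(2c\alpha^2/T)^{\hat p-p}$.

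For the two qualitative consequences I would argue as follows. For property i), the assumption $\sigma>\alpha/\beta$ forces $\hat\beta=\alpha/\sigma<\beta$, hence $\hat p>p$; the factor $(2c\alpha^2/T)^{\hat p-p}$ therefore decays as a positive power of $1/T$, and solving $\rho<1$ for $T$ recovers the explicit threshold $T\gg 2c\alpha^2\bigl((1-p)/(1-\hat p)\bigr)^{1/(\hat p-p)}$, after which $\rho\ll 1$ follows. For property ii), I would reparametrize by $x=\hat p-p\ge 0$ and write $\rho(x)=\tfrac{1-p}{1-p-x}(2c\alpha^2/T)^{x}$; differentiating $\log\rho$ in $x$ gives $\tfrac{1}{1-p-x}-\log(T/(2c\alpha^2))$, which is negative once $T$ exceeds the stated threshold. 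Finally, since in the regime $\sigma>\alpha/\beta$ the map $\sigma\mapsto\hat\beta=\alpha/\sigma$ is strictly decreasing (hence $\sigma\mapsto\hat p$ is strictly increasing, hence $\sigma\mapsto x$ is strictly increasing), monotonicity of $\rho$ in $x$ transfers to monotonicity in $\sigma$, giving the claim.

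I do not expect any real obstacle here: the bulk of the argument is routine algebraic manipulation of the closed-form bounds, and the nontrivial analytic content already sits inside the cited theorems. The one subtle point worth flagging is that LPF-SGD does not literally receive exact gradients of $l_\mu$ but rather the $M$-sample MC estimator; I would address this by noting that the stability proof of Theorem~\ref{thm:gen2} relies only on the Lipschitz/smoothness constants of the objective and on the gradient oracle being unbiased, both of which still hold, so no modification of the bound is needed. The chief bookkeeping hazard is keeping the $(2c\alpha^2)$ versus $(2c\alpha)$ factors and the $p$, $\hat p$ exponents aligned consistently throughout; this is tedious but entirely mechanical.
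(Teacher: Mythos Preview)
Your overall strategy---feed the Lipschitz/smoothness constants of the smoothed loss into Hardt et al.'s stability bound (Theorem~\ref{thm:gen2}), convert via Theorem~\ref{thm:gen1}, and then take the ratio---is exactly what the paper does. The paper's proof of this proposition is in fact a one-line reduction to the scalar case (Theorem~\ref{prop:gen}).

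There is, however, one genuine gap. You invoke Theorem~\ref{thm:gau} to obtain $\hat\beta=\min\{\alpha/\sigma,\beta\}$, but Theorem~\ref{thm:gau} is stated and proved only for the isotropic kernel $\Sigma=\sigma^2 I$. The proposition you are proving concerns a general diagonal $\Sigma=\mathrm{diag}(\sigma_1^2,\dots,\sigma_d^2)$, and for that setting Theorem~\ref{thm:gau} does not apply. The paper handles this by first proving a separate non-scalar analogue, Theorem~\ref{thm:gau2}, whose part~ii) shows that $l_\mu$ is $\min\{\alpha/\sigma_-,\beta\}$-smooth with $\sigma_-^2=\min_i\sigma_i^2$ (the proof reruns the total-variation estimate and observes that $\langle Z,(x-y)/\|x-y\|\rangle$ is a one-dimensional Gaussian with variance between $\sigma_-^2$ and $\sigma_+^2$). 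Only after this is established does the rest of your argument go through, with $\sigma_-$ substituted for $\sigma$ everywhere---which is precisely why properties~i) and~ii) in the statement are phrased in terms of $\sigma_-$ rather than some undefined scalar $\sigma$. Once you make that correction, your sketch coincides with the paper's proof.
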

\begin{proof}
 By Theorem \ref{thm:gau2}, the smoothed loss function $l_\mu$ is $\alpha$-Lipschitz continuous and $\min\{\frac{\alpha}{\sigma_-},\beta\}$-smooth. This gives as equivalency to Theorem \ref{thm:gau} after substituting $\sigma_-$ for $\sigma$. Therefore, 
 proof of Theorem~\ref{thm:gen2_2} is exactly the same as that of Proposition~\ref{prop:gen} after performing this substitution and therefore will be omitted.

\end{proof}

%%%%%%%%%%%%%%%%%%%%%%%%%%%%%%%%%%%%%%%%%%%%%%%%%%%%%%%%%%%%

% \end{document}

\end{document}